\documentclass[letterpaper]{article}

\usepackage[draft]{aaai2026} 
\usepackage{times}
\usepackage{helvet}
\usepackage{courier}
\usepackage[hyphens]{url}
\usepackage{graphicx}
\usepackage{natbib}
\usepackage{caption}
\usepackage{algorithm}
\usepackage{amsmath,amssymb,amsthm}
\usepackage{textcomp}
\usepackage{paralist}
\usepackage{eurosym}
\usepackage{soul}
\usepackage[list=true]{subcaption}
\usepackage{arydshln}
\usepackage{multirow}
\usepackage{siunitx}
\usepackage{adjustbox}
\usepackage{array}
\usepackage{diagbox}
\usepackage{tikz}
\usetikzlibrary{arrows.meta,positioning,calc,fit,backgrounds,shapes.geometric,shadows}
\usepackage[capitalize]{cleveref}
\usepackage{algpseudocode}
\usepackage{makecell}

\providecommand{\todo}[1]{\textit{TODO}}

\usepackage{mathtools}
\usepackage{booktabs}
\usepackage{xcolor}
\usepackage{bm}
\usepackage{dsfont}

\newtheorem{theorem}{Theorem}

\newtheorem{corollary}{Corollary}

\newtheorem{proposition}{Proposition}
\theoremstyle{remark}

\newtheorem{assumption}{Assumption}
\theoremstyle{plain}

\usepackage{newfloat}
\usepackage{listings}
\DeclareCaptionStyle{ruled}{labelfont=normalfont,labelsep=colon,strut=off} 
\floatstyle{ruled}
\newfloat{listing}{tb}{lst}{}
\floatname{listing}{Listing}
\newcounter{checksubsection}
\newcounter{checkitem}[checksubsection]

\newcommand{\checksubsection}[1]{%
  \refstepcounter{checksubsection}%
  \paragraph{\arabic{checksubsection}. #1}%
  \setcounter{checkitem}{0}%
}

\newcommand{\checkitem}{%
  \refstepcounter{checkitem}%
  \item[\arabic{checksubsection}.\arabic{checkitem}.]%
}
\newcommand{\question}[2]{\normalcolor\checkitem #1 #2 \color{blue}}
\newcommand{\ifyespoints}[1]{\makebox[0pt][l]{\hspace{-15pt}\normalcolor #1}}

\title{CPDA: Class-Conditional Path Distribution Alignment for Unsupervised Time-Series Domain Adaptation}

\author {
    Felix Ott\textsuperscript{\rm 1},
    Christopher Mutschler\textsuperscript{\rm 1,2}
}
\affiliations {
    \textsuperscript{\rm 1}Fraunhofer Institute for Integrated Circuits IIS, 90411 Nürnberg, Germany\\
    \textsuperscript{\rm 2}Machine Learning and Positioning Systems Department, University of Technology Nürnberg (UTN), 90461 Nürnberg\\
    \{felix.ott, christopher.mutschler\}@iis.fraunhofer.de
}

\begin{document}

\maketitle

\begin{abstract}
Unsupervised time-series domain adaptation (DA) addresses the challenge of transferring a classifier from a labeled source domain to an unlabeled target domain under distribution shifts induced by different users, sensors, devices, acquisition conditions, or temporal dynamics. Existing methods typically mitigate this shift by aligning marginal feature distributions through adversarial training, optimal transport, or moment-based discrepancies. In this paper, we propose Class-Conditional Path Distribution Alignment (\textbf{CPDA}), a non-adversarial discrepancy-based framework that aligns source and target class-conditional latent path distributions rather than only global feature marginals. CPDA introduces a composite signature--spectral kernel that jointly captures pooled semantic features, temporal path structure, frequency-domain information, and low-rank path-signature dynamics, while using source labels and target soft pseudo-labels to perform class-preserving alignment. We further provide a theoretical analysis showing that CPDA defines a valid kernel discrepancy, admits existing moment-matching methods as restricted cases, and yields a class-conditional target-risk bound. Extensive experiments with CNN, ResNet18, and TCN backbones on 13 different time-series DA benchmarks demonstrate the effectiveness of CPDA against 30 discrepancy, adversarial, and pseudo-labeling baselines.
\end{abstract}


\section{Introduction}
\label{label_introduction}

Time-series classification models are increasingly deployed in diverse applications; however, the training and deployment data are collected under different conditions. Distribution shifts may arise from different users, sensors, sampling conditions, or temporal dynamics. This creates a central challenge for time-series DA: a model is trained with labels from a \textit{source} domain, while only unlabeled data are available from the \textit{target} domain. The objective is to learn representations that remain discriminative for the source task while becoming transferable to the target domain.

A common strategy in unsupervised DA is to reduce the discrepancy between source and target feature distributions. Existing methods address this problem through a wide range of alignment principles, including moment matching~\citep{borgwardt_gretton,long_zhu_mmd,zhang_zhang_lan,alipour_tahmoresnezhad,long_cao_wang_DAN}, covariance alignment~\citep{sun_feng_saenko,chen_fu_chen,cherian_sra,suvrit_sra,harandi_salzmann}, information-theoretic divergences~\citep{tzeng_hoffman_zhang,rahman_fookes,shu_bui_narui}, optimal transport~\citep{ott_acmmm}, and adversarial training~\citep{ganin_ustinova_ajakan,liu_xue,zhu_zhuang_wang,wilson_doppa_cook,long_cao_wang}.

Many discrepancy-based methods align source and target domains only at the level of marginal feature distributions, e.g., through domain-confusion losses, MMD-based adaptation, or covariance alignment~\citep{tzeng_hoffman_zhang,long_cao_wang_DAN,sun_feng_saenko}. This can be insufficient for time-series DA, as marginal alignment may ignore class structure and incorrectly match target samples with source samples from different classes~\citep{long_cao_wang,zhu_zhuang_wang}. Moreover, standard feature-level discrepancies often treat learned representations as unordered vectors, discarding temporal order, dynamic evolution, and frequency-domain structure. These limitations are critical for time-series, where discriminative information may lie in local transitions, periodic patterns, and class-specific latent trajectories rather than only in global feature statistics~\citep{he_queen_koker}. To address these limitations, we propose CPDA, a non-adversarial discrepancy-based method for time-series DA that uses \textit{source labels} and \textit{target soft pseudo-labels} to align class-conditional latent path distributions through a composite signature--spectral kernel capturing pooled, temporal, spectral, and path-signature information.

\textbf{Contributions.} The proposed CPDA method makes the following contributions. (1) We introduce CPDA, a class-conditional discrepancy for time-series DA that aligns ${P_s(Z_{1:T} | Y=c)}$ and ${P_t(Z_{1:T} | Y=c)}$ rather than only the marginal feature distributions, and combine it with target-smoothness regularization through VAT to stabilize pseudo-label-based alignment. (2) We propose a signature--spectral kernel that jointly captures latent temporal paths, local dynamics, frequency structure, and pooled semantic representations. (3) We derive a class-conditional target-risk bound showing that the target error is controlled by the source risk and the CPDA discrepancy. (4) We show that common discrepancy-based methods such as MMD, CORAL, DAN, and HoMM arise as restricted cases or lower-order projections of the proposed framework. (5) CPDA provides a non-adversarial, stable, and theoretically interpretable alternative to domain-discriminator-based alignment while directly addressing class mismatch and temporal structure.
\section{Related Work}
\label{sec:related_work}

A central line of unsupervised DA reduces source--target shift by minimizing explicit discrepancies between feature distributions. DDC encourages domain-invariant representations through a domain-confusion loss~\citep{tzeng_hoffman_zhang}, while MMD-based methods compare source and target distributions via RKHS mean embeddings~\citep{borgwardt_gretton}. DAN extends this principle by applying multi-kernel MMD to deep task-specific layers~\citep{long_cao_wang_DAN}. A second group aligns feature statistics directly: CORAL matches covariance matrices~\citep{sun_feng_saenko}, Jeffreys- and Stein-CORAL variants use geometry-aware covariance discrepancies~\citep{cherian_sra,suvrit_sra,harandi_salzmann}, MMCD jointly matches means and covariances~\citep{zhang_zhang_lan,alipour_tahmoresnezhad}, and HoMM extends alignment to higher-order moments~\citep{chen_fu_chen}. MMDA~\citep{rahman_fookes} combines multiple such discrepancy terms within a deep adaptation objective~\citep{rahman_fookes}. While these methods are stable and non-adversarial, they typically align marginal vector-valued features and therefore do not explicitly preserve class-conditional temporal structure.

In parallel, adversarial and decision-boundary methods learn transferable representations through domain discrimination or classifier disagreement. DANN uses gradient reversal~\citep{ganin_ustinova_ajakan}, CDAN conditions the discriminator on class predictions~\citep{long_cao_wang}, MCD maximizes classifier discrepancy~\citep{saito_watanabe}, and DIRT-T combines entropy minimization with virtual adversarial training~\citep{shu_bui_narui}. DSAN performs local MMD-based subdomain alignment~\citep{zhu_zhuang_wang}, while CoDATS and CoTMix adapt adversarial and contrastive learning to temporal representations~\citep{wilson_doppa_cook,eldele_ragab_cotmix}, and RAINCOAT combines temporal and frequency information under feature and label shifts~\citep{he_queen_koker}. AdaMatch uses distribution alignment and confidence-based pseudo-labeling~\citep{berthelot_roelofs}, OVANet addresses universal DA through one-vs-all classification~\citep{saito_saenko}, DANCE performs neighborhood-based clustering~\citep{saito_kim_dance}, and AdvSKM learns adversarial kernels~\citep{liu_xue}. These methods emphasize conditional and boundary-aware adaptation, but most do not explicitly align class-conditional latent paths. Optimal transport estimates sample-level alignment plans, efficiently approximated by Sinkhorn, but typically matches marginal empirical distributions without encoding class-conditional temporal structure.

CPDA builds on discrepancy-based DA but addresses two key limitations of prior work: marginal alignment and vector-level feature comparison. Instead of matching only $P_s(Z)$ and $P_t(Z)$, CPDA aligns class-conditional latent path distributions using source labels and target soft pseudo-labels. Moreover, its composite signature--spectral kernel captures pooled semantics, temporal path structure, frequency-domain information, and low-rank path-signature dynamics. Thus, MMD, CORAL, DAN, and HoMM can be viewed as restricted cases or lower-order projections of CPDA, while CPDA remains non-adversarial and can be combined with VAT to stabilize pseudo-label-alignment.
\section{Problem Statement}
\label{sec:problem_statement}

We consider unsupervised domain adaptation (UDA) for time-series classification.
Let $\mathcal{X} \subseteq \mathbb{R}^{C_{\mathrm{in}}\times L}$
denote the input space of multivariate time-series with $C_{\mathrm{in}}$ input channels and temporal length $L$.
The label space is $\mathcal{Y}=\{1,\dots,K\}$.
We are given a labeled source-domain sample $\mathcal{D}_s = \{(x_i^s,y_i^s)\}_{i=1}^{n_s} \sim P_s(X,Y)$, and an unlabeled target-domain sample
$\mathcal{D}_t = \{x_j^t\}_{j=1}^{n_t} \sim P_t(X)$.
The goal is to learn a classifier $
    h_{\theta,\phi}(x)
    =
    g_\phi(f_\theta(x))$,
where $f_\theta$ is a time-series feature extractor and $g_\phi$ is a classifier, such that the target risk
\begin{equation}
    R_t(h_{\theta,\phi})
    =
    \mathbb{E}_{(X,Y)\sim P_t}
    \big[
        \ell(h_{\theta,\phi}(X),Y)
    \big]
\end{equation}
is minimized, although target labels are not observed during training.

Most discrepancy-based DA methods align marginal feature distributions, $P_s(Z) \approx P_t(Z)$,
where $Z=f_\theta(X)$ denotes the learned representation.
However, for time-series classification this is often insufficient for three reasons:
(i) First, marginal alignment ignores class structure and may align samples from different classes.
(ii) Second, standard feature-level discrepancies treat the representation as an unordered vector and therefore discard temporal order.
(iii) Third, low-order moment matching, such as mean or covariance matching, may fail to capture non-Gaussian, dynamic, and frequency-dependent differences between source and target time-series.

To address these limitations, we propose \emph{Class-Conditional Path Distribution Alignment} (CPDA).
Instead of aligning only the marginal distributions of vector-valued features, CPDA aligns class-conditional distributions of latent time-series paths:
\begin{equation}
    P_s(Z_{1:T}|Y=c)
    \approx
    P_t(Z_{1:T}|Y=c),
    \quad c=1,\dots,K.
\end{equation}
The target-side class condition is estimated using soft pseudo-labels produced by the current classifier. An overview of notations is given in App.~\ref{app:notations}.

\section{Class-Conditional Path Distribution Alignment}
\label{sec:method}

\begin{figure*}[t]
\centering
\scriptsize
\begin{tikzpicture}[
    node distance=0.75cm and 1.0cm,
    block/.style={
        rectangle,
        draw,
        rounded corners,
        align=center,
        minimum width=1.2cm,
        minimum height=0.9cm,
        fill=blue!8
    },
    data/.style={
        rectangle,
        draw,
        rounded corners,
        align=center,
        minimum width=2.0cm,
        minimum height=0.9cm,
        fill=gray!12
    },
    kernel/.style={
        rectangle,
        draw,
        rounded corners,
        align=center,
        minimum width=1.4cm,
        minimum height=0.95cm,
        fill=green!10
    },
    loss/.style={
        rectangle,
        draw,
        rounded corners,
        align=center,
        minimum width=1.4cm,
        minimum height=0.95cm,
        fill=red!10
    },
    arrow/.style={
        -{Latex[length=2mm]},
        thick
    }
]

\node[data] (src) {Source mini-batch\\$\{(x_i^s,y_i^s)\}_{i=1}^{B_s}$};
\node[data, below=1.25cm of src] (trg) {Target mini-batch\\$\{x_j^t\}_{j=1}^{B_t}$};

\node[block, right=0.30cm of src] (fs) {Feature\\extractor\\$f_\theta$};
\node[block, right=0.30cm of trg] (ft) {Feature\\extractor\\$f_\theta$};

\node[block, right=0.30cm of fs] (zs) {Latent source paths\\$Z_i^s=(z_{i,1}^s,\ldots,z_{i,T}^s)$};
\node[block, right=0.30cm of ft] (zt) {Latent target paths\\$Z_j^t=(z_{j,1}^t,\ldots,z_{j,T}^t)$};

\node[block, right=0.40cm of zs] (cs) {Classifier\\$g_\phi$};
\node[block, right=0.40cm of zt] (ct) {Classifier\\$g_\phi$};

\node[block, above=0.20cm of cs] (ys) {Source labels\\$y_i^s$};
\node[block, right=0.30cm of ct] (qt) {Soft pseudo-\\labels $q_j(c)$};

\node[kernel, right=0.5cm of cs, yshift=-0.60cm] (feat) {
Signature--Spectral\\Path Features\\
$\psi_{\rm sig},\psi_{\rm spec},$\\$\psi_{\rm path},\psi_{\rm pool}$
};

\node[kernel, right=0.30cm of feat] (kernels) {
Composite kernel\\matrices\\
$K_{ss},K_{tt},K_{st}$
};

\node[loss, right=0.30cm of kernels] (mmd) {
Class-wise\\weighted\\$\widehat{\mathrm{MMD}}_c^2$
};

\node[loss, right=0.30cm of mmd] (cpda) {
CPDA discrepancy:\\
$\widehat{\mathcal{D}}_{\rm CPDA}^2$\\
$=
\sum_c \widehat{\pi}_t(c)\widehat{\mathrm{MMD}}_c^2$
};

\node[loss, below=0.55cm of cpda] (total) {
Training objective\\
$\mathcal{L} = \mathcal{L}_{\rm cls} + \lambda\widehat{\mathcal{D}}_{\rm CPDA}^2$\\
$+ \eta\mathcal{L}_{\rm IM} + \gamma\mathcal{L}_{\mathrm{VAT}}$
};

\draw[arrow] (src) -- (fs);
\draw[arrow] (fs) -- (zs);
\draw[arrow] (zs) -- (cs);

\draw[arrow] (trg) -- (ft);
\draw[arrow] (ft) -- (zt);
\draw[arrow] (zt) -- (ct);
\draw[arrow] (ct) -- (qt);

\draw[arrow] (feat) -- (kernels);
\draw[arrow] (kernels) -- (mmd);
\draw[arrow] (mmd) -- (cpda);
\draw[arrow] (cpda) -- (total);

\draw[arrow] (zs.east) -- ++(0.15,0) |- (feat.west);
\draw[arrow] (zt.east) -- ++(0.15,0) |- (feat.west);

\coordinate (upperlane) at ($(mmd.north)+(0,0.65)$);
\coordinate (lowerlane) at ($(mmd.south)+(0,-0.65)$);

\draw[arrow] (src.north) -- ++(0,0) |- (ys.west);

\draw[arrow] (ys.east) -- ++(5.636,0) |- (upperlane) -| (mmd.north);
\draw[arrow] (qt.east) -- ++(0.45,0) |- (lowerlane) -| (mmd.south);

\coordinate (losslane) at ($(total.west)+(-0.75,0)$);

\draw[arrow] (cs.south) -- ++(0,-0.96) -| (losslane) -- (total.west);
\draw[arrow] (qt.east) -- ++(0.45,0) |- ($(total.west)+(0,-0.20)$);

\node[
    draw,
    dashed,
    rounded corners,
    fit=(src)(fs)(zs)(cs)(ys),
    inner sep=0.09cm
] (sourcebox) {};

\node[
    draw,
    dashed,
    rounded corners,
    fit=(trg)(ft)(zt)(ct)(qt),
    inner sep=0.09cm
] (targetbox) {};

\coordinate (branchlabelx) at (zs.center);

\node[
    font=\bfseries,
    anchor=north,
    fill=white,
    inner sep=1pt,
    yshift=-0.5mm
] at (sourcebox.south -| branchlabelx)
{Source branch};

\node[
    font=\bfseries,
    anchor=south,
    fill=white,
    inner sep=1pt,
    yshift=0.5mm
] at (targetbox.north -| branchlabelx)
{Target branch};
\end{tikzpicture}
\caption{\textbf{Overview of the proposed CPDA methodology.} Source and target time-series are mapped to latent temporal paths by a shared feature extractor. CPDA constructs pooled, temporal-path, spectral, and signature-based representations from these latent paths and evaluates a composite kernel. Source labels and target soft pseudo-labels provide class-conditional weights.}
\label{fig:cpda_overview}
\end{figure*}

Figure~\ref{fig:cpda_overview} illustrates the CPDA training pipeline. The method uses a shared feature extractor to map source and target time-series into latent temporal paths. These paths are transformed into complementary pooled, temporal, spectral, and signature-based representations, which define the composite CPDA kernel. Source labels and target soft pseudo-labels provide class-conditional weights for computing class-wise MMD discrepancies. The resulting CPDA discrepancy is added to the source classification loss and the optional target information maximization term to update the feature extractor and classifier end-to-end. We provide additional theoretical details in App.~\ref{app:theory}.

\subsection{Latent Path Representation}

Let the feature extractor map an input time-series $x$ to a latent temporal representation $Z_\theta(x) = (z_1,\dots,z_T), z_t\in\mathbb{R}^{d}$.
In convolutional architectures, the flattened feature vector can be reshaped into a latent path
$Z_\theta(x)\in\mathbb{R}^{T\times d}$,
where $T$ corresponds to the temporal feature length and $d$ to the number of latent channels.
For backbones that output a single vector, CPDA reduces to a degenerate one-step path, while still preserving its class-conditional distributional interpretation.

To encode temporal order and dynamics, we define the augmented latent path
$A_\theta(x)_t
    =
    \big[
        \frac{t}{T},
        z_t,
        \Delta z_t
    \big], \Delta z_t = z_t-z_{t-1}$,
with $\Delta z_1=0$.
The explicit time coordinate prevents invariance to arbitrary temporal permutations, and the increment term captures local dynamics.

\subsection{Signature--Spectral Path Features}

CPDA compares time-series using a composite kernel defined on several complementary path representations.

\paragraph{Pooled Feature.}
The pooled latent representation is
\begin{equation}
    \psi_{\mathrm{pool}}(x)
    =
    \frac{1}{T}
    \sum_{t=1}^{T} z_t.
\end{equation}

\paragraph{Path Feature.}
The flattened augmented path feature is
\begin{equation}
    \psi_{\mathrm{path}}(x)
    =
    \operatorname{vec}
    \big(
        A_\theta(x)_1,\dots,A_\theta(x)_T
    \big).
\end{equation}
This term preserves temporal ordering at the latent-feature level~\citep{kiraly_oberhauser}.

\paragraph{Spectral Feature.}
The frequency-domain representation is defined by applying a discrete Fourier transform along the temporal axis:
\begin{equation}
    \psi_{\mathrm{spec}}(x)
    =
    \operatorname{vec}
    \big(
        \log
        \big(
            1+
            \big|
                \mathcal{F}_t
                \big[
                    Z_\theta(x)
                \big]
            \big|^2
        \big)
    \big),
\end{equation}
where $\mathcal{F}_t[\cdot]$ denotes the Fourier transform along the time dimension~\citep{alan_oppenheim}. This term captures periodic and oscillatory structure that may be stable across domains.

\paragraph{Low-Rank Path-Signature Feature.}
Let
$\widetilde{Z}_\theta(x)_t
    =
    \big[
        \frac{t}{T},
        z_t
    \big]
    \in \mathbb{R}^{d+1}.
$
Define increments
\begin{equation}
    u_t
    =
    P
    \big(
        \widetilde{Z}_\theta(x)_{t+1}
        -
        \widetilde{Z}_\theta(x)_t
    \big),
    \quad
    t=1,\dots,T-1,
\end{equation}
where $P\in\mathbb{R}^{r\times(d+1)}$ is a fixed random projection matrix.
The first-order signature feature is
\begin{equation}
    S_1(x)
    =
    \sum_{t=1}^{T-1} u_t,
\end{equation}
and the ordered cross-increment interaction feature is
\begin{equation}
    S_2(x)
    =
    \sum_{1\leq i<j\leq T-1}
    u_i \otimes u_j.
\end{equation}
The resulting low-rank truncated signature representation is
\begin{equation}
    \psi_{\mathrm{sig}}(x)
    =
    \big[
        S_1(x),
        \operatorname{vec}(S_2(x))
    \big].
\end{equation}
Higher-order signature terms may also be used, but the second-order version provides a favorable balance between expressiveness and computational cost~\citep{boedihardjo_geng}.

\subsection{Composite Signature--Spectral Kernel}

For each representation
\begin{equation}
    a\in
    \{
        \mathrm{pool},
        \mathrm{path},
        \mathrm{spec},
        \mathrm{sig}
    \},
\end{equation}
we define a Gaussian kernel
\begin{equation}
    k_a(x,x')
    =
    \exp
    \Big(
        -
        \frac{
            \big\|
                \overline{\psi}_a(x)
                -
                \overline{\psi}_a(x')
            \big\|_2^2
        }{
            \sigma_a^2
        }
    \Big),
\end{equation}
where $\overline{\psi}_a(\cdot)$ denotes the normalized representation.
In practice, a multi-kernel Gaussian form is used~\citep{long_cao_wang_DAN}:
\begin{equation}
    k_a(x,x')
    =
    \frac{1}{M}
    \sum_{m=1}^{M}
    \exp
    \Big(
        -
        \frac{
            \big\|
                \overline{\psi}_a(x)
                -
                \overline{\psi}_a(x')
            \big\|_2^2
        }{
            \sigma_{a,m}^2
        }
    \Big).
\end{equation}
The full CPDA kernel is the nonnegative weighted sum
\begin{equation}
\begin{split}
    k_{\mathrm{CPDA}}(x,x')
    =\,
    &\alpha_{\mathrm{sig}} k_{\mathrm{sig}}(x,x')
    +
    \alpha_{\mathrm{spec}} k_{\mathrm{spec}}(x,x')\\
    +
    &\alpha_{\mathrm{path}} k_{\mathrm{path}}(x,x')
    +
    \alpha_{\mathrm{pool}} k_{\mathrm{pool}}(x,x'),
\end{split}
\end{equation}
where $\alpha_{\mathrm{sig}},
    \alpha_{\mathrm{spec}},
    \alpha_{\mathrm{path}},
    \alpha_{\mathrm{pool}}
    \geq 0$.
The corresponding reproducing kernel Hilbert space (RKHS) is denoted by $\mathcal{H}_{\mathrm{CPDA}}$.

\subsection{Class-Conditional Signature--Spectral MMD}

Let
\begin{equation}
    q_{\theta,\phi}(c|x)
    =
    \operatorname{softmax}
    \big(
        g_\phi(f_\theta(x))
    \big)_c
\end{equation}
denote the classifier posterior probability for class $c$.
For the source domain, class membership is known through $y_i^s$.
For the target domain, class membership is approximated using the soft pseudo-labels $q_{\theta,\phi}(c|x_j^t)$~\citep{lee_pseudo_labels}. For each class $c$, define source and target weights
\begin{equation}
    w_{i,c}^s
    =
    \mathbb{I}[y_i^s=c],
    \qquad
    w_{j,c}^t
    =
    q_{\theta,\phi}(c|x_j^t).
\end{equation}
The normalized class weights are
\begin{equation}
    \bar{w}_{i,c}^s
    =
    \frac{
        w_{i,c}^s
    }{
        \sum_{i'=1}^{n_s} w_{i',c}^s
    },
    \qquad
    \bar{w}_{j,c}^t
    =
    \frac{
        w_{j,c}^t
    }{
        \sum_{j'=1}^{n_t} w_{j',c}^t
    }.
\end{equation}
The empirical class-conditional mean embeddings~\citep{gretton2012kernel,smola_gretton} are
\begin{equation}
\begin{split}
    \widehat{\mu}_{s,c}
    =
    \sum_{i=1}^{n_s}
    \bar{w}_{i,c}^s
    k_{\mathrm{CPDA}}(x_i^s,\cdot),\,\,\widehat{\mu}_{t,c}
    =
    \sum_{j=1}^{n_t}
    \bar{w}_{j,c}^t
    k_{\mathrm{CPDA}}(x_j^t,\cdot).
\end{split}
\end{equation}
The class-conditional CPDA discrepancy is then
\begin{equation}
    \widehat{\mathcal{D}}_{\mathrm{CPDA}}^2
    =
    \sum_{c=1}^{K}
    \widehat{\pi}_t(c)
    \big\|
        \widehat{\mu}_{s,c}
        -
        \widehat{\mu}_{t,c}
    \big\|_{\mathcal{H}_{\mathrm{CPDA}}}^2,
\end{equation}
where
\begin{equation}
    \widehat{\pi}_t(c)
    =
    \frac{1}{n_t}
    \sum_{j=1}^{n_t}
    q_{\theta,\phi}(c|x_j^t)
\end{equation}
is the estimated target class prior.

Equivalently, using kernel matrices, the class-wise empirical discrepancy is
\begin{equation}
\begin{aligned}
    \widehat{\mathrm{MMD}}_{c}^{2}
    &=
    \sum_{i,i'=1}^{n_s}
    \bar{w}_{i,c}^{s}
    \bar{w}_{i',c}^{s}
    k_{\mathrm{CPDA}}(x_i^s,x_{i'}^s)
    \\
    &+
    \sum_{j,j'=1}^{n_t}
    \bar{w}_{j,c}^{t}
    \bar{w}_{j',c}^{t}
    k_{\mathrm{CPDA}}(x_j^t,x_{j'}^t)
    \\
    &-
    2
    \sum_{i=1}^{n_s}
    \sum_{j=1}^{n_t}
    \bar{w}_{i,c}^{s}
    \bar{w}_{j,c}^{t}
    k_{\mathrm{CPDA}}(x_i^s,x_j^t).
\end{aligned}
\end{equation}
Thus,
\begin{equation}
    \widehat{\mathcal{D}}_{\mathrm{CPDA}}^2
    =
    \sum_{c=1}^{K}
    \widehat{\pi}_t(c)
    \widehat{\mathrm{MMD}}_{c}^{2}.
\end{equation}

\subsection{Training Objective}

The source classification loss is
\begin{equation}
    \mathcal{L}_{\mathrm{cls}}
    =
    \frac{1}{n_s}
    \sum_{i=1}^{n_s}
    \ell_\text{CE}
    \Big(
        g_\phi\big(f_\theta(x_i^s)\big),
        y_i^s
    \Big).
\end{equation}
To stabilize target pseudo-labeling and avoid degenerate solutions, we optionally use a target information maximization regularizer:
\begin{equation}
    \mathcal{L}_{\mathrm{IM}}
    =
    \frac{1}{n_t}
    \sum_{j=1}^{n_t}
    H
    \Big(
        q_{\theta,\phi}(\cdot|x_j^t)
    \Big)
    -
    H
    \Big(
        \frac{1}{n_t}
        \sum_{j=1}^{n_t}
        q_{\theta,\phi}(\cdot|x_j^t)
    \Big),
\end{equation}
where $H(\cdot)$ denotes Shannon entropy.
The first term encourages confident target predictions, while the second term discourages class collapse. The complete CPDA objective is
\begin{equation}
    \boxed{
    \mathcal{L}_{\mathrm{CPDA}}
    =
    \mathcal{L}_{\mathrm{cls}}
    +
    \lambda
    \widehat{\mathcal{D}}_{\mathrm{CPDA}}^2
    +
    \eta
    \mathcal{L}_{\mathrm{IM}}
    +
    \gamma
    \mathcal{L}_{\mathrm{VAT}}
    }
\end{equation}
with hyperparameters $\lambda,\eta, \gamma \geq 0$.

While we provide the computational complexity in App.~\ref{app:complexity}, we give details on the CPDA algorithm in App.~\ref{app:algorithm}.

\section{Theoretical Analysis}
\label{sec:theory}

\begin{proposition}[Validity of the CPDA kernel]
\label{prop:valid_kernel}
Assume that
\begin{equation}
    \alpha_{\mathrm{sig}},
    \alpha_{\mathrm{spec}},
    \alpha_{\mathrm{path}},
    \alpha_{\mathrm{pool}}
    \geq 0.
\end{equation}
Then $k_{\mathrm{CPDA}}$ is a positive semidefinite kernel.
\end{proposition}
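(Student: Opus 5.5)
The plan is to build $k_{\mathrm{CPDA}}$ out of a few closure properties of positive semidefinite kernels, so that each step reduces to a standard fact.

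\textbf{Step 1: Gaussian kernel on a Euclidean space.} For any $\sigma>0$, the function $\kappa_\sigma(u,v)=\exp(-\|u-v\|_2^2/\sigma^2)$ is positive semidefinite on $\mathbb{R}^m$. I would invoke this as the classical result. If a self-contained argument is wanted, factor
\[
\kappa_\sigma(u,v)=e^{-\|u\|^2/\sigma^2}\,e^{2\langle u,v\rangle/\sigma^2}\,e^{-\|v\|^2/\sigma^2}.
\]
The middle factor is the power series $\sum_n (2/\sigma^2)^n\langle u,v\rangle^n/n!$ with nonnegative coefficients. Each $\langle u,v\rangle^n$ is PSD by the Schur product theorem, and a pointwise limit of PSD kernels is PSD. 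Multiplying by $f(u)f(v)$ with $f(u)=e^{-\|u\|^2/\sigma^2}$ preserves PSD-ness.

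\textbf{Step 2: pull back to the input space.} For each $a\in\{\mathrm{pool},\mathrm{path},\mathrm{spec},\mathrm{sig}\}$, the normalized representation $\overline{\psi}_a$ is a deterministic map from $\mathcal{X}$ into a Euclidean space. This holds for fixed $\theta$ and a fixed random projection $P$.

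1. For any finite set $x_1,\dots,x_n$ and coefficients $c\in\mathbb{R}^n$, we have $\sum_{i,j}c_ic_j k_a(x_i,x_j)=\sum_{i,j}c_ic_j\kappa_\sigma(\overline{\psi}_a(x_i),\overline{\psi}_a(x_j))\ge 0$. This follows from Step 1 applied to the points $\overline{\psi}_a(x_i)$.
2. Repeated points $\overline{\psi}_a(x_i)=\overline{\psi}_a(x_j)$ cause no issue, because the Gram matrix of PSD kernels is PSD for arbitrary, not necessarily distinct, points.
3. Symmetry is immediate.
4. The multi-kernel version is an average $\frac1M\sum_m$ of such kernels with bandwidths $\sigma_{a,m}>0$, so it is PSD by Step 3 below.

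\textbf{Step 3: conic combination.} Gram matrices satisfy $c^\top(\sum_a\alpha_a K_a)c=\sum_a\alpha_a\, c^\top K_a c\ge0$ whenever $\alpha_a\ge0$. Hence $k_{\mathrm{CPDA}}$ is PSD, and by Moore--Aronszajn the RKHS $\mathcal{H}_{\mathrm{CPDA}}$ is well defined.

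\textbf{Main obstacle.} There is no real mathematical obstacle. The only point needing care is making precise that $\theta$, $P$ and the normalization are held fixed while the kernel is evaluated. Otherwise $k_a$ would not be a single function on $\mathcal{X}\times\mathcal{X}$.

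A second subtlety arises if the normalization is batch-dependent, such as standardization with batch statistics. In that case I would state the result conditionally on the normalization map. A batch-dependent kernel is still PSD on each fixed batch, since the Gram matrix argument above applies to every finite point set once the map is fixed.
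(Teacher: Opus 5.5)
Your proof is correct and follows the same route as the paper: each Gaussian kernel on a normalized representation is positive semidefinite, and a nonnegative weighted sum of PSD kernels is PSD. Your extra details go beyond what the paper writes but are consistent with it: the explicit pullback through $\overline{\psi}_a$, the multi-kernel average, and holding $\theta$, $P$, the normalization (and, in practice, the median-heuristic bandwidths) fixed. This matches the paper's own remark in the implementation appendix that the theory applies to a fixed feature extractor, projection, and set of bandwidths.
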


\begin{proof}
For each representation $\psi_a$, the Gaussian kernel
\begin{equation}
    k_a(x,x')
    =
    \exp
    \Big(
        -
        \frac{
            \|\overline{\psi}_a(x)-\overline{\psi}_a(x')\|_2^2
        }{
            \sigma_a^2
        }
    \Big)
\end{equation}
is positive semidefinite.
A nonnegative weighted sum of positive semidefinite kernels is again positive semidefinite.
Therefore, $
    k_{\mathrm{CPDA}}
    = \sum_a \alpha_a k_a $ is positive semidefinite.
\end{proof}

\begin{proposition}[CPDA as an integral probability metric]
\label{prop:ipm}
Let $\mathcal{H}_{\mathrm{CPDA}}$ be the RKHS induced by $k_{\mathrm{CPDA}}$.
For any class $c$, the population class-conditional CPDA discrepancy satisfies
\begin{equation}
\begin{split}
    &\mathrm{MMD}_{\mathrm{CPDA}}
    (P_s^c,P_t^c)
    =\\
    &\sup_{\|f\|_{\mathcal{H}_{\mathrm{CPDA}}}\leq 1}
    \big|
        \mathbb{E}_{X\sim P_s^c} f(X)
        -
        \mathbb{E}_{X\sim P_t^c} f(X)
    \big|,
\end{split}
\end{equation}
where $P_s^c = P_s(X|Y=c)$ and $P_t^c = P_t(X|Y=c)$.
\end{proposition}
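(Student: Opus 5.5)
The plan is the standard mean-embedding argument for MMD, specialized to $k_{\mathrm{CPDA}}$. I would first fix the definition being characterized: $\mathrm{MMD}_{\mathrm{CPDA}}(P_s^c,P_t^c) := \|\mu_{s,c}-\mu_{t,c}\|_{\mathcal{H}_{\mathrm{CPDA}}}$. Here $\mu_{s,c}=\mathbb{E}_{X\sim P_s^c}[k_{\mathrm{CPDA}}(X,\cdot)]$ and $\mu_{t,c}=\mathbb{E}_{X\sim P_t^c}[k_{\mathrm{CPDA}}(X,\cdot)]$ are the population analogues of $\widehat{\mu}_{s,c},\widehat{\mu}_{t,c}$. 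By Proposition~\ref{prop:valid_kernel}, $k_{\mathrm{CPDA}}$ is positive semidefinite, so the RKHS $\mathcal{H}_{\mathrm{CPDA}}$ exists (Moore--Aronszajn) and has the reproducing property $f(x)=\langle f,k_{\mathrm{CPDA}}(x,\cdot)\rangle$.

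The first step, which I expect to be the only genuinely non-trivial one, is to show that the mean embeddings are well-defined elements of $\mathcal{H}_{\mathrm{CPDA}}$. Each $k_a$ is an average of Gaussian kernels, so $0<k_a\le 1$. Hence
\[
k_{\mathrm{CPDA}}(x,x)=\sum_a\alpha_a k_a(x,x)=\sum_a\alpha_a<\infty,
\]
and
\[
\|k_{\mathrm{CPDA}}(x,\cdot)\|_{\mathcal{H}_{\mathrm{CPDA}}}=\sqrt{k_{\mathrm{CPDA}}(x,x)}
\]
is uniformly bounded. This gives Bochner integrability of $x\mapsto k_{\mathrm{CPDA}}(x,\cdot)$ under any probability measure, assuming measurability of the feature maps $\overline{\psi}_a$, which holds since they are continuous compositions of the network, the DFT, and normalization. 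I would state this measurability as a standing assumption.

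Next, for every $f\in\mathcal{H}_{\mathrm{CPDA}}$, I would show
\[
\mathbb{E}_{X\sim P_s^c}f(X)=\langle f,\mu_{s,c}\rangle_{\mathcal{H}_{\mathrm{CPDA}}}.
\]
The argument is to exchange the bounded linear functional $\langle f,\cdot\rangle$ with the Bochner integral and apply the reproducing property. The same identity holds for $P_t^c$. Consequently,
\[
\mathbb{E}_{P_s^c}f-\mathbb{E}_{P_t^c}f=\langle f,\mu_{s,c}-\mu_{t,c}\rangle .
\]

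Finally, I would take the supremum over the unit ball. By Cauchy--Schwarz, $|\langle f,\mu_{s,c}-\mu_{t,c}\rangle|\le\|\mu_{s,c}-\mu_{t,c}\|$ whenever $\|f\|\le1$. Equality is attained at $f=(\mu_{s,c}-\mu_{t,c})/\|\mu_{s,c}-\mu_{t,c}\|$, and the degenerate case $\mu_{s,c}=\mu_{t,c}$ is trivial since both sides are then $0$. Together these give the claimed IPM identity.

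As a remark, the empirical quantity $\widehat{\mathrm{MMD}}_c^2$ is the same expression with the weighted empirical measures $\sum_i\bar w^s_{i,c}\delta_{x_i^s}$ and $\sum_j\bar w^t_{j,c}\delta_{x_j^t}$ in place of $P_s^c,P_t^c$. Expanding the squared norm via the reproducing property then reproduces the kernel-matrix formula stated earlier.
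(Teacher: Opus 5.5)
Your proposal is correct and follows the same route as the paper's proof. Via the reproducing property you write $\mathbb{E}_{P_s^c}f-\mathbb{E}_{P_t^c}f=\langle f,\mu_s^c-\mu_t^c\rangle_{\mathcal{H}_{\mathrm{CPDA}}}$, then take the supremum over the unit ball with Cauchy--Schwarz; beyond the paper, you also justify that the mean embeddings exist because $k_{\mathrm{CPDA}}(x,x)=\sum_a\alpha_a$ is bounded, and that the supremum is attained at $(\mu_s^c-\mu_t^c)/\|\mu_s^c-\mu_t^c\|_{\mathcal{H}_{\mathrm{CPDA}}}$, whereas Cauchy--Schwarz alone gives only the upper bound.
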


\begin{proof}
By the reproducing property, the kernel mean embeddings are
\begin{equation}
    \mu_s^c
    =
    \mathbb{E}_{X\sim P_s^c}
    k_{\mathrm{CPDA}}(X,\cdot),
    \quad
    \mu_t^c
    =
    \mathbb{E}_{X\sim P_t^c}
    k_{\mathrm{CPDA}}(X,\cdot).
\end{equation}
For any $f\in\mathcal{H}_{\mathrm{CPDA}}$,
\begin{equation}
    \mathbb{E}_{P_s^c} f(X)
    -
    \mathbb{E}_{P_t^c} f(X)
    =
    \langle f,\mu_s^c-\mu_t^c\rangle_{\mathcal{H}_{\mathrm{CPDA}}}.
\end{equation}
Taking the supremum over the unit ball and applying Cauchy--Schwarz gives
\begin{equation}
    \sup_{\|f\|_{\mathcal{H}_{\mathrm{CPDA}}}\leq 1}
    \big|
        \langle f,\mu_s^c-\mu_t^c\rangle_{\mathcal{H}_{\mathrm{CPDA}}}
    \big|
    =
    \|\mu_s^c-\mu_t^c\|_{\mathcal{H}_{\mathrm{CPDA}}}.
\end{equation}
This is exactly the MMD induced by $k_{\mathrm{CPDA}}$.
\end{proof}

\begin{theorem}[Class-conditional target-risk bound]
\label{thm:target_risk_bound}
Let
\begin{equation}
    R_t(h)
    =
    \sum_{c=1}^{K}
    \pi_t(c)
    \mathbb{E}_{X\sim P_t^c}
    \ell \big(h(X), c \big)
\end{equation}
be the target risk, where $\pi_t(c)=P_t(Y=c)$.
Assume that for each class $c$, the function $x \mapsto \ell \big(h(x), c \big)$ belongs to $\mathcal{H}_{\mathrm{CPDA}}$ and satisfies $\|\ell(h(\cdot),c)\|_{\mathcal{H}_{\mathrm{CPDA}}} \leq B$. Then
\begin{equation}
    R_t(h)
    \leq
    R_s^{\pi_t}(h)
    +
    B
    \sum_{c=1}^{K}
    \pi_t(c)
    \mathrm{MMD}_{\mathrm{CPDA}}
    (P_s^c,P_t^c),
\end{equation}
where
\begin{equation}
    R_s^{\pi_t}(h)
    =
    \sum_{c=1}^{K}
    \pi_t(c)
    \mathbb{E}_{X\sim P_s^c}
    \ell \big(h(X), c \big)
\end{equation}
is the source class-conditional risk reweighted by target priors.
\end{theorem}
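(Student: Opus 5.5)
The plan is a class-by-class comparison of target and source expected losses, using the integral-probability-metric characterization of Proposition~\ref{prop:ipm}, followed by summation with the target priors as weights. The first step is to write the difference $R_t(h)-R_s^{\pi_t}(h)$ as a single sum over classes:
\begin{equation*}
R_t(h)-R_s^{\pi_t}(h)=\sum_{c=1}^{K}\pi_t(c)\Big(\mathbb{E}_{X\sim P_t^c}\ell(h(X),c)-\mathbb{E}_{X\sim P_s^c}\ell(h(X),c)\Big).
\end{equation*}
This holds directly from the definitions of $R_t(h)$ and $R_s^{\pi_t}(h)$, since both use the same weights $\pi_t(c)$. Choosing $\pi_t$ as the reweighting in $R_s^{\pi_t}$ is exactly what eliminates any label-shift term.

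Next, fix a class $c$ and set $f_c(x)=\ell(h(x),c)$. By assumption $f_c\in\mathcal{H}_{\mathrm{CPDA}}$ with $\|f_c\|_{\mathcal{H}_{\mathrm{CPDA}}}\le B$.

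\begin{itemize}
\item If $B=0$, then $f_c=0$ and the class-$c$ term vanishes.
\item Otherwise, $f_c/B$ lies in the unit ball of $\mathcal{H}_{\mathrm{CPDA}}$. Proposition~\ref{prop:ipm} then gives
\begin{equation*}
\big|\mathbb{E}_{P_t^c}f_c-\mathbb{E}_{P_s^c}f_c\big|\le B\,\mathrm{MMD}_{\mathrm{CPDA}}(P_s^c,P_t^c).
\end{equation*}
\end{itemize}

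Equivalently, the reproducing property gives $\mathbb{E}_{P^c}f_c=\langle f_c,\mu^c\rangle$, and Cauchy--Schwarz bounds the difference by $\|f_c\|\,\|\mu_s^c-\mu_t^c\|$.

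Finally, since $\pi_t(c)\ge 0$, I bound each summand by its absolute value, apply the per-class bound, and sum over $c$. Rearranging gives the claimed inequality.

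The only step needing care is justifying that the mean embeddings exist and that $\mathbb{E}f=\langle f,\mu\rangle$ holds, i.e., Bochner integrability. This is immediate here: each Gaussian component satisfies $k_a(x,x)=1$, so $k_{\mathrm{CPDA}}(x,x)=\sum_a\alpha_a$ is bounded, and $\|k_{\mathrm{CPDA}}(x,\cdot)\|$ is uniformly bounded. With this noted, the argument is routine, and I expect no substantive obstacle beyond stating the integrability condition that Proposition~\ref{prop:ipm} implicitly uses.
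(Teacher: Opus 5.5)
Your proposal is correct and follows the paper's proof. You decompose $R_t(h)-R_s^{\pi_t}(h)$ class-wise with the shared weights $\pi_t(c)$, bound each class term by $B\,\mathrm{MMD}_{\mathrm{CPDA}}(P_s^c,P_t^c)$ via Proposition~\ref{prop:ipm} (Cauchy--Schwarz on the mean embeddings), and sum using $\pi_t(c)\ge 0$. Your remark that $k_{\mathrm{CPDA}}(x,x)=\sum_a\alpha_a$ is bounded, because each Gaussian component has $k_a(x,x)=1$, usefully makes explicit the integrability of the mean embeddings, which the paper's proof leaves implicit.
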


\begin{proof}
We decompose the target risk class-wise:
\begin{equation}
    R_t(h)
    =
    \sum_{c=1}^{K}
    \pi_t(c)
    \mathbb{E}_{X\sim P_t^c}
    \ell \big(h(X), c \big).
\end{equation}
Adding and subtracting the corresponding source class-conditional expectation gives
\begin{equation}
\begin{aligned}
    &R_t(h)
    =
    \sum_{c=1}^{K}
    \pi_t(c)
    \mathbb{E}_{X\sim P_s^c}
    \ell\big(h(X), c \big)
    \\
    &+
    \sum_{c=1}^{K}
    \pi_t(c)
    \big[
        \mathbb{E}_{X\sim P_t^c}
        \ell\big(h(X), c \big)
        -
        \mathbb{E}_{X\sim P_s^c}
        \ell\big(h(X), c \big)
    \big].
\end{aligned}
\end{equation}
The first term is $R_s^{\pi_t}(h)$.
For the second term, Proposition~\ref{prop:ipm} and the RKHS norm assumption imply
\begin{equation}
\begin{split}
    \big|
        \mathbb{E}_{X\sim P_t^c}
        \ell\big(h(X), c \big)
        -
        &\mathbb{E}_{X\sim P_s^c}
        \ell\big(h(X), c \big)
    \big|\\
    &\leq
    B
    \mathrm{MMD}_{\mathrm{CPDA}}
    (P_s^c,P_t^c).
\end{split}
\end{equation}
Therefore,
\begin{equation}
    R_t(h)
    \leq
    R_s^{\pi_t}(h)
    +
    B
    \sum_{c=1}^{K}
    \pi_t(c)
    \mathrm{MMD}_{\mathrm{CPDA}}
    (P_s^c,P_t^c).
\end{equation}
\end{proof}

\begin{corollary}[Equal class priors]
\label{cor:equal_priors}
If $\pi_s(c)=\pi_t(c)$ for all $c$, then the reweighted source risk reduces to the standard source risk: $R_s^{\pi_t}(h)=R_s(h)$.
Consequently,
\begin{equation}
    R_t(h)
    \leq
    R_s(h)
    +
    B
    \sum_{c=1}^{K}
    \pi_t(c)
    \mathrm{MMD}_{\mathrm{CPDA}}
    (P_s^c,P_t^c).
\end{equation}
\end{corollary}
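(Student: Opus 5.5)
The corollary follows from Theorem~\ref{thm:target_risk_bound} once we show that equal class priors turn the reweighted source risk into the ordinary source risk. So the plan has two steps: prove the identity $R_s^{\pi_t}(h)=R_s(h)$, then substitute it into the theorem's bound, whose assumptions ($\ell(h(\cdot),c)\in\mathcal{H}_{\mathrm{CPDA}}$ with norm at most $B$) are taken unchanged.

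For the identity, we first write the standard source risk $R_s(h)=\mathbb{E}_{(X,Y)\sim P_s}\ell(h(X),Y)$ class-wise. By the law of total expectation, conditioning on $Y$,
\begin{equation}
    R_s(h)=\sum_{c=1}^{K}\pi_s(c)\,\mathbb{E}_{X\sim P_s^c}\ell\big(h(X),c\big),
\end{equation}
where $\pi_s(c)=P_s(Y=c)$ and $P_s^c=P_s(X|Y=c)$. This mirrors the decomposition of $R_t(h)$ in the statement of Theorem~\ref{thm:target_risk_bound}. Under the hypothesis $\pi_s(c)=\pi_t(c)$ for every $c$, the weights in this sum coincide term by term with those in the definition of $R_s^{\pi_t}(h)$. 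Hence $R_s^{\pi_t}(h)=R_s(h)$.

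Substituting this identity into the inequality of Theorem~\ref{thm:target_risk_bound} gives the stated bound directly; the discrepancy term $B\sum_c\pi_t(c)\,\mathrm{MMD}_{\mathrm{CPDA}}(P_s^c,P_t^c)$ is unchanged.

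There is no real obstacle here. The only point needing care is that the class-wise decomposition of $R_s$ requires the conditionals $P_s^c$ to be well defined. For any class with $\pi_s(c)=0$, we adopt the convention that the corresponding term vanishes. The equal-prior hypothesis then gives $\pi_t(c)=0$ for that class as well, so its term also drops out of $R_s^{\pi_t}$ and of the MMD sum, and the argument goes through.
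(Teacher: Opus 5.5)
Your proposal is correct and follows the paper's proof essentially verbatim: decompose $R_s(h)$ class-wise as $\sum_{c}\pi_s(c)\,\mathbb{E}_{X\sim P_s^c}\ell(h(X),c)$, replace $\pi_s(c)$ by $\pi_t(c)$ to obtain $R_s^{\pi_t}(h)$, and substitute into Theorem~\ref{thm:target_risk_bound}. Your remark on classes with zero prior is extra care that the paper omits, but it does not change the argument.
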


\begin{proof}
If $\pi_s(c)=\pi_t(c)$, then
\begin{equation}
\begin{split}
    R_s(h)
    =
    &\sum_{c=1}^{K}
    \pi_s(c)
    \mathbb{E}_{X\sim P_s^c}
    \ell\big(h(X), c \big)\\
    =
    &\sum_{c=1}^{K}
    \pi_t(c)
    \mathbb{E}_{X\sim P_s^c}
    \ell\big(h(X), c \big)
    =
    R_s^{\pi_t}(h).
\end{split}
\end{equation}
The result follows directly from Theorem~\ref{thm:target_risk_bound}.
\end{proof}

\begin{theorem}[Effect of pseudo-label error]
\label{thm:pseudo_label_error}
Let $q(c|x)$ be the soft target pseudo-label distribution used by CPDA, and let
\begin{equation}
    \eta_q
    =
    \mathbb{E}_{X\sim P_t}
    \big[
        \|q(\cdot|X)-p_t(\cdot|X)\|_1
    \big],
\end{equation}
where $p_t(c|x)$ is the true target posterior.
Assume the loss is bounded: $0\leq \ell\big(h(X), c \big)\leq M$. Then replacing the true target class posterior by $q(c|x)$ changes the target risk by at most $M\eta_q$.
\end{theorem}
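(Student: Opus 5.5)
The first step is to make precise what ``replacing the true target class posterior by $q(c|x)$'' means for the target risk. Write the target risk in posterior form,
\begin{equation}
    R_t(h)
    =
    \mathbb{E}_{X\sim P_t}
    \Big[
        \sum_{c=1}^{K}
        p_t(c|X)\,
        \ell\big(h(X),c\big)
    \Big].
\end{equation}
This agrees with the class-wise decomposition used in Theorem~\ref{thm:target_risk_bound}, since $\pi_t(c)\,dP_t^c(x) = p_t(c|x)\,dP_t(x)$. Define the pseudo-label surrogate $\widetilde{R}_t^q(h)$ by substituting $q(c|X)$ for $p_t(c|X)$:
\begin{equation}
    \widetilde{R}_t^q(h)
    =
    \mathbb{E}_{X\sim P_t}
    \Big[
        \sum_{c=1}^{K}
        q(c|X)\,
        \ell\big(h(X),c\big)
    \Big].
\end{equation}
This is the quantity CPDA implicitly optimizes when it weights target samples by $q$. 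The claim to prove is $|R_t(h)-\widetilde{R}_t^q(h)|\leq M\eta_q$.

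The second step is a pointwise estimate. For fixed $x$,
\begin{equation}
    \Big|
        \sum_{c}
        \big(q(c|x)-p_t(c|x)\big)\,
        \ell\big(h(x),c\big)
    \Big|
    \leq
    \max_c \ell\big(h(x),c\big)
    \sum_c
    |q(c|x)-p_t(c|x)|.
\end{equation}
This is Hölder's inequality with the $\ell_\infty$--$\ell_1$ pairing, and the right-hand side is at most $M\|q(\cdot|x)-p_t(\cdot|x)\|_1$ because $0\leq\ell\leq M$. Take expectation over $X\sim P_t$ and move the absolute value inside the expectation (Jensen, or the triangle inequality for integrals). This gives exactly $M\eta_q$. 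Measurability and integrability are automatic since the integrand is bounded.

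Two optional refinements are possible. Since $\sum_c (q-p_t)=0$, one could subtract the constant $M/2$ from the loss and get the sharper bound $\tfrac{M}{2}\eta_q$; this is worth remarking but is not needed. One could also note the consequence for the prior estimate: $|\pi_t(c)-\mathbb{E}_{P_t}q(c|X)|$ summed over $c$ is at most $\eta_q$.

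The main obstacle is not analytic but interpretive: the statement is informal, so the proof must fix the surrogate risk as above and argue that this is the object whose perturbation is meant. Once that is done, the argument is a one-line Hölder bound.
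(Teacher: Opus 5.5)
Your proposal is correct and follows the paper's proof: the paper also writes $R_t(h)$ and the pseudo-label-weighted risk $R_t^q(h)$ as $P_t$-expectations of $\sum_c p_t(c|X)\,\ell(h(X),c)$ and $\sum_c q(c|X)\,\ell(h(X),c)$. It then moves the absolute value inside the expectation and uses $|\ell|\leq M$ to obtain $M\eta_q$. Your remark that $\sum_c (q-p_t)=0$ lets you center the loss and obtain the sharper bound $\tfrac{M}{2}\eta_q$ is valid, and the paper does not make it.
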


\begin{proof}
The true conditional target risk can be written as
\begin{equation}
    R_t(h)
    =
    \mathbb{E}_{X\sim P_t}
    \sum_{c=1}^{K}
    p_t(c|X)
    \ell\big(h(X), c \big).
\end{equation}
The pseudo-label-weighted target risk is
\begin{equation}
    R_t^q(h)
    =
    \mathbb{E}_{X\sim P_t}
    \sum_{c=1}^{K}
    q(c|X)
    \ell\big(h(X), c \big).
\end{equation}
Therefore,
\begin{equation}
\begin{split}
    |R_t&(h)-R_t^q(h)|
    =\\
    &=\Big|
    \mathbb{E}_{X\sim P_t}
    \sum_{c=1}^{K}
    \big(
        p_t(c|X)-q(c|X)
    \big)
    \ell\big(h(X), c \big)
    \Big|
    \\
    &\leq
    \mathbb{E}_{X\sim P_t}
    \sum_{c=1}^{K}
    \big|
        p_t(c|X)-q(c|X)
    \big|
    \big|
        \ell\big(h(X), c \big)
    \big|
    \\
    &\leq
    M
    \mathbb{E}_{X\sim P_t}
    \|p_t(\cdot|X)-q(\cdot|X)\|_1
    =
    M\eta_q.
\end{split}
\end{equation}
\end{proof}

\begin{proposition}[Connection to existing discrepancy losses]
\label{prop:connections}
CPDA contains several commonly used discrepancy principles as limiting or restricted cases.

\begin{enumerate}
    \item If class conditioning is removed and only $k_{\mathrm{pool}}$ is used, CPDA reduces to standard marginal MMD on pooled features.
    \item If a linear kernel is used on pooled features, CPDA reduces to first-order mean matching.
    \item If a linear kernel is used on centered quadratic feature maps, the induced MMD equals covariance matching, corresponding to CORAL-type alignment.
    \item If polynomial feature maps of order $p$ are used, CPDA aligns moments up to order $p$, recovering the principle behind higher-order moment matching such as HoMM.
    \item If CPDA is evaluated at several network layers, it recovers the multi-layer distribution-matching principle used by DAN-like methods, while retaining class conditioning and temporal structure.
\end{enumerate}
\end{proposition}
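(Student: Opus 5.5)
The plan is to treat each item as a specialization of the single quantity $\|\mu_P-\mu_Q\|_{\mathcal{H}}^2$, where $\mu_P$ is the kernel mean embedding. By Proposition~\ref{prop:ipm}, this quantity is also the IPM over the unit ball of the RKHS. The whole argument rests on one identity for a kernel given by an explicit feature map, $k(x,x')=\langle\varphi(x),\varphi(x')\rangle$:
\begin{equation}
\|\mu_P-\mu_Q\|_{\mathcal{H}}^2=\big\|\mathbb{E}_P\varphi(X)-\mathbb{E}_Q\varphi(X)\big\|^2 .
\end{equation}
To get it, expand the kernel-matrix form of $\widehat{\mathrm{MMD}}_c^2$ and pull the expectations inside the inner products. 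This uses only bilinearity, plus the fact that the weighted sum of Proposition~\ref{prop:valid_kernel} is again a valid kernel, so the RKHS is well defined. Items 2--4 then become statements about which moments $\mathbb{E}\varphi$ encodes.

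\textbf{Items 1 and 5.} For item 1, ``removing class conditioning'' means taking $K=1$, or equivalently replacing all weights $\bar w$ by uniform weights $1/n_s$ and $1/n_t$, so that $\widehat\pi_t(1)=1$. Setting $\alpha_{\mathrm{pool}}=1$ and the other $\alpha_a=0$ then turns $\widehat{\mathcal{D}}^2_{\mathrm{CPDA}}$ into the standard (multi-kernel Gaussian) MMD estimator on $\overline\psi_{\mathrm{pool}}$, which is the definition of marginal MMD. For item 5, evaluate CPDA at layers $\ell=1,\dots,L$ with nonnegative weights and sum the results. The sum is itself an MMD for a sum kernel on concatenated layer features, so it is a valid kernel by the same argument as Proposition~\ref{prop:valid_kernel}. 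Dropping class conditioning and temporal components recovers the multi-layer MK-MMD objective of DAN; keeping them gives the stated class-conditional extension.

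\textbf{Items 2--4.}
\begin{itemize}
\item Item 2: with the linear kernel $\varphi(x)=\psi_{\mathrm{pool}}(x)$, the identity gives $\|\mathbb{E}_P\psi_{\mathrm{pool}}-\mathbb{E}_Q\psi_{\mathrm{pool}}\|^2$, which is first-order mean matching.
\item Item 3: take $\varphi(x)=\mathrm{vec}\big((\psi(x)-m)(\psi(x)-m)^\top\big)$, centering each domain at its own mean $m_P$ or $m_Q$. The identity then gives $\|\Sigma_P-\Sigma_Q\|_F^2$, which is the CORAL loss up to the constant $1/(4d^2)$.
\item Item 4: take the monomial feature map $\varphi_p(x)=(1,\psi,\psi^{\otimes 2},\dots,\psi^{\otimes p})$. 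The identity gives $\sum_{k\le p}\|\mathbb{E}_P\psi^{\otimes k}-\mathbb{E}_Q\psi^{\otimes k}\|^2$, so this MMD vanishes if and only if all moments up to order $p$ agree. This is the HoMM principle; HoMM's order-$p$ loss is the single $k=p$ term. Equivalently, one can use the polynomial kernel $(1+\langle\psi,\psi'\rangle)^p$, whose feature map spans the same monomials with positive weights, so it has the same zero set.
\end{itemize}

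\textbf{Main obstacle: item 3.} The centering depends on the distribution, so $\varphi$ is not a fixed feature map and the expression is not literally an MMD for a fixed kernel. I would handle this in one of two ways. The first is to state the result under matched means, or after per-domain batch centering as in CORAL's implementation, and note that the quantity is then an MMD between the centered pushforward distributions. The second is to use the uncentered second-moment map and observe that, combined with the pooled mean term from item 2, matching $\mathbb{E}\psi\psi^\top$ and $\mathbb{E}\psi$ is equivalent to matching covariances. The claim ``recovers'' should be read in this restricted-case sense, and I would say so explicitly.
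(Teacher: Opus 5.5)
Your proposal is correct and takes the same route as the paper's proof: each item comes from restricting the kernel weights, the feature map and the class conditioning, with the feature-map identity $\|\mu_P-\mu_Q\|_{\mathcal{H}}^2=\|\mathbb{E}_P\varphi(X)-\mathbb{E}_Q\varphi(X)\|^2$ doing the work in items 2--4, and a layer sum (MMD$^2$ is linear in the kernel) giving item 5. Your handling of item 3 is more careful than the paper's, which centers with a single $\mu$ and asserts equality with $\|C_s-C_t\|_F^2$ without noting that this needs per-domain centering or equal means; note, however, that your second fix (uncentered second moments plus the mean term) vanishes only when both means and covariances agree, so it gives an MMCD-type objective rather than CORAL itself.
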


\begin{proof}
Each statement follows by restricting the feature map and conditioning structure of CPDA. For item 1, setting
\begin{equation}
    \alpha_{\mathrm{sig}}
    =
    \alpha_{\mathrm{spec}}
    =
    \alpha_{\mathrm{path}}
    =
    0,
    \qquad
    \alpha_{\mathrm{pool}}>0,
\end{equation}
and replacing class-conditional weights by uniform weights yields
$\big\|
        \widehat{\mu}_s
        -
        \widehat{\mu}_t
    \big\|_{\mathcal{H}}^2$,
which is the empirical MMD. For item 2, choosing a linear kernel
\begin{equation}
    k(x,x')=\psi_{\mathrm{pool}}(x)^\top \psi_{\mathrm{pool}}(x')
\end{equation}
gives
\begin{equation}
    \mathrm{MMD}^2
    =
    \big\|
        \mathbb{E}_{P_s}\psi_{\mathrm{pool}}(X)
        -
        \mathbb{E}_{P_t}\psi_{\mathrm{pool}}(X)
    \big\|_2^2,
\end{equation}
which is first-order mean matching. For item 3, define the centered quadratic feature map
\begin{equation}
    \psi_2(x)
    =
    \operatorname{vec}
    \big(
        (\psi(x)-\mu)(\psi(x)-\mu)^\top
    \big).
\end{equation}
A linear MMD in this feature space is
\begin{equation}
    \big\|
        \mathbb{E}_{P_s}\psi_2(X)
        -
        \mathbb{E}_{P_t}\psi_2(X)
    \big\|_2^2
    =
    \|C_s-C_t\|_F^2,
\end{equation}
which is the core covariance discrepancy used by CORAL. For item 4, polynomial feature maps explicitly encode tensor products of features up to order $p$.
Therefore, MMD in this feature space matches moments up to order $p$, which is precisely the principle underlying HoMM. For item 5, summing CPDA discrepancies over multiple layers gives a multi-layer discrepancy objective. This is structurally analogous to DAN-like multi-layer MMD, but CPDA additionally preserves class-conditional and temporal path information (refer to App.~\ref{app:special_cases}, for more details.).
\end{proof}

\section{Experiments}
\label{label_experiments}

\paragraph{Methods.} We evaluate CPDA using the AdaTime~\citep{ragab_eldele_tan} benchmarking framework to ensure a reproducible comparison across DA methods. We benchmark discrepancy-based baselines, including MSE alignment, cross-correlation (CC) and Pearson-correlation (PC) alignment, Kullback-Leibler divergence (KL), linear and kernelized MMD~\citep{borgwardt_gretton,long_zhu_mmd}, Jensen--Shannon divergence (JSD), linear and kernelized mean--covariance discrepancy variants (MMCD)~\citep{zhang_zhang_lan,alipour_tahmoresnezhad}, linear and squared DAN-style losses~\citep{long_cao_wang_DAN}, covariance-based objectives such as CORAL~\citep{sun_feng_saenko}, Jeffreys and Stein CORAL~\citep{cherian_sra,suvrit_sra,harandi_salzmann}, HoMM~\citep{chen_fu_chen}, MMDA~\citep{rahman_fookes}, DDC~\citep{tzeng_hoffman_zhang}, DANN~\citep{ganin_ustinova_ajakan}, CDAN~\citep{long_cao_wang}, DIRT-T~\citep{shu_bui_narui}, DSAN~\citep{zhu_zhuang_wang}, CoDATS~\citep{wilson_doppa_cook}, AdvSKM~\citep{liu_xue}, CoTMix~\citep{eldele_ragab_cotmix}, MCD~\citep{saito_watanabe}, AdaMatch~\citep{berthelot_roelofs}, DANCE~\citep{saito_kim_dance}, OVANet~\citep{saito_saenko}, and RAINCOAT~\citep{he_queen_koker}. We further evaluate Sinkhorn-based optimal transport alignment~\citep{ott_acmmm}. Refer to App.~\ref{app:comparison_methods}, for more details on methods. All methods are compared against CPDA under the same experimental protocol and are evaluated with three AdaTime backbone architectures: CNN, ResNet18, and TCN (App.~\ref{app:backbone}).

\paragraph{Datasets.} In total, we utilize 13 different time-series datasets for evaluation of unsupervised DA. Table~\ref{tab:datasets} in App.~\ref{app:datasets} gives a detailed overview of different applications and number of classes and samples. We utilize the EEG~\citep{goldberger_amaral_glass}, HHAR\_SA~\citep{stisen_blunck_bhattacharya}, UCI HAR~\citep{anguita_ghio_oneto}, WISDM~\citep{kwapisz_weiss_moore}, and uWave~\citep{liu_wang_zhong} datasets that are mainly used in DA benchmarks. Additionally, we evaluate datasets from the UCR/UEA~\citep{blankertz_curio_mueller,caputo_prebianca,villar_vergara_menendez,olivetti_kia_avesani,alimoglu_alpaydin} dataset suite. One prominent application is online handwriting (OnHW) recognition of symbols, equations, and characters~\citep{ott_ijdar,ott_imwut}. For ablation studies, we consider the transformation of sine-signals to cosine-signals as a DA problem~\citep{ott_acmmm}.
\section{Evaluation}
\label{label_evaluation}

We evaluate a large collection of source--target pairs (up to $435$ pairs per dataset), repeat every transfer five times using different random seeds, and report the mean and standard deviation of accuracy and F1-score. Details on the hardware and training setup are provided in App.~\ref{app:hardware_setup}, while additional experimental results are reported in App.~\ref{app:experimental_results}. The source--target domain discrepancies are analyzed in App.~\ref{app:experimental_results_discrepancy}.

\begin{figure}[!t]
    \centering
    \includegraphics[trim=6 6 6 6, clip, width=1.0\linewidth]{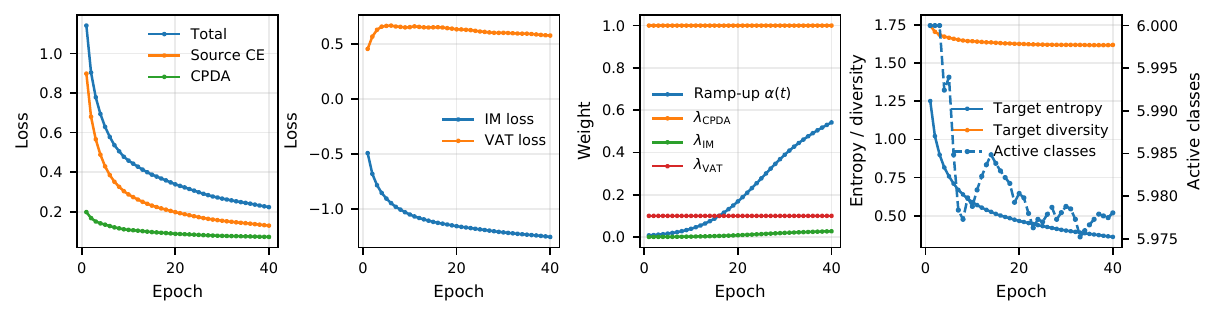}
    \caption{Training dynamics of CPDA: losses, target regularization, effective weights, and target prediction statistics.}
    \label{label_cpda_training_dynamics}
\end{figure}

\paragraph{Training Dynamics.} Figure~\ref{label_cpda_training_dynamics} shows stable CPDA optimization: the total, source, and CPDA losses decrease steadily, while VAT remains active with fixed weight and only the IM weight is gradually increased by the ramp-up schedule. Target entropy decreases without a collapse in target diversity or active classes, indicating more confident target predictions while preserving class coverage.

\begin{table*}[t!]
\centering
\caption{Results for all methods on the exemplary datasets HHAR, EEG, and uWave for CNN, ResNet18, and TCN backbones. Each entry reports accuracy in \% and F1-score. Results for all datasets are given in App.~\ref{app:experimental_results_da_methods}, including standard deviations.}
\label{table_all_results_da}
\scriptsize
\begin{tabular}{
    >{\raggedright\arraybackslash}p{1.45cm}
    >{\centering\arraybackslash}p{1.35cm}
    >{\centering\arraybackslash}p{1.35cm}
    >{\centering\arraybackslash}p{1.35cm}
    >{\centering\arraybackslash}p{1.35cm}
    >{\centering\arraybackslash}p{1.35cm}
    >{\centering\arraybackslash}p{1.35cm}
    >{\centering\arraybackslash}p{1.35cm}
    >{\centering\arraybackslash}p{1.35cm}
    >{\centering\arraybackslash}p{1.35cm}
}
\toprule
\textbf{Method}
& \makecell{\textbf{HHAR}\\\textbf{CNN}}
& \makecell{\textbf{HHAR}\\\textbf{ResNet}}
& \makecell{\textbf{HHAR}\\\textbf{TCN}}
& \makecell{\textbf{EEG}\\\textbf{CNN}}
& \makecell{\textbf{EEG}\\\textbf{ResNet}}
& \makecell{\textbf{EEG}\\\textbf{TCN}}
& \makecell{\textbf{uWave}\\\textbf{CNN}}
& \makecell{\textbf{uWave}\\\textbf{ResNet}}
& \makecell{\textbf{uWave}\\\textbf{TCN}} \\
\midrule
MSE & 62.24, 57.79 & 48.40, 43.37 & 52.08, 46.31 & 69.12, 58.02 & 51.63, 38.55 & 45.87, 33.59 & 83.76, 82.43 & 45.88, 41.66 & 71.69, 68.82 \\
CS & 63.07, 59.09 & 53.68, 50.98 & 56.09, 51.36 & 71.51, 57.79 & 53.13, 36.18 & 46.00, 33.89 & 77.88, 76.09 & 48.51, 44.81 & 69.08, 66.29 \\
PC & 64.74, 60.68 & 54.97, 51.59 & 61.43, 57.20 & 69.80, 59.03 & 51.86, 38.80 & 45.89, 34.15 & 85.62, 84.62 & 49.99, 45.99 & 73.33, 70.22 \\
KL & 60.10, 54.76 & 50.83, 46.08 & 19.36,\,\,\, 8.39 & 69.20, 58.38 & 50.03, 37.36 & 26.56, \,\,\,8.21 & 81.35, 79.89 & 46.41, 40.81 & 12.69,\,\,\, 2.81 \\
JSD & 64.75, 60.62 & 54.55, 50.74 & 17.79,\,\,\, 7.33 & 69.99, 59.30 & 48.80, 35.65 & 26.47, \,\,\,8.21 & 84.10, 82.72 & 49.33, 43.11 & 12.10,\,\,\, 2.70 \\
lMMD & 66.83, 63.13 & 57.50, 54.95 & 68.33, 65.01 & 70.18, 59.39 & 51.74, 38.73 & 46.16, 33.78 & 86.53, 85.58 & 49.79, 45.79 & 83.40, 81.69 \\
kMMD & 63.86, 59.64 & 54.50, 51.25 & 61.48, 57.48 & 69.80, 59.11 & 51.91, 38.96 & 45.98, 34.00 & 86.57, 85.53 & 49.76, 45.78 & 79.22, 77.28 \\
Deep CORAL & 73.51, 71.57 & 61.20, 59.75 & 72.17, 70.23 & 72.44, 61.45 & 52.67, 39.21 & 46.79, 33.54 & 86.51, 85.44 & 50.11, 46.16 & 80.15, 78.30 \\
Jeff CORAL & 64.49, 60.18 & 54.55, 61.11 & 62.13, 58.15 & 69.69, 58.93 & 51.77, 38.81 & 45.93, 34.01 & 85.98, 84.88 & 49.75, 45.61 & 79.44, 77.50 \\
Stein CORAL & 64.10, 59.78 & 54.93, 51.57 & 62.11, 58.18 & 69.95, 59.24 & 51.87, 38.90 & 45.96, 34.08 & 86.41, 85.41 & 49.81, 45.71 & 79.11, 77.15 \\
lMMCD & 66.60, 63.03 & 57.32, 54.87 & 68.04, 64.78 & 70.19, 59.35 & 51.82, 38.77 & 45.85, 33.55 & 86.39, 85.26 & 49.75, 45.73 & 83.68, 81.84 \\
kMMCD & 64.37, 60.11 & 55.26, 52.06 & 62.03, 57.97 & 69.83, 59.04 & 51.74, 38.80 & 45.92, 33.88 & 86.51, 85.44 & 50.11, 46.16 & 80.15, 78.30 \\
MMDA & 70.34, 66.08 & 57.73, 55.05 & 67.26, 64.51 & 64.84, 47.83 & 52.15, 36.35 & 44.48, 28.31 & 92.26, 91.76 & 49.90, 46.24 & 84.64, 82.61 \\
DDC & 63.14, 59.32 & 53.74, 49.80 & 59.70, 55.69 & 72.44, 61.44 & 51.32, 37.98 & 46.77, 33.45 & 86.32, 85.28 & 49.96, 45.96 & 79.43, 77.42 \\
Linear DAN & 72.14, 69.81 & 59.19, 56.96 & 68.40, 65.86 & 70.25, 59.70 & 51.78, 38.82 & 45.98, 34.01 & 87.77, 87.03 & 50.58, 46.59 & 83.46, 82.02 \\
Squared DAN & 72.09, 69.84 & 59.29, 57.09 & 68.48, 66.12 & 70.50, 59.79 & 52.02, 39.02 & 45.96, 34.10 & 89.53, 88.83 & 51.03, 47.03 & 86.84, 85.98 \\
$\text{HoMM}_{p=3}$ & 72.64, 70.15 & 61.59, 59.70 & 70.84, 68.48 & 72.84, 61.74 & 51.16, 37.00 & 40.42, 19.14 & 92.00, 91.27 & 51.28, 47.62 & 90.36, 89.90 \\
DANN & 76.91, 75.86 & 63.21, 62.18 & 73.16, 71.86 & 73.09, 62.00 & 52.54, 38.54 & 46.30, 33.00 & 85.13, 84.13 & 52.01, 48.26 & 84.58, 83.37 \\
CDAN & 79.88, 79.44 & 66.49, 66.11 & 76.78, 75.10 & 71.70, 58.10 & 49.90, 32.28 & \textbf{47.88}, 33.47 & 96.43, 95.97 & 57.95, 53.18 & 81.85, 79.01 \\
DIRT-T & 80.11, 79.45 & 67.89, 65.92 & 76.78, 74.77 & 73.48, 61.23 & 52.37, 37.38 & 47.85, 33.30 & 96.43, 95.97 & 55.01, 49.97 & 80.15, 78.30 \\
DSAN & 77.96, 77.28 & 67.22, 66.60 & 74.64, 73.26 & 71.00, 58.15 & 51.53, 36.27 & 47.43, \textbf{34.71} & 13.11, \,\,\,5.45 & 59.25, 56.85 & 91.92, \textbf{91.55} \\
CoDATS & 76.30, 75.62 & 63.79, 62.73 & 72.48, 71.10 & 70.03, 58.34 & 50.67, 35.63 & 42.39, 26.78 & 88.85, 88.32 & 55.48, 52.19 & 89.31, 88.46 \\
AdvSKM & 66.36, 62.23 & 53.61, 50.28 & 62.32, 58.07 & 72.56, 61.37 & 50.97, 35.32 & 46.62, 33.35 & 85.54, 84.61 & 50.07, 46.18 & 79.74, 77.68 \\
Sinkhorn & 75.15, 73.97 & \textbf{67.98}, \textbf{67.18} & 70.04, 68.11 & 71.37, 59.97 & 51.31, 37.15 & 44.87, 28.48 & 92.89, 91.41 & 58.01, 54.21 & 84.08, 83.01 \\
CoTMix & 76.61, 74.51 & 56.84, 54.60 & 67.32, 65.36 & 70.30, 58.55 & 49.25, 34.55 & 32.22, 22.37 & 91.88, 91.04 & 42.76, 41.62 & 36.46, 34.88 \\
MCD & 73.92, 71.13 & 61.14, 58.43 & 67.12, 63.71 & 72.47, 61.49 & 51.75, 37.41 & 42.98, 31.69 & 89.43, 88.60 & 53.10, 49.66 & 79.34, 77.13 \\
AdaMatch & 76.83, 75.40 & 64.07, 62.60 & 77.43, \textbf{75.83} & 72.79, 61.19 & 50.97, 37.28 & 51.01, 37.38 & 89.31, 88.81 & 53.74, 50.29 & 86.36, 85.52 \\
DANCE & 55.76, 51.05 & 59.18, 55.68 & 63.75, 61.30 & 53.27, 33.42 & 51.28, 31.98 & 41.62, 22.13 & 38.89, 33.90 & 46.47, 42.83 & 82.20, 79.64 \\
OVANet & 65.78, 61.72 & 56.76, 54.08 & 70.20, 66.82 & 65.86, 50.77 & \textbf{53.56}, 37.93 & 43.25, 25.12 & 60.21, 57.86 & 49.72, 46.29 & 83.12, 81.14 \\
RAINCOAT & 72.91, 70.65 & 61.94, 60.13 & 64.58, 61.30 & 63.51, 45.99 & 47.77, 24.73 & 45.82, 20.77 & 88.17, 87.39 & \textbf{63.30}, \textbf{59.71} & 82.39, 80.34 \\
CPDA (ours) & \underline{\textbf{81.82}}, \underline{\textbf{80.27}} & 67.79, 65.75 & \textbf{77.88}, \textbf{75.84} & \underline{\textbf{74.39}}, \underline{\textbf{64.16}} & 53.27, \textbf{40.46} & 46.76, 33.80 & \underline{\textbf{97.30}}, \underline{\textbf{96.79}} & 61.32, 57.53 & \textbf{92.17}, 91.24 \\
\bottomrule
\end{tabular}
\end{table*}

\paragraph{Method Evaluation.} Table~\ref{table_all_results_da} summarizes representative results for HHAR, EEG, and uWave, while App.~\ref{app:experimental_results_da_methods} confirms consistent trends across the remaining datasets. Across the nine representative dataset--backbone configurations, CPDA achieves the highest accuracy/F1-score in five settings, while being competitive with the best baseline in the remaining cases. Its strongest and most consistent gains occur with the CNN backbone and on HHAR and uWave, where class-dependent temporal structure is particularly pronounced. On HHAR, CPDA reaches $81.82\%$ accuracy and $80.27\%$ F1 with CNN and $77.88\%$\,/\,$75.84\%$ with TCN, while obtaining the strongest ResNet18 result. It likewise outperforms all baselines on EEG, despite the substantially lower absolute performance and greater difficulty of this physiological-signal benchmark. The clearest gains are observed on uWave, where CPDA achieves $97.30\%$, $61.32\%$, and $92.17\%$ accuracy with CNN, ResNet18, and TCN, respectively, surpassing adversarial, discrepancy-based, optimal-transport, and time-series-specific competitors. These results support the central motivation of CPDA: aligning class-conditional latent paths is particularly effective when discriminative information is encoded in temporal evolution rather than only in marginal feature statistics. The closest competitors vary across settings, with DIRT-T and CDAN performing strongly on HHAR, AdaMatch and OVANet on EEG, and the recent time-series-specific RAINCOAT method on uWave with ResNet18 and under sinusoidal noise. The appendix further demonstrates strong performance on UCI HAR, WISDM, PenDigits, Epilepsy, and Face Detection, although the gains are more variable for small or highly heterogeneous datasets such as Finger Movements, Gestures Mid Air, and OnHW. On the sinusoidal benchmark, CPDA is among the most robust methods under increasing noise, providing particularly strong performance in the moderate-noise
regime (Figure~\ref{figure_results_sin_cos}, App.~\ref{app:experimental_results_da_methods}).

\begin{figure}[!t]
    \centering
    \includegraphics[trim=10 10 10 10, clip, width=1.0\linewidth]{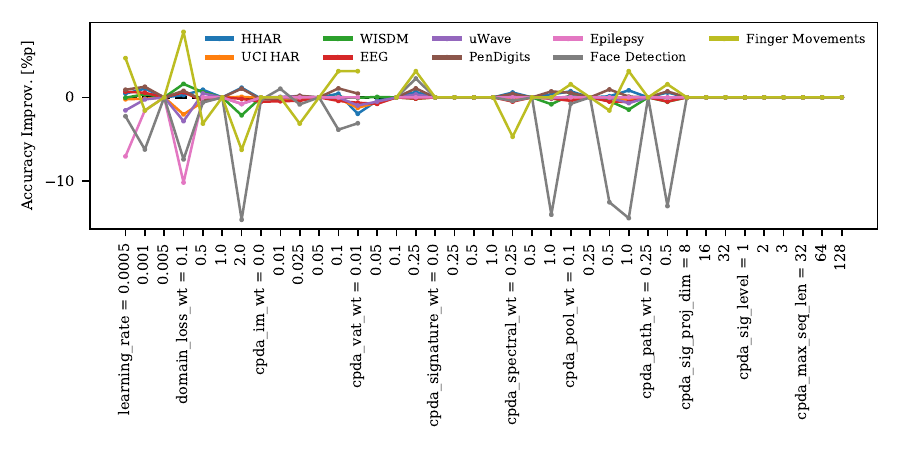}
    \caption{Accuracy improvements in percent points through hyperparameter searches against default parameters.}
    \label{label_hyperparameter_search}
\end{figure}

\paragraph{Hyperparameter Searches.} We perform parameter searches for all datasets using the CNN backbone (see Table~\ref{tab:cpda_hparams} in App.~\ref{app:hardware_setup}). Figure~\ref{label_hyperparameter_search} reports the accuracy improvement over the default CPDA configuration of a post hoc sensitivity analysis. Overall, most hyperparameter changes lead only to small improvements, indicating that the default setting is already robust across datasets. The largest gains are dataset-specific: Finger Movements benefits strongly from selected loss weights, while Face Detection shows larger sensitivity to the pooled\,/\,path-related weights. In contrast, HHAR, UCI HAR, WISDM, EEG, uWave, PenDigits, and Epilepsy remain comparatively stable. This suggests that CPDA is not overly sensitive to most hyperparameters, but that difficult or small datasets can benefit from targeted tuning of the loss weights and kernel-component weights.
\section{Conclusion}
\label{label_conclusion}

We proposed CPDA, a non-adversarial method for unsupervised time-series DA that aligns class-conditional latent path distributions instead of only marginal feature distributions. CPDA combines source labels, target soft pseudo-labels, VAT regularization, and a composite signature--spectral kernel to capture pooled, temporal, spectral, and path-signature information. Experiments across multiple backbones show that CPDA achieves strong or best performance on several structured datasets. These results demonstrate that class-conditional path alignment is particularly effective when discriminative information is encoded in temporal trajectories.

\bibliography{aaai2026}
\clearpage
\appendix

\section{Notations}
\label{app:notations}

We use uppercase letters to denote random variables, lowercase letters for their realizations, bold lowercase letters for vectors, and calligraphic letters for sets and function spaces. Source-domain quantities are marked by the superscript $s$, whereas target-domain quantities are marked by the superscript $t$. The source domain provides labeled samples, while the target domain provides only unlabeled samples. The learned model consists of a feature extractor $f_\theta$ and a classifier $g_\phi$. For time-series inputs, the feature extractor produces a latent temporal path $Z_\theta(x)=(z_1,\dots,z_T)$, where each $z_t$ is a latent feature vector at time index $t$. CPDA aligns source and target class-conditional latent path distributions using a composite signature--spectral kernel and soft target pseudo-labels. Table~\ref{tab:notation1} and Table~\ref{tab:notation2} gives an overview of all notations.

\begin{table}[h]
\setlength{\tabcolsep}{1mm}
\centering
\caption{Summary of notations used throughout the paper.}
\label{tab:notation1}
\small
\begin{tabular}{p{0.18\linewidth} p{0.77\linewidth}}
\toprule
\textbf{Notation} & \textbf{Description} \\
\midrule
\multicolumn{2}{l}{\textbf{Domains and Data}} \\
\midrule
$\mathcal{X}$ & Input space of multivariate time-series \\
$\mathcal{Y}$ & Label space, $\mathcal{Y}=\{1,\dots,K\}$ \\
$K$ & Number of classes \\
$C_{\mathrm{in}}$ & Number of input channels \\
$L$ & Length of the original input time-series \\
$P_s(X,Y)$ & Joint source-domain distribution \\
$P_t(X,Y)$ & Joint target-domain distribution \\
$P_s(X)$, $P_t(X)$ & Source and target marginal input distributions \\
$P_s^c$ & Source class-conditional distr. $P_s(X|Y=c)$ \\
$P_t^c$ & Target class-conditional distr. $P_t(X|Y=c)$ \\
$\mathcal{D}_s$ & Labeled source dataset $\{(x_i^s,y_i^s)\}_{i=1}^{n_s}$ \\
$\mathcal{D}_t$ & Unlabeled target dataset $\{x_j^t\}_{j=1}^{n_t}$ \\
$n_s$, $n_t$ & Number of source and target samples \\
$B_s$, $B_t$ & Source and target mini-batch sizes \\
$x_i^s$, $x_j^t$ & Source and target time-series samples \\
$y_i^s$ & Source label for sample $x_i^s$ \\
\midrule
\multicolumn{2}{l}{\textbf{Model and Latent Representations}} \\
\midrule
$f_\theta$ & Feature extractor with parameters $\theta$ \\
$g_\phi$ & Classifier with parameters $\phi$ \\
$h_{\theta,\phi}$ & Full prediction model $h_{\theta,\phi}=g_\phi\circ f_\theta$ \\
$Z_\theta(x)$ & Latent temporal path extracted from input $x$ \\
$Z_\theta(x)=(z_1,\dots,z_T)$ & Sequence of latent vectors \\
$z_t$ & Latent feature vector at time index $t$ \\
$T$ & Latent temporal length after feature extraction \\
$d$ & Dimension of each latent vector $z_t$ \\
$\Delta z_t$ & Latent increment $z_t-z_{t-1}$ \\
$A_\theta(x)_t$ & Augmented latent path element $[t/T,z_t,\Delta z_t]$ \\
$q_{\theta,\phi}(c\mid x)$ & Softmax probability assigned to class $c$ \\
\midrule
\multicolumn{2}{l}{\textbf{CPDA Features and Kernels}} \\
\midrule
$\psi_{\mathrm{pool}}(x)$ & Pooled latent feature representation \\
$\psi_{\mathrm{path}}(x)$ & Flattened augmented latent path representation \\
$\psi_{\mathrm{spec}}(x)$ & Spectral representation of the latent path \\
$\psi_{\mathrm{sig}}(x)$ & Low-rank truncated path-signature representation \\
$\mathcal{F}_t[\cdot]$ & Fourier transform along the temporal dimension \\
$S_1(x)$ & First-order signature feature \\
\bottomrule
\end{tabular}
\end{table}

\begin{table}[t!]
\setlength{\tabcolsep}{1mm}
\centering
\caption{Table~\ref{tab:notation1} continued.}
\label{tab:notation2}
\small\begin{tabular}{p{0.13\linewidth} p{0.76\linewidth}}
\toprule
\textbf{Notation} & \textbf{Description} \\
\midrule
\multicolumn{2}{l}{\textbf{CPDA Features and Kernels}} \\
\midrule
$S_2(x)$ & Second-order signature feature \\
$r$ & Random projection dimension for signature features \\
$k_{\mathrm{pool}}$ & Kernel on pooled latent features \\
$k_{\mathrm{path}}$ & Kernel on flattened temporal path features \\
$k_{\mathrm{spec}}$ & Kernel on spectral latent features \\
$k_{\mathrm{sig}}$ & Kernel on signature features \\
$k_{\mathrm{CPDA}}$ & Composite CPDA kernel \\
$\mathcal{H}_{\mathrm{CPDA}}$ & RKHS induced by $k_{\mathrm{CPDA}}$ \\
$\alpha_{\mathrm{pool}}$ & Weight of the pooled kernel component \\
$\alpha_{\mathrm{path}}$ & Weight of the temporal-path kernel component \\
$\alpha_{\mathrm{spec}}$ & Weight of the spectral kernel component \\
$\alpha_{\mathrm{sig}}$ & Weight of the signature kernel component \\
\midrule
\multicolumn{2}{l}{\textbf{Class-Conditional Alignment}} \\
\midrule
$w_{i,c}^s$ & Source class weight $\mathbf{1}[y_i^s=c]$ \\
$w_{j,c}^t$ & Target soft class weight $q_{\theta,\phi}(c\mid x_j^t)$ \\
$\bar{w}_{i,c}^s$ & Normalized source class weight \\
$\bar{w}_{j,c}^t$ & Normalized target class weight \\
$\pi_t(c)$ & True target class prior $P_t(Y=c)$ \\
$\widehat{\pi}_t(c)$ & Estimated target class prior from soft pseudo-labels \\
$\mu_{s,c}$ & Source class-conditional kernel mean embedding \\
$\mu_{t,c}$ & Target class-conditional kernel mean embedding \\
$\widehat{\mu}_{s,c}$ & Empirical source class-conditional mean embedding \\
$\widehat{\mu}_{t,c}$ & Empirical target class-conditional mean embedding \\
$\widehat{\mathrm{MMD}}_c^2$ & Empirical class-wise weighted MMD discrepancy \\
$\widehat{\mathcal{D}}_{\mathrm{CPDA}}^2$ & Empirical CPDA discrepancy \\
\midrule
\multicolumn{2}{l}{\textbf{Losses and Risks}} \\
\midrule
$\ell(\cdot,\cdot)$ & Supervised classification loss \\
$\mathcal{L}_{\mathrm{cls}}$ & Source classification loss \\
$\mathcal{L}_{\mathrm{IM}}$ & Target information maximization loss \\
$\mathcal{L}_{\mathrm{CPDA}}$ & Total CPDA training objective \\
$R_s(h)$ & Source risk of classifier $h$ \\
$R_t(h)$ & Target risk of classifier $h$ \\
$R_s^{\pi_t}(h)$ & Source risk reweighted by target class priors \\
$\lambda$ & Weight of the CPDA discrepancy term \\
$\eta$ & Weight of the information maximization term \\
$\gamma$ & VAT weight, e.g., $\gamma = 1.0$ \\
\bottomrule
\end{tabular}
\end{table}

The source and target domains are described by distributions $P_s$ and $P_t$, respectively. The source dataset $\mathcal{D}_s$ contains labeled time-series, whereas the target dataset $\mathcal{D}_t$ contains unlabeled time-series. The model is decomposed into a feature extractor $f_\theta$ and a classifier $g_\phi$. The feature extractor maps each input time-series to a latent temporal path $Z_\theta(x)$, which forms the object aligned by CPDA. Instead of matching only marginal feature distributions, CPDA constructs class-conditional kernel mean embeddings $\mu_{s,c}$ and $\mu_{t,c}$ for each class $c$ and minimizes their discrepancy in the RKHS induced by the composite kernel $k_{\mathrm{CPDA}}$. Since target labels are unavailable, target class weights are obtained from soft pseudo-labels $q_{\theta,\phi}(c\mid x)$.

\section{Additional Theoretical Details}
\label{app:theory}

\subsection{Assumptions}
\label{app:assumptions}

We state the assumptions used in the theoretical analysis.

\begin{assumption}[Bounded loss]
\label{ass:bounded_loss}
The task loss satisfies
\begin{equation}
    0 \leq \ell \big(h(x), y \big) \leq M
\end{equation}
for all $x\in\mathcal{X}$ and $y\in\mathcal{Y}$.
\end{assumption}

\begin{assumption}[RKHS regularity]
\label{ass:rkhs_regular}
For every class $c\in\{1,\dots,K\}$, the function
\begin{equation}
    x\mapsto \ell \big(h(x), c \big)
\end{equation}
belongs to the RKHS $\mathcal{H}_{\mathrm{CPDA}}$ induced by the CPDA kernel and satisfies
\begin{equation}
    \|\ell(h(\cdot),c)\|_{\mathcal{H}_{\mathrm{CPDA}}} \leq B.
\end{equation}
\end{assumption}

\begin{assumption}[Bounded kernel]
\label{ass:bounded_kernel}
The CPDA kernel is bounded:
\begin{equation}
    0 \leq k_{\mathrm{CPDA}}(x,x') \leq \kappa
\end{equation}
for all $x,x'\in\mathcal{X}$.
\end{assumption}

\begin{assumption}[Soft pseudo-label consistency]
\label{ass:pseudo_label}
The target pseudo-label distribution $q(c|x)$ approximates the true target posterior $p_t(c|x)$ with error
\begin{equation}
    \eta_q
    =
    \mathbb{E}_{X\sim P_t}
    \big[
        \|q(\cdot|X)-p_t(\cdot|X)\|_1
    \big].
\end{equation}
\end{assumption}

\subsection{Consistency of the Empirical CPDA Discrepancy}
\label{app:consistency}

We first analyze the empirical class-conditional CPDA discrepancy under known target class labels.
The pseudo-label case is discussed separately. For class $c$, define
\begin{equation}
    P_s^c = P_s(X|Y=c),
    \quad
    P_t^c = P_t(X|Y=c).
\end{equation}
Let
\begin{equation}
    \mu_s^c
    =
    \mathbb{E}_{X\sim P_s^c}
    k_{\mathrm{CPDA}}(X,\cdot),
    \quad
    \mu_t^c
    =
    \mathbb{E}_{X\sim P_t^c}
    k_{\mathrm{CPDA}}(X,\cdot).
\end{equation}
The population class-wise discrepancy is
\begin{equation}
    \mathrm{MMD}_{\mathrm{CPDA}}^2(P_s^c,P_t^c)
    =
    \|\mu_s^c-\mu_t^c\|_{\mathcal{H}_{\mathrm{CPDA}}}^2.
\end{equation}

\begin{theorem}[Consistency of the empirical class-wise estimator]
\label{thm:empirical_consistency}
Assume that $k_{\mathrm{CPDA}}$ is bounded as in Assumption~\ref{ass:bounded_kernel}.
Let
\begin{equation}
    \widehat{\mathrm{MMD}}_{\mathrm{CPDA}}^2(P_s^c,P_t^c)
\end{equation}
be the empirical biased MMD estimator computed from $n_s^c$ source samples and $n_t^c$ target samples of class $c$.
Then
\begin{equation}
    \widehat{\mathrm{MMD}}_{\mathrm{CPDA}}^2(P_s^c,P_t^c)
    \xrightarrow[]{p}
    \mathrm{MMD}_{\mathrm{CPDA}}^2(P_s^c,P_t^c)
\end{equation}
as $n_s^c,n_t^c \rightarrow \infty$.
\end{theorem}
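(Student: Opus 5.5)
The plan is to reduce the statement to convergence in RKHS norm of the two empirical class-conditional mean embeddings, and then pass to the squared norm by continuity. We work with known target labels, as stated at the start of App.~\ref{app:consistency}. Let $x_1,\dots,x_{n_s^c}$ be i.i.d.\ from $P_s^c$ and $x'_1,\dots,x'_{n_t^c}$ be i.i.d.\ from $P_t^c$. Set
\begin{equation}
\widehat{\mu}_s^c=\frac{1}{n_s^c}\sum_{i}k_{\mathrm{CPDA}}(x_i,\cdot),\qquad \widehat{\mu}_t^c=\frac{1}{n_t^c}\sum_{j}k_{\mathrm{CPDA}}(x'_j,\cdot).
\end{equation}
Expanding $\|\widehat{\mu}_s^c-\widehat{\mu}_t^c\|^2_{\mathcal{H}_{\mathrm{CPDA}}}$ via the reproducing property reproduces the kernel-matrix formula for the biased estimator, with uniform weights inside class $c$. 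Hence
\begin{equation}
\widehat{\mathrm{MMD}}_{\mathrm{CPDA}}^2(P_s^c,P_t^c)=\|\widehat{\mu}_s^c-\widehat{\mu}_t^c\|^2_{\mathcal{H}_{\mathrm{CPDA}}}.
\end{equation}

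The main step is to show $\|\widehat{\mu}_s^c-\mu_s^c\|_{\mathcal{H}_{\mathrm{CPDA}}}\to 0$ in probability, and likewise for the target. The feature map $\xi_i=k_{\mathrm{CPDA}}(x_i,\cdot)$ is i.i.d.\ Hilbert-space valued, with $\|\xi_i\|^2=k_{\mathrm{CPDA}}(x_i,x_i)\le\kappa$ by Assumption~\ref{ass:bounded_kernel}. This boundedness gives Bochner integrability, so the embedding $\mu_s^c=\mathbb{E}\xi_i$ is well defined. A direct second-moment computation, expanding the square and using independence so that cross terms have zero mean, gives
\begin{equation}
\mathbb{E}\|\widehat{\mu}_s^c-\mu_s^c\|^2_{\mathcal{H}_{\mathrm{CPDA}}}=\frac{1}{n_s^c}\Big(\mathbb{E}\,k_{\mathrm{CPDA}}(X,X)-\|\mu_s^c\|^2\Big)\le\frac{\kappa}{n_s^c}.
\end{equation}
Markov's inequality then yields convergence in probability at rate $O_p((n_s^c)^{-1/2})$. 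The same argument applies to the target. Measurability of the Hilbert-valued variables is the technical point here; it follows because the Gaussian-based kernel is continuous on a separable input space, so $\mathcal{H}_{\mathrm{CPDA}}$ is separable. If sharper rates were wanted, McDiarmid's inequality could replace Markov, but it is not needed.

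Finally, combine the two pieces. Write $a=\widehat{\mu}_s^c-\widehat{\mu}_t^c$ and $b=\mu_s^c-\mu_t^c$. By the triangle inequality,
\begin{equation}
\|a-b\|\le\|\widehat{\mu}_s^c-\mu_s^c\|+\|\widehat{\mu}_t^c-\mu_t^c\|\xrightarrow{p}0.
\end{equation}
All four embeddings have norm at most $\sqrt{\kappa}$, so
\begin{equation}
\big|\|a\|^2-\|b\|^2\big|=\big|\|a\|-\|b\|\big|\,\big(\|a\|+\|b\|\big)\le 2\sqrt{\kappa}\,\|a-b\|,
\end{equation}
which tends to $0$ in probability as $n_s^c,n_t^c\to\infty$. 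This gives the claimed convergence of the biased estimator to $\mathrm{MMD}_{\mathrm{CPDA}}^2(P_s^c,P_t^c)$, as defined in App.~\ref{app:consistency}.

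The main obstacle is the embedding concentration step: it is where the bounded-kernel assumption is essential and where the Hilbert-valued law of large numbers has to be justified. The rest is algebra.
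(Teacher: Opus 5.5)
Your proof is correct, but it takes a different route from the paper's. The paper stays at the scalar level. It writes the biased estimator as the within-source, within-target and cross kernel averages, then invokes a law of large numbers for bounded $V$-statistics to send each of the three terms separately to its population expectation.

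You instead lift to $\mathcal{H}_{\mathrm{CPDA}}$. You show $\|\widehat{\mu}_s^c-\mu_s^c\|_{\mathcal{H}_{\mathrm{CPDA}}}\to 0$, and likewise for the target, via the exact second-moment identity and Markov's inequality. You then pass to the squared norm by a Lipschitz estimate. This is essentially the mechanism the paper reserves for Theorem~\ref{thm:finite_sample_cpda} (reverse triangle inequality plus embedding concentration), so consistency becomes a by-product of embedding concentration.

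Your route buys two things. It is self-contained: you need no external $V$-statistic limit theorem, and in particular no separate two-sample argument for the cross term. That term is not a one-sample $V$-statistic, and the paper folds it in without comment. It also gives a rate for free, $|\widehat{\mathrm{MMD}}_{\mathrm{CPDA}}^2-\mathrm{MMD}_{\mathrm{CPDA}}^2|=O_p\big((n_s^c)^{-1/2}+(n_t^c)^{-1/2}\big)$. The cost is Hilbert-space bookkeeping (Bochner integrability, measurability of $x\mapsto k_{\mathrm{CPDA}}(x,\cdot)$), which the scalar route never has to mention; you address it adequately.

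One small slip in the last step: $a$ and $b$ are differences of two embeddings, so you only have $\|a\|,\|b\|\le 2\sqrt{\kappa}$. Using $k_{\mathrm{CPDA}}\ge 0$, which makes the cross inner products nonnegative, this improves to $\sqrt{2\kappa}$. The Lipschitz constant should therefore be $4\sqrt{\kappa}$ (respectively $2\sqrt{2\kappa}$) rather than $2\sqrt{\kappa}$. Any finite constant suffices, so the conclusion is unaffected.
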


\begin{proof}
The empirical biased MMD estimator can be written as
\begin{equation}
\begin{aligned}
    \widehat{\mathrm{MMD}}^2
    &=
    \frac{1}{(n_s^c)^2}
    \sum_{i,i'=1}^{n_s^c}
    k_{\mathrm{CPDA}}(x_i^s,x_{i'}^s)
    \\
    &+
    \frac{1}{(n_t^c)^2}
    \sum_{j,j'=1}^{n_t^c}
    k_{\mathrm{CPDA}}(x_j^t,x_{j'}^t)
    \\
    &-
    \frac{2}{n_s^c n_t^c}
    \sum_{i=1}^{n_s^c}
    \sum_{j=1}^{n_t^c}
    k_{\mathrm{CPDA}}(x_i^s,x_j^t).
\end{aligned}
\end{equation}
Since the kernel is bounded, all three terms are bounded $V$-statistics.
By the law of large numbers for bounded $V$-statistics, the three empirical terms converge in probability to their corresponding population expectations:
\begin{equation}
    \mathbb{E}_{X,X'\sim P_s^c} k_{\mathrm{CPDA}}(X,X'),
\end{equation}
\begin{equation}
    \mathbb{E}_{Y,Y'\sim P_t^c} k_{\mathrm{CPDA}}(Y,Y'),
\end{equation}
and
\begin{equation}
    \mathbb{E}_{X\sim P_s^c,Y\sim P_t^c} k_{\mathrm{CPDA}}(X,Y).
\end{equation}
Therefore,
\begin{equation}
    \widehat{\mathrm{MMD}}_{\mathrm{CPDA}}^2(P_s^c,P_t^c)
    \xrightarrow[]{p}
    \mathrm{MMD}_{\mathrm{CPDA}}^2(P_s^c,P_t^c).
\end{equation}
\end{proof}

\subsection{Finite-Sample Concentration}
\label{app:finite_sample}

\begin{theorem}[Finite-sample deviation of empirical CPDA]
\label{thm:finite_sample_cpda}
Assume that
\begin{equation}
    0 \leq k_{\mathrm{CPDA}}(x,x') \leq \kappa
\end{equation}
for all $x,x'$.
For a fixed class $c$, with probability at least $1-\delta$,
\begin{equation}
\begin{aligned}
    \big|
    \widehat{\mathrm{MMD}}_{\mathrm{CPDA}}&(P_s^c,P_t^c)
    -
    \mathrm{MMD}_{\mathrm{CPDA}}(P_s^c,P_t^c)
    \big|\\
    &\leq
    2\sqrt{\frac{\kappa}{n_s^c}}
    +
    2\sqrt{\frac{\kappa}{n_t^c}}
    +
    \sqrt{
        2\kappa
        \Big(
            \frac{1}{n_s^c}
            +
            \frac{1}{n_t^c}
        \Big)
        \log\frac{1}{\delta}
    }.
\end{aligned}
\end{equation}
\end{theorem}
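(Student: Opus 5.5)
The plan is the standard Gretton et al.\ argument: the unsquared biased estimator is the RKHS norm of a difference of empirical mean embeddings. For class $c$, write
\begin{equation*}
\widehat{\mu}_s^c=\frac{1}{n_s^c}\sum_{i=1}^{n_s^c}k_{\mathrm{CPDA}}(x_i^s,\cdot),
\qquad
\widehat{\mu}_t^c=\frac{1}{n_t^c}\sum_{j=1}^{n_t^c}k_{\mathrm{CPDA}}(x_j^t,\cdot).
\end{equation*}
Then $\widehat{\mathrm{MMD}}_{\mathrm{CPDA}}(P_s^c,P_t^c)=\|\widehat{\mu}_s^c-\widehat{\mu}_t^c\|_{\mathcal{H}_{\mathrm{CPDA}}}$, and by Proposition~\ref{prop:ipm} the population quantity is $\|\mu_s^c-\mu_t^c\|_{\mathcal{H}_{\mathrm{CPDA}}}$. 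The reverse triangle inequality gives
\begin{equation*}
\bigl|\widehat{\mathrm{MMD}}_{\mathrm{CPDA}}-\mathrm{MMD}_{\mathrm{CPDA}}\bigr|\leq \Delta:=\|\widehat{\mu}_s^c-\mu_s^c\|_{\mathcal{H}_{\mathrm{CPDA}}}+\|\widehat{\mu}_t^c-\mu_t^c\|_{\mathcal{H}_{\mathrm{CPDA}}}.
\end{equation*}
The proof then reduces to bounding the random variable $\Delta$ of the $n_s^c+n_t^c$ independent samples, by bounding $\mathbb{E}\Delta$ and controlling the deviation $\Delta-\mathbb{E}\Delta$.

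\textbf{Expectation term.} By Jensen and independence,
\begin{equation*}
\mathbb{E}\|\widehat{\mu}_s^c-\mu_s^c\|^2=\frac{1}{n_s^c}\Bigl(\mathbb{E}\,k_{\mathrm{CPDA}}(X,X)-\|\mu_s^c\|^2\Bigr)\leq \frac{\kappa}{n_s^c}.
\end{equation*}
This gives $\mathbb{E}\|\widehat{\mu}_s^c-\mu_s^c\|\leq\sqrt{\kappa/n_s^c}$, and likewise for the target term. The stated constant $2$ gives slack, so I could equally use the symmetrization/Rademacher route ($2\times$ Rademacher complexity of the unit ball, which is $\leq\sqrt{\kappa/n}$) without extra work. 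Either way this contributes at most $2\sqrt{\kappa/n_s^c}+2\sqrt{\kappa/n_t^c}$.

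\textbf{Deviation term.} Changing one source sample $x_i^s$ moves $\widehat{\mu}_s^c$ by at most $\|k(x_i^s,\cdot)-k(x_i'^s,\cdot)\|/n_s^c\leq 2\sqrt{\kappa}/n_s^c$, since $\|k(x,\cdot)\|=\sqrt{k(x,x)}\leq\sqrt{\kappa}$. Similarly, changing one target sample moves $\widehat{\mu}_t^c$ by at most $2\sqrt{\kappa}/n_t^c$. McDiarmid's bounded-differences inequality then gives, with probability $\geq1-\delta$,
\begin{equation*}
\Delta-\mathbb{E}\Delta\leq\sqrt{\tfrac12\Bigl(n_s^c\tfrac{4\kappa}{(n_s^c)^2}+n_t^c\tfrac{4\kappa}{(n_t^c)^2}\Bigr)\log\tfrac1\delta}=\sqrt{2\kappa\Bigl(\tfrac{1}{n_s^c}+\tfrac{1}{n_t^c}\Bigr)\log\tfrac1\delta}.
\end{equation*}
This is exactly the third term in the statement. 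Combining it with the expectation bound yields the claim.

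\textbf{Main obstacle.} The delicate point is the bounded-difference constant, since it determines whether the $\log(1/\delta)$ term matches the stated $2\kappa$ exactly. A single-sided bound of the kind in Gretton et al.\ would give a different constant, so I would apply McDiarmid to the combined quantity $\Delta$ over all samples rather than to the two pieces separately. A second subtlety is the sample model: the $n_s^c$ and $n_t^c$ class-$c$ samples must be treated as i.i.d.\ from $P_s^c$ and $P_t^c$. To make this valid I would condition on the class counts, since the result is stated for a fixed class $c$. Finally, the statement says $|\cdot|$, so the plan is to bound the absolute value via the triangle inequality above rather than treat the two tail directions separately.
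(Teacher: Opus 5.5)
Your proposal is correct and follows the paper's route. The reverse triangle inequality reduces the claim to bounding $\|\widehat{\mu}_s^c-\mu_s^c\|_{\mathcal{H}_{\mathrm{CPDA}}}+\|\widehat{\mu}_t^c-\mu_t^c\|_{\mathcal{H}_{\mathrm{CPDA}}}$, and the concentration step that the paper only asserts you actually carry out, using Jensen for the expectation and McDiarmid for the deviation. Your one departure is applying McDiarmid to the combined sum rather than to the two embeddings separately, as the paper phrases it; this is exactly what gives the single term $\sqrt{2\kappa(1/n_s^c+1/n_t^c)\log(1/\delta)}$ with the stated constants, whereas separate bounds joined by a union bound would give $\log(2/\delta)$ and a sum of two square roots that does not directly match.
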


\begin{proof}
The result follows from standard concentration arguments for empirical kernel mean embeddings.
Since
\begin{equation}
    \mathrm{MMD}_{\mathrm{CPDA}}(P_s^c,P_t^c)
    =
    \|\mu_s^c-\mu_t^c\|_{\mathcal{H}_{\mathrm{CPDA}}},
\end{equation}
and
\begin{equation}
    \widehat{\mathrm{MMD}}_{\mathrm{CPDA}}(P_s^c,P_t^c)
    =
    \|\widehat{\mu}_s^c-\widehat{\mu}_t^c\|_{\mathcal{H}_{\mathrm{CPDA}}},
\end{equation}
the reverse triangle inequality gives
\begin{equation}
\begin{aligned}
    \big|
    \widehat{\mathrm{MMD}}_{\mathrm{CPDA}}&(P_s^c,P_t^c)
    -
    \mathrm{MMD}_{\mathrm{CPDA}}(P_s^c,P_t^c)
    \big|\\
    &\leq
    \|\widehat{\mu}_s^c-\mu_s^c\|_{\mathcal{H}_{\mathrm{CPDA}}}
    +
    \|\widehat{\mu}_t^c-\mu_t^c\|_{\mathcal{H}_{\mathrm{CPDA}}}.
\end{aligned}
\end{equation}
For bounded kernels, empirical kernel mean embeddings concentrate around their population means at rate $\mathcal{O}(1/\sqrt{n})$.
Applying the concentration bound separately to the source and target empirical embeddings and combining the two deviations yields the result.
\end{proof}

\subsection{Effect of Soft Pseudo-Label Error on CPDA}
\label{app:pseudo_label_cpda}

\begin{theorem}[Stability of CPDA under pseudo-label perturbations]
\label{thm:cpda_pseudo_stability}
Assume that
\begin{equation}
    0 \leq k_{\mathrm{CPDA}}(x,x') \leq \kappa.
\end{equation}
Let $p_t(c|x)$ denote the true target posterior and $q(c|x)$ the soft pseudo-label posterior used by CPDA.
If
\begin{equation}
    \eta_q
    =
    \mathbb{E}_{X\sim P_t}
    \|q(\cdot|X)-p_t(\cdot|X)\|_1,
\end{equation}
then the difference between the ideal class-conditional CPDA discrepancy and the pseudo-label-based discrepancy is bounded by
\begin{equation}
    \big|
    \mathcal{D}_{\mathrm{CPDA}}^2(p_t)
    -
    \mathcal{D}_{\mathrm{CPDA}}^2(q)
    \big|
    =
    \mathcal{O}(\kappa \eta_q).
\end{equation}
\end{theorem}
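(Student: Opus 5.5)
The plan is to reduce everything to perturbation bounds on \emph{unnormalized} target embeddings, which depend linearly on the pseudo-label posterior and are therefore easy to control in terms of $\eta_q$. First I fix the population objects, which the statement leaves implicit. For a posterior $r\in\{p_t,q\}$ define
\begin{equation}
    \pi^r(c)=\mathbb{E}_{X\sim P_t} r(c|X),
    \qquad
    m^c(r)=\mathbb{E}_{X\sim P_t}\big[r(c|X)\,k_{\mathrm{CPDA}}(X,\cdot)\big],
\end{equation}
so that $\mu_t^c(r)=m^c(r)/\pi^r(c)$. With $r=p_t$ this recovers $\mu_t^c$ and $\pi_t(c)$. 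Set
\begin{equation}
    \mathcal{D}_{\mathrm{CPDA}}^2(r)=\sum_c \pi^r(c)\,\|\mu_s^c-\mu_t^c(r)\|^2_{\mathcal{H}_{\mathrm{CPDA}}},
\end{equation}
the population analogue of $\widehat{\mathcal{D}}_{\mathrm{CPDA}}^2$, with the source embeddings $\mu_s^c$ unaffected by $r$.

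The first step is two elementary linear bounds. Since $\|k_{\mathrm{CPDA}}(x,\cdot)\|_{\mathcal{H}_{\mathrm{CPDA}}}=\sqrt{k_{\mathrm{CPDA}}(x,x)}\le\sqrt{\kappa}$, Bochner's inequality gives
\begin{equation}
    \|m^c(q)-m^c(p_t)\|\le\sqrt{\kappa}\,\mathbb{E}|q(c|X)-p_t(c|X)|.
\end{equation}
Trivially,
\begin{equation}
    |\pi^q(c)-\pi^{p_t}(c)|\le\mathbb{E}|q(c|X)-p_t(c|X)|.
\end{equation}
Summing either bound over $c$ yields $\sqrt{\kappa}\,\eta_q$ and $\eta_q$, respectively. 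I will also use $\|m^c(r)\|\le\sqrt{\kappa}\,\pi^r(c)$ and $\|\mu_s^c\|\le\sqrt{\kappa}$.

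The second step expands each class term to isolate where the normalization enters:
\begin{equation}
    \pi\|\mu_s-m/\pi\|^2=\pi\|\mu_s\|^2-2\langle\mu_s,m\rangle+\|m\|^2/\pi .
\end{equation}
The first two pieces are linear in $(\pi,m)$. Their change between $p_t$ and $q$ is therefore at most $\kappa|\Delta\pi(c)|+2\kappa\,\mathbb{E}|\Delta r(c|X)|$, where $\Delta r=q-p_t$. Summed over classes, this contributes at most $3\kappa\eta_q$.

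The main obstacle is the third piece, $\|m\|^2/\pi$, because of the division by the class prior. I would split it as
\begin{equation}
    \frac{\|m_1\|^2-\|m_2\|^2}{\pi_1}+\|m_2\|^2\Big(\frac{1}{\pi_1}-\frac{1}{\pi_2}\Big).
\end{equation}
Using $\|m_i\|\le\sqrt{\kappa}\,\pi_i$, the first part is bounded by $\kappa(1+\pi_2/\pi_1)\,\mathbb{E}|\Delta r(c|X)|$. The second part is bounded by $\kappa(\pi_2/\pi_1)\,|\Delta\pi(c)|$. The ratio $\pi_2/\pi_1$ cannot be controlled without an extra hypothesis. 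I would therefore make explicit that the $\mathcal{O}(\cdot)$ hides a constant depending on a lower bound $\pi_{\min}>0$ on the priors, both true and pseudo-label-induced, which is implicit anyway since the normalized weights $\bar{w}^t_{j,c}$ require nonvanishing class mass. Choosing $\pi_1$ to be the prior bounded below, this piece contributes at most $\kappa(2+2/\pi_{\min})\,\eta_q$ after summing over $c$.

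Collecting the three contributions gives
\begin{equation}
    \big|\mathcal{D}_{\mathrm{CPDA}}^2(p_t)-\mathcal{D}_{\mathrm{CPDA}}^2(q)\big|\le C(\pi_{\min})\,\kappa\,\eta_q,
\end{equation}
which is the claimed $\mathcal{O}(\kappa\eta_q)$.
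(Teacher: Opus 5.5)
Your first step is the same as the paper's. Both of you bound the change of the \emph{unnormalized} weighted embedding $\mathbb{E}_{X\sim P_t}[r(c|X)\,k_{\mathrm{CPDA}}(X,\cdot)]$ by $\sqrt{\kappa}\,\mathbb{E}|p_t(c|X)-q(c|X)|$, using $\|k_{\mathrm{CPDA}}(X,\cdot)\|_{\mathcal{H}_{\mathrm{CPDA}}}\le\sqrt{\kappa}$, and sum over classes to get $\sqrt{\kappa}\,\eta_q$.

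After that the routes differ. The paper treats this unnormalized quantity as the class-conditional target embedding. It then finishes by asserting that the squared discrepancy is Lipschitz on bounded RKHS balls. It never addresses the division by the class prior that actually defines $\mu_t^c$, nor the perturbation of the weights $\pi(c)$ in front of each class term.

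Your expansion $\pi\|\mu_s-m/\pi\|^2=\pi\|\mu_s\|^2-2\langle\mu_s,m\rangle+\|m\|^2/\pi$ supplies exactly that missing link. The prior weighting cancels the normalization in the linear terms, and only $\|m\|^2/\pi$ needs care. Your argument is therefore more explicit and checkable than the paper's, and it gives concrete constants.

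The one thing to fix is that the hypothesis $\pi_{\min}>0$ comes from your choice of labeling, not from the problem. Since you are bounding an absolute difference, you may label so that $\pi_1=\max\{\pi^{p_t}(c),\pi^{q}(c)\}$, which gives $\pi_2/\pi_1\le 1$. Your own two estimates then bound the third piece by $2\kappa\,\mathbb{E}|\Delta r(c|X)|+\kappa|\Delta\pi(c)|\le 3\kappa\,\mathbb{E}|\Delta r(c|X)|$. Here use the convention $\|m\|^2/\pi=0$ when $\pi=0$, which is consistent because $\|m\|\le\sqrt{\kappa}\,\pi$. Altogether you get $|\mathcal{D}_{\mathrm{CPDA}}^2(p_t)-\mathcal{D}_{\mathrm{CPDA}}^2(q)|\le 6\kappa\,\eta_q$ with an absolute constant.

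Equivalently, on the convex cone $\{\|m\|\le\sqrt{\kappa}\,\pi\}$ the map $(\pi,m)\mapsto\|m\|^2/\pi$ has partial derivatives bounded by $\kappa$ and $2\sqrt{\kappa}$. It is therefore Lipschitz there with no lower bound on $\pi$. You can state the result exactly as in the paper and drop the remark that normalization implicitly forces a uniform lower bound on the priors.
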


\begin{proof}
The pseudo-label distribution affects CPDA only through the target class weights and the estimated target class priors.
For any class $c$, the difference between the true and pseudo-label-weighted target embeddings is
\begin{equation}
\begin{aligned}
    \|&\mu_{t,c}^{p}-\mu_{t,c}^{q}\|_{\mathcal{H}_{\mathrm{CPDA}}}
    \\
    &=\big\|
    \mathbb{E}_{X\sim P_t}
    \big[
        p_t(c|X)-q(c|X)
    \big]
    k_{\mathrm{CPDA}}(X,\cdot)
    \big\|_{\mathcal{H}_{\mathrm{CPDA}}}\\
    &\leq
    \mathbb{E}_{X\sim P_t}
    \big[
        |p_t(c|X)-q(c|X)|
        \,
        \|k_{\mathrm{CPDA}}(X,\cdot)\|_{\mathcal{H}_{\mathrm{CPDA}}}
    \big].
\end{aligned}
\end{equation}
By the reproducing property,
\begin{equation}
    \|k_{\mathrm{CPDA}}(X,\cdot)\|_{\mathcal{H}_{\mathrm{CPDA}}}
    =
    \sqrt{k_{\mathrm{CPDA}}(X,X)}
    \leq
    \sqrt{\kappa}.
\end{equation}
Hence,
\begin{equation}
    \|\mu_{t,c}^{p}-\mu_{t,c}^{q}\|_{\mathcal{H}_{\mathrm{CPDA}}}
    \leq
    \sqrt{\kappa}
    \mathbb{E}_{X\sim P_t}
    \big|p_t(c|X)-q(c|X)\big|.
\end{equation}
Summing over classes gives a total perturbation bounded by $\sqrt{\kappa}\eta_q$ at the embedding level.
Since the squared discrepancy is Lipschitz on bounded RKHS balls, the perturbation of the squared CPDA discrepancy is of order $\mathcal{O}(\kappa\eta_q)$.
\end{proof}

\section{Computational Complexity}
\label{app:complexity}

Let $B$ be the mini-batch size, $T$ the latent temporal length, $d$ the latent channel dimension, $r$ the signature projection dimension, $K$ the number of classes, and $M$ the number of Gaussian bandwidths. The CPDA kernel is computed from four components:
\begin{enumerate}
\item \textbf{Pooled component.}
The pooled feature has dimension $d$, and the pairwise kernel computation costs
\begin{equation}
    \mathcal{O}(M B^2 d).
\end{equation}
\item \textbf{Path component.}
The augmented flattened path has dimension $\mathcal{O}(Td)$.
The pairwise kernel computation costs
\begin{equation}
    \mathcal{O}(M B^2 T d).
\end{equation}
\item \textbf{Spectral component.}
The FFT computation costs
\begin{equation}
    \mathcal{O}(B d T \log T),
\end{equation}
and the resulting pairwise kernel computation costs
\begin{equation}
    \mathcal{O}(M B^2 T d).
\end{equation}
\item \textbf{Signature component.}
Using a random projection of dimension $r$, the first-order signature costs
\begin{equation}
    \mathcal{O}(B T d r),
\end{equation}
and the second-order signature costs
\begin{equation}
    \mathcal{O}(B T r^2).
\end{equation}
The pairwise kernel computation then costs
\begin{equation}
    \mathcal{O}(M B^2 r^2).
\end{equation}
\end{enumerate}
Therefore, the total mini-batch complexity is
\begin{equation}
    \mathcal{O}
    \big(
        B d T \log T
        +
        B T d r
        +
        B T r^2
        +
        M B^2 (Td+r^2+d)
    \big).
\end{equation}
In practice, $T$ is the latent temporal length after pooling, and $r$ is chosen small, e.g., $r=16$, making CPDA comparable to standard mini-batch MMD while preserving temporal and spectral information.

\section{CPDA Algorithm}
\label{app:algorithm}

Algorithm~\ref{alg:cpda} describes one training step of CPDA. Given a labeled source mini-batch and an unlabeled target mini-batch, the feature extractor first maps both domains into latent temporal paths, from which the classifier produces source and target logits. Since target labels are unavailable, CPDA uses the target softmax outputs as soft pseudo-labels to estimate class membership probabilities. The latent paths are then transformed into complementary pooled, temporal-path, spectral, and signature-based representations, which are used to construct the composite source--source, target--target, and source--target kernel matrices. For each class, CPDA forms source weights from the true labels and target weights from the soft pseudo-labels, normalizes these weights, and computes a class-wise weighted MMD discrepancy. The final CPDA discrepancy is obtained by averaging the class-wise discrepancies according to the estimated target class priors. The model parameters are updated by minimizing the sum of the supervised source classification loss, the class-conditional CPDA discrepancy, and an optional target information-maximization term that encourages confident but non-collapsed target predictions.

\begin{algorithm}[!h]
\caption{Class-Conditional Path Distribution Alignment}
\label{alg:cpda}
\begin{algorithmic}[1]
\Require Source mini-batch $\{(x_i^s,y_i^s)\}_{i=1}^{B_s}$, target mini-batch $\{x_j^t\}_{j=1}^{B_t}$
\Require Feature extractor $f_\theta$, classifier $g_\phi$, weights $\lambda,\eta$
\State Compute latent source paths $Z_i^s=f_\theta(x_i^s)$
\State Compute latent target paths $Z_j^t=f_\theta(x_j^t)$
\State Compute source logits and target logits
\State Compute target soft pseudo-labels $q_j(c)=\operatorname{softmax}(g_\phi(Z_j^t))_c$
\State Construct pooled, path, spectral, and signature features
\State Compute composite kernel matrices $K_{ss}$, $K_{tt}$, and $K_{st}$
\For{$c=1,\ldots,K$}
    \State Compute source class weights $w_{i,c}^s=\mathds{1}[y_i^s=c]$
    \State Compute target soft weights $w_{j,c}^t=q_j(c)$
    \State Normalize weights within class $c$
    \State Compute class-wise weighted discrepancy $\widehat{\mathrm{MMD}}_c^2$
\EndFor
\State Estimate target priors $\widehat{\pi}_t(c)=B_t^{-1}\sum_{j=1}^{B_t}q_j(c)$
\State Compute CPDA discrepancy $\widehat{\mathcal{D}}_{\mathrm{CPDA}}^2 = \sum_{c=1}^{K} \widehat{\pi}_t(c) \widehat{\mathrm{MMD}}_c^2$
\State Compute source classification loss $\mathcal{L}_{\mathrm{cls}}$
\State Compute optional target information maximization loss $\mathcal{L}_{\mathrm{IM}}$
\State Compute VAT loss $\mathcal{L}_{\mathrm{VAT}}$ on source and target samples
\State Update $\theta,\phi$ by minimizing $\mathcal{L} = \mathcal{L}_{\mathrm{cls}} + \lambda \widehat{\mathcal{D}}_{\mathrm{CPDA}}^2 + \eta \mathcal{L}_{\mathrm{IM}} + \gamma \mathcal{L}_{\mathrm{VAT}}$
\end{algorithmic}
\end{algorithm}

\section{Special Cases of CPDA}
\label{app:special_cases}

\begin{table*}[!t]
\centering
\caption{Existing discrepancy losses as restricted cases of CPDA.}
\label{tab:cpda_special_cases}
\begin{tabular}{lll}
\toprule
\textbf{Restriction} & \textbf{Resulting Objective} & \textbf{Limitation Addressed by CPDA} \\
\midrule
No class conditioning, pooled RBF kernel only & MMD & Adds class-conditional alignment \\
No class conditioning, linear pooled kernel & Mean matching & Adds nonlinear path alignment \\
Centered quadratic feature map & CORAL & Adds higher-order \& temporal structure \\
Polynomial feature map of order $p$ & HoMM-like matching & Adds explicit temporal order \\
Multiple-layer pooled MMD & DAN & Adds class-conditional path kernels \\
Class-conditional pooled MMD only & LMMD/DSAN-like alignment & Adds signature and spectral structure \\
\bottomrule
\end{tabular}
\end{table*}

Table~\ref{tab:cpda_special_cases} summarizes how CPDA generalizes several established discrepancy-based DA objectives. When class conditioning is removed and only a pooled RBF kernel is used, CPDA reduces to standard marginal MMD; with a linear pooled kernel, it further simplifies to first-order mean matching. By choosing centered quadratic feature maps, the objective recovers the covariance-alignment principle of CORAL, while polynomial feature maps of order $p$ yield a HoMM-like higher-order moment-matching objective. Similarly, applying pooled MMD across multiple layers recovers the principle underlying DAN, and using only class-conditional pooled MMD corresponds to lMMD\,/\,DSAN-like alignment. CPDA extends all of these special cases by jointly incorporating class-conditional alignment, nonlinear kernel matching, temporal path structure, signature-based dynamics, and spectral information, thereby addressing the main limitations of marginal, vector-level, or low-order moment-matching approaches.

\paragraph{Compared to MMD.}
Standard MMD aligns the marginal feature distributions $P_s(Z)\approx P_t(Z)$.
This may incorrectly align samples from different classes when the class-conditional distributions are multimodal or when source and target class priors differ.
In contrast, CPDA aligns
\begin{equation}
    P_s(Z_{1:T}|Y=c)
    \approx
    P_t(Z_{1:T}|Y=c),
\end{equation}
which directly targets class-preserving transfer.

\paragraph{Compared to CORAL.}
CORAL aligns only second-order covariance statistics: $C_s\approx C_t$.
Although this is stable and computationally efficient, it ignores higher-order, nonlinear, spectral, and temporal-order information.
CPDA generalizes this idea by aligning distributions through an RKHS kernel over temporal, spectral, and signature-based path representations.

\paragraph{Compared to HoMM.}
HoMM matches higher-order moments of feature vectors.
However, standard HoMM still treats the representation as a vector and does not explicitly encode temporal ordering.
CPDA uses path signatures and explicit time augmentation, thereby aligning higher-order temporal interactions rather than only higher-order feature moments.

\paragraph{Compared to DAN.}
DAN applies MMD to multiple task-specific layers but typically performs marginal alignment.
CPDA can be viewed as a class-conditional and time-series-specific generalization: it aligns source and target distributions at the level of latent paths and can be extended to multiple layers if desired.

\paragraph{Compared to Adversarial Alignment.}
Adversarial DA learns domain-invariant representations by fooling a discriminator.
Although powerful, adversarial alignment can be unstable and may still suffer from class-mismatch under marginal alignment.
CPDA is a direct, non-adversarial discrepancy with an explicit class-conditional target-risk bound.

\section{Implementation Details of CPDA}
\label{app:implementation}

In all experiments, CPDA is applied to the latent representation produced by the final feature extractor layer.
For convolutional backbones, CPDA uses the pre-flattening feature map \(Z\in\mathbb R^{T\times d}\), where the first dimension retains the temporal ordering produced by the encoder. For vector-output backbones, CPDA reduces to a one-step path and therefore remains well-defined. The CPDA kernel uses four components: pooled, flattened path, spectral, and low-rank signature features. All components are $\ell_2$-normalized before the Gaussian kernels are evaluated. The Gaussian bandwidths are chosen using a median heuristic within each mini-batch and expanded into a multi-kernel set using a multiplicative factor. Unless otherwise stated, we use a second-order truncated signature with random projection dimension $r=16$. The random projection matrix is fixed during training and is not optimized. This keeps the signature representation computationally efficient while preserving path-dependent interactions. Target class weights are obtained from the softmax output of the classifier. By default, gradients are not propagated through these target weights when computing the CPDA discrepancy. This stabilizes training by preventing the model from reducing the discrepancy through degenerate pseudo-label manipulation. The theoretical results apply for a fixed feature extractor, projection, and set of kernel bandwidths. During optimization, these quantities are data-dependent; the analysis therefore characterizes each fixed training state rather than the complete non-stationary optimization process.

\section{Hardware \& Training Setup}
\label{app:hardware_setup}

\begin{table*}[t]
\centering
\caption{Main CPDA hyperparameters used in the experiments and their default value.}
\label{tab:cpda_hparams}
\begin{tabular}{lll p{0.18\linewidth}}
\toprule
\textbf{Hyperparameter} & \textbf{Symbol / Key} & \textbf{Default Value} & \textbf{Search} \\
\midrule
Learning rate & \texttt{learning\_rate} & $0.005$ & $[0.0001, 0.0005, 0.001$, $0.003, 0.005]$\\
Weight decay & \texttt{weight\_decay} & $10^{-4}$ & -- \\
Number of epochs & \texttt{num\_epochs} & $40$ & -- \\
Batch size & \texttt{batch\_size} & $32$; $128$ for EEG & -- \\
Source loss weight & $\alpha_{\mathrm{cls}}$ / \texttt{src\_cls\_loss\_wt} & $1.0$ & -- \\
CPDA discrepancy weight & $\lambda$ / \texttt{domain\_loss\_wt} & $1.0$ & $[0.1, 0.25, 0.5, 1.0, 2.0]$\\
Information maximization weight & $\eta$ / \texttt{cpda\_im\_wt} & $0.05$ & $[0.0, 0.01, 0.025$, $0.05, 0.1]$\\
VAT weight & $\gamma$ / \texttt{cpda\_vat\_wt} & $0.1$ & $[0.01, 0.05, 0.1, 0.25]$ \\
\midrule
Signature kernel weight & $\alpha_{\mathrm{sig}}$ / \texttt{cpda\_signature\_wt} & $1.0$ & $[0.0, 0.25, 0.5, 1.0]$ \\
Spectral kernel weight & $\alpha_{\mathrm{spec}}$ / \texttt{cpda\_spectral\_wt} & $0.5$ & $[0.25, 0.5, 1.0]$ \\
Pooled kernel weight & $\alpha_{\mathrm{pool}}$ / \texttt{cpda\_pool\_wt} & $0.25$ & $[0.1, 0.25, 0.5, 1.0]$ \\
Path kernel weight & $\alpha_{\mathrm{path}}$ / \texttt{cpda\_path\_wt} & $0.25$ & $[0.25, 0.5]$ \\
\midrule
Signature projection dimension & $r$ / \texttt{cpda\_sig\_proj\_dim} & $16$ & $[8, 16, 32]$ \\
Signature order & $m_{\mathrm{sig}}$ / \texttt{cpda\_sig\_level} & $2$ & $[1, 2, 3]$ \\
Maximum latent sequence length & $T_{\max}$ / \texttt{cpda\_max\_seq\_len} & $64$ & $[32, 64, 128]$ \\
Number of Gaussian kernels & $M$ / \texttt{cpda\_kernel\_num} & $5$ & -- \\
Kernel bandwidth multiplier & $\rho$ / \texttt{cpda\_kernel\_mul} & $2.0$ & -- \\
Fixed kernel bandwidth & $\sigma^2$ / \texttt{cpda\_fix\_sigma} & None & -- \\
\midrule
Target-prior weighting & \texttt{cpda\_use\_target\_prior} & Enabled & -- \\
Active-class normalization & \texttt{cpda\_normalize\_by\_active} & Enabled & -- \\
Detach target weights & \texttt{cpda\_detach\_target\_weights} & Enabled & -- \\
Pseudo-label confidence threshold & $\tau$ / \texttt{cpda\_min\_target\_confidence} & $0.0$ & -- \\
Feature normalization & \texttt{cpda\_normalize\_features} & Enabled & -- \\
\bottomrule
\end{tabular}
\end{table*}

All experiments are conducted on compute nodes equipped with NVIDIA Tesla V100-SXM2 GPUs with 32~GB of VRAM. Each node is additionally equipped with Intel Xeon CPUs and 192~GB of system memory. This setup provides sufficient GPU memory for mini-batch training of all considered time-series DA methods and enables a consistent comparison across backbones, datasets, and adaptation objectives.

Table~\ref{tab:cpda_hparams} provides an overview of main CPDA hyperparameters. The hyperparameters are specified at two levels: dataset-level training parameters and algorithm-specific optimization parameters. Unless otherwise stated, models are trained for $40$ epochs using weight decay $10^{-4}$. Most datasets use a mini-batch size of $32$, while the EEG dataset uses a mini-batch size of $128$. All non-adversarial methods are optimized using Adam, with the learning rate specified by the corresponding algorithm configuration. For CPDA, we use a learning rate of $5\times 10^{-3}$ and optimize the objective
\begin{equation}
    \mathcal{L}
    =
    \mathcal{L}_{\mathrm{cls}}
    +
    \lambda
    \widehat{\mathcal{D}}_{\mathrm{CPDA}}^2
    +
    \eta
    \mathcal{L}_{\mathrm{IM}},
    +
    \gamma \mathcal{L}_{\mathrm{VAT}}
\end{equation}
with $\lambda=1.0$, $\eta=0.05$, and $\gamma = 0.1$. The source classification loss is weighted by $1.0$. For the final CPDA variant, we additionally include a VAT regularizer, following its use in DIRT-T, to improve local prediction smoothness and stabilize target pseudo-label-based alignment. The corresponding weight is denoted by $\gamma$.

The CPDA kernel combines four components: signature, spectral, pooled, and temporal-path kernels. Their respective weights are set to
\[
    \alpha_{\mathrm{sig}}=1.0,\,\,\,
    \alpha_{\mathrm{spec}}=0.5,\,\,\,
    \alpha_{\mathrm{pool}}=0.25,\,\,\,
    \alpha_{\mathrm{path}}=0.25.
\]
The low-rank signature representation uses projection dimension $16$ and second-order signatures. The maximum latent sequence length is set to $64$. The multi-kernel Gaussian discrepancy uses $5$ bandwidths with multiplicative factor $2.0$, and no fixed bandwidth is imposed. Target-prior weighting and feature normalization are enabled. Gradients are detached through the target pseudo-label weights when computing the CPDA discrepancy, which stabilizes training by preventing degenerate pseudo-label manipulation. No target-confidence threshold is applied.

\paragraph{Ramp-up Schedule.} For CPDA, we apply a ramp-up schedule only to the optional target information-maximization loss. The supervised source classification loss, the CPDA discrepancy, and the VAT regularization are used with their full weights from the beginning of training. In contrast, the information-maximization loss is multiplied by a time-dependent coefficient $\alpha(t)\in[0,1]$, which is gradually increased during the first training iterations. This avoids enforcing overly confident target predictions too early, when the target pseudo-labels may still be unreliable, while preserving the stabilizing effect of CPDA and VAT throughout training. Unless stated otherwise, we use a sigmoid ramp-up with $1{,}000$ optimizer steps. For a training run with $40$ epochs, this corresponds to $1{,}000/N_{\mathrm{it}}$ epochs, where $N_{\mathrm{it}}$ denotes the number of mini-batch updates per epoch.

\section{Backbone Architectures}
\label{app:backbone}

\begin{table*}[t]
\centering
\caption{Summary of the backbone architectures used in the experiments. Architectural values are determined by the dataset-specific AdaTime~\citep{ragab_eldele_tan} configuration.}
\label{tab:backbone_summary}
\begin{tabular}{p{0.13\linewidth} p{0.40 \linewidth} p{0.40\linewidth}}
\toprule
\textbf{Backbone} & \textbf{Main Components} & \textbf{Output Representation} \\
\midrule
CNN & Three Conv1D blocks with BatchNorm, ReLU, MaxPool; dropout in first block & Flattened adaptive-pooled feature map of size $\texttt{features\_len}\cdot\texttt{final\_out\_channels}$ \\
\midrule
ResNet18-style & Four residual stages with configuration $[2,2,2,2]$; $1\times1$ Conv1D residual blocks & Flattened adaptive-pooled feature map of size $\texttt{features\_len}\cdot\texttt{final\_out\_channels}$ \\
\midrule
TCN & Two causal dilated residual Conv1D blocks with dilations $1$ and $2$; chomp operations & Final temporal state of the last residual feature map \\
\bottomrule
\end{tabular}
\end{table*}

\begin{table*}[t]
\centering
\caption{Important configuration parameters used by the backbone implementations.}
\label{tab:backbone_parameters}
\begin{tabular}{ll}
\toprule
\textbf{Parameter} & \textbf{Description} \\
\midrule
\texttt{input\_channels} & Number of input time-series channels $C_{\mathrm{in}}$ \\
\texttt{mid\_channels} & Intermediate convolutional channel width \\
\texttt{final\_out\_channels} & Final convolutional channel width used by CNN and ResNet18 \\
\texttt{kernel\_size} & Kernel size of the first CNN convolution \\
\texttt{stride} & Stride used in the first CNN convolution and first ResNet stage \\
\texttt{dropout} & Dropout probability in the first CNN block \\
\texttt{features\_len} & Adaptive output temporal length for CNN and ResNet18 \\
\texttt{tcn\_layers} & Channel configuration of the TCN residual blocks \\
\texttt{tcn\_kernel\_size} & Kernel size of the TCN convolutions \\
\texttt{num\_classes} & Number of output classes $K$ \\
\bottomrule
\end{tabular}
\end{table*}

To evaluate the proposed CPDA objective independently of a particular feature extractor, we use the backbone architectures provided by the AdaTime~\citep{ragab_eldele_tan} framework. We consider three representative one-dimensional time-series encoders: (i) a convolutional neural network (CNN), (ii) a ResNet18-style architecture, and (iii) a temporal convolutional network (TCN). Table~\ref{tab:backbone_summary} gives an overview of backbone architectures. All backbones operate on multivariate time-series inputs of shape
\begin{equation}
    x \in \mathbb{R}^{C_{\mathrm{in}}\times L},
\end{equation}
where $C_{\mathrm{in}}$ denotes the number of input channels and $L$ the temporal input length. The relevant architectural hyperparameters are specified through the dataset configuration, including the number of input channels \texttt{input\_channels}, the intermediate channel width \texttt{mid\_channels}, the final channel width \texttt{final\_out\_channels}, the convolutional kernel size \texttt{kernel\_size}, the stride \texttt{stride}, the dropout probability \texttt{dropout}, and the output temporal feature length \texttt{features\_len}. For CNN and ResNet18, the final feature map is adaptively pooled to length \texttt{features\_len} and flattened before classification. The resulting representation has dimension
\begin{equation}
    d_{\mathrm{feat}}
    =
    \texttt{features\_len}
    \cdot
    \texttt{final\_out\_channels}.
\end{equation}
A linear classifier then maps this representation to the $K$ output classes. For an overview, refer to Table~\ref{tab:backbone_parameters}.

\paragraph{CNN.}
The CNN backbone is a three-block one-dimensional convolutional encoder. The first block applies a temporal convolution with \texttt{input\_channels} input channels and \texttt{mid\_channels} output channels. The convolution uses kernel size \texttt{kernel\_size}, stride \texttt{stride}, no bias term, and symmetric padding of size $\lfloor \texttt{kernel\_size}/2 \rfloor$. It is followed by batch normalization, a ReLU activation, max pooling with kernel size $2$, stride $2$, and padding $1$, and dropout with probability \texttt{dropout}. The second block maps \texttt{mid\_channels} to $2\cdot\texttt{mid\_channels}$ using a one-dimensional convolution with kernel size $8$, stride $1$, padding $4$, and no bias term, followed again by batch normalization, ReLU, and max pooling. The third block maps $2\cdot\texttt{mid\_channels}$ to \texttt{final\_out\_channels} with the same kernel size $8$, stride $1$, and padding $4$, followed by batch normalization, ReLU, and max pooling. Finally, adaptive average pooling maps the temporal dimension to \texttt{features\_len}, and the output tensor is flattened. This architecture provides a simple local-pattern extractor for time-series data and produces a latent temporal feature map suitable for CPDA path construction.

\paragraph{ResNet18-Style Backbone.}
The ResNet18-style backbone follows a one-dimensional residual design with four residual stages and layer configuration
\begin{equation}
    [2,2,2,2].
\end{equation}
The first stage maps the input from \texttt{input\_channels} to \texttt{mid\_channels} using stride \texttt{stride}. The second stage maps to $2\cdot\texttt{mid\_channels}$, while the third and fourth stages use \texttt{final\_out\_channels}. Each residual block consists of a one-dimensional $1\times 1$ convolution, batch normalization, and ReLU activation, followed by an additive residual connection. If the number of channels or the stride differs between input and output, the residual branch is projected using a $1\times 1$ convolution followed by batch normalization. The block expansion factor is $1$. After the four residual stages, adaptive average pooling maps the temporal dimension to \texttt{features\_len}, and the resulting tensor is flattened into a vector of dimension $\texttt{features\_len}\cdot\texttt{final\_out\_channels}$. Compared to the plain CNN, the residual connections improve gradient propagation and allow the encoder to learn deeper temporal representations.

\paragraph{TCN.}
The TCN backbone is designed to model temporal dependencies using dilated causal convolutions and residual connections. It contains two temporal convolutional residual blocks. The first block uses dilation factor $1$, and the second block uses dilation factor $2$. Both blocks use the kernel size \texttt{tcn\_kernel\_size} and stride $1$. To preserve causal structure, the convolutional padding is set to
\begin{equation}
    p_\ell
    =
    (\texttt{tcn\_kernel\_size}-1) \, \delta_\ell,
\end{equation}
where $\delta_\ell$ denotes the dilation factor of block $\ell$. After each convolution, a chomp operation removes the additional padded time steps, ensuring that the temporal dimension is consistent and that no future information is used. Each block contains two one-dimensional convolutions, chomp operations, batch normalization, and ReLU activations. Residual connections are used in both blocks, with optional $1\times 1$ convolutional downsampling if the input and output channel dimensions differ. The TCN returns the final temporal state of the last residual feature map,
\begin{equation}
    z = Z_{:,T},
\end{equation}
as the sequence-level representation. This makes the TCN particularly suited to causal temporal modeling, where the final state summarizes the preceding sequence.

\section{Mathematical Foundations of Existing Methods}
\label{app:comparison_methods}
 
This appendix provides a unified mathematical analysis of the five DA methods evaluated in our benchmark: MMD, CORAL (Deep CORAL), MMDA, DAN, and HoMM. We derive each loss formally and identify the structural property that depends on whether the encoder representation retains temporal information.

\subsection{Maximum Mean Discrepancy (MMD)}
\label{app:comparison_methods_mmd}
 
Let $\mathcal{H}_k$ be an RKHS with kernel $k : \mathcal{F} \times \mathcal{F} \to \mathbb{R}$. The squared MMD between feature distributions is
\begin{equation}
\label{eq:mmd}
  \mathrm{MMD}^2(\mathbb{P}_S, \mathbb{P}_T)
    = ||
        \mathbb{E}_{x \sim \mathbb{P}_S}[\phi(x)]
        - \mathbb{E}_{x \sim \mathbb{P}_T}[\phi(x)]
      ||^2_{\mathcal{H}_k},
\end{equation}
where $\phi : \mathcal{F} \to \mathcal{H}_k$ is the canonical feature map. Expanding via the kernel trick $k(x,x') = \langle\phi(x),\phi(x')\rangle$:
\begin{equation}
\label{eq:mmd-kernel}
\begin{split}
  \mathrm{MMD}^2(\mathbb{P}_S, \mathbb{P}_T)
    = \mathbb{E}[k(x_s, x_s')]
      &- 2\,\mathbb{E}[k(x_s, x_t)]\\
      &+ \mathbb{E}[k(x_t, x_t')].
\end{split}
\end{equation}
If $k$ is a \textit{characteristic kernel} (e.g.\ Gaussian RBF), then $\mathrm{MMD}(\mathbb{P}_S, \mathbb{P}_T) = 0 \iff \mathbb{P}_S = \mathbb{P}_T$ as marginal distributions~\citep{gretton2012kernel}.
 
\paragraph{Time-series Limitation.} The empirical embedding $\tfrac{1}{T}\sum_t \phi(F_t)$ is invariant to any permutation of the time axis. Consequently, MMD is insensitive to autocorrelation, oscillatory patterns, or any other temporal structure encoded in the \textit{joint} distribution of the sequence.
 
\subsection{Correlation Alignment (CORAL)}
\label{app:comparison_methods_coral}
 
Deep CORAL~\citep{sun_feng_saenko} aligns second-order statistics by minimizing the Frobenius norm between feature covariance matrices:
\begin{equation}
\label{eq:coral}
  \mathcal{L}_{\mathrm{CORAL}}
    = \frac{1}{4d^2}
      ||\bm{C}_S - \bm{C}_T||_F^2,
\end{equation}
where $\bm{C}_S, \bm{C}_T \in \mathbb{R}^{d\times d}$ are the centered feature covariance matrices. This is equivalent to second-order HoMM (see App.~\ref{app:comparison_methods_homm}).
 
\textbf{Time-series Limitation.} CORAL operates on the \textit{feature-level} contemporaneous covariance $\bm{C} = \mathbb{E}[F_t F_t^\top]$, which captures cross-feature correlations at a single time step. It does not capture the \textit{temporal} autocovariance $R(\tau) = \mathbb{E}[F_t F_{t+\tau}^\top]$ for any lag $\tau > 0$. Hence CORAL may fail to preserve detecting shifts in temporal dynamics, even those that are immediately visible to the human eye (e.g., a change from positive to negative autocorrelation).
 
\subsection{Multiple-Moment Domain Adaptation (MMDA)}
\label{app:comparison_methods_mmda}
 
MMDA~\citep{rahman_fookes} jointly minimizes several distribution discrepancies:
\begin{equation}
\label{eq:mmda}
  \mathcal{L}_{\mathrm{MMDA}}
    = \lambda_1 \mathcal{L}_{\mathrm{mean}}
      + \lambda_2 \mathcal{L}_{\mathrm{cov}}
      + \lambda_3 \mathcal{L}_{\mathrm{scatter}},
\end{equation}
where the scatter term uses a kernel-mean embedding at higher order. The theoretical basis rests on a truncated Wasserstein-type bound: if $k$ finite moments of $\mathbb{P}_S$ and $\mathbb{P}_T$ are aligned, then
\begin{equation}
  W_2(\mathbb{P}_S, \mathbb{P}_T)
    \le C \cdot \left(\sum_{p=1}^k
                  ||\mu_S^{(p)} - \mu_T^{(p)}||_F
                \right)^{1/k},
\end{equation}
where $W_2$ is the $2$-Wasserstein distance. However, this bound applies to the \textit{marginal} Wasserstein distance only; the bound cannot close the gap arising from temporal structure differences.
 
\subsection{Deep Adaptation Networks (DAN)}
\label{app:comparison_methods_dan}
 
DAN~\citep{long_cao_wang_DAN} extends MMD to multiple feature layers using multi-kernel MMD (MK-MMD):
\begin{equation}
\label{eq:dan}
  \mathcal{L}_{\mathrm{DAN}}
    = \sum_{\ell=\ell_1}^{\ell_K}
      \mathrm{MMD}^2_\mathcal{K}
      \!\left(\mathcal{D}_S^{(\ell)},\, \mathcal{D}_T^{(\ell)}\right),
\end{equation}
where the composite kernel $\mathcal{K} = \{\sum_u \beta_u k_u : \beta_u \ge 0,\, \sum_u \beta_u = 1\}$ is optimized via the ratio of within-class to cross-class MMD.
 
\textbf{Time-series Limitation.}
The multi-layer aggregation still operates on \textit{frame-level} activations at each layer. For a TCN or transformer with receptive field $\le T$, the temporal context captured by deep layers has a bounded horizon, and the implicit i.i.d.~assumption underlying Equation~\ref{eq:dan} is violated. More fundamentally, each layer's activation is still summarized by its marginal distribution.
 
\subsection{Higher-Order Moment Matching (HoMM)}
\label{app:comparison_methods_homm}
 
HoMM~\citep{chen_fu_chen} performs order-$p$ tensor moment matching:
\begin{equation}
\label{eq:homm}
  \mathcal{L}_{\mathrm{HoMM}}^{(p)}
    = \big|\big|
        \mathbb{E}_S[F^{\otimes p}]
        - \mathbb{E}_T[F^{\otimes p}]
      \big|\big|_F^2,
\end{equation}
where $F^{\otimes p}$ denotes the $p$-fold tensor product. The first-order HoMM ($p=1$) is equivalent to MMD with a linear kernel, and the second-order ($p=2$) is equivalent to CORAL. For $p \ge 3$, HoMM targets fine-grained non-Gaussian structure.
 
\textbf{Computational Cost.} Storing $\mathbb{E}[F^{\otimes p}] \in \mathbb{R}^{d^p}$ requires $O(d^p)$ memory, which is exponential in $p$. For $d = 512$, $p = 3$, this is approximately $1.3 \times 10^8$ floats. In practice, random tensor sketches reduce this to $O(d)$ per projection, but at the cost of approximation bias.
 
\textbf{Time-series Limitation.} HoMM matches the \textit{marginal} $p$-th moment tensor $\mathbb{E}[F^{\otimes p}]$ but does not explicitly enforce \textit{cross-time} moments $\mathbb{E}[F_t^{\otimes p_1} \otimes F_{t+\tau}^{\otimes p_2}]$. These joint moments encode the dynamical structure (e.g.\ AR coefficients, oscillation periods) that defines the identity of a time-series. This failure persists at \textit{all} orders $p$.

\subsection{Virtual Adversarial Training (VAT)}
\label{app:comparison_methods_vat}

VAT, proposed by ~\citet{shu_bui_narui}, is added as a target-smoothness regularizer to stabilize the decision boundary in regions of high target density. For an input $x$, VAT penalizes the change in the model prediction under the worst-case small perturbation,
\begin{equation}
    \mathcal{L}_{\mathrm{VAT}}
    =
    \mathbb{E}_{x\in \mathcal{D}_s\cup\mathcal{D}_t}
    \big[
        \mathrm{KL}
        \big(
            p_{\theta,\phi}(\cdot\mid x)
            \,\Vert\,
            p_{\theta,\phi}(\cdot\mid x+r_{\mathrm{adv}})
        \big)
    \big],
\end{equation}
where
\begin{equation}
    r_{\mathrm{adv}}
    =
    \arg\max_{\|r\|\leq \epsilon}
    \mathrm{KL}
    \big(
        p_{\theta,\phi}(\cdot\mid x)
        \,\Vert\,
        p_{\theta,\phi}(\cdot\mid x+r)
    \big).
\end{equation}
The parameter $\gamma$ controls the contribution of the VAT regularizer.

While CPDA aligns class-conditional latent path distributions, VAT improves target-side robustness by encouraging local prediction smoothness around both source and target samples, thereby reducing the effect of noisy target pseudo-labels during class-conditional alignment.

\section{Dataset Details}
\label{app:datasets}

\begin{table*}[t!]
\centering
\caption{Overview of datasets used in the evaluation. Sample counts refer to the total number of samples used in the corresponding standard train\,/\,test split or benchmark preprocessing. Datasets are available at \url{https://researchdata.ntu.edu.sg/dataverse/adatime}, \url{https://www.timeseriesclassification.com/dataset.php}, or \url{https://www.iis.fraunhofer.de/de/ff/lv/dataanalytics/anwproj/schreibtrainer/onhw-dataset.html}.}
\label{tab:datasets}
\small
\begin{tabular}{p{0.14\linewidth} p{0.21\linewidth} p{0.30\linewidth} r r r r r }
\toprule
\textbf{Dataset} & \textbf{Reference} & \textbf{Description} & \rotatebox{90}{\textbf{\# Domains}} & \rotatebox{90}{\textbf{\# Channels}} & \rotatebox{90}{\textbf{\# Classes}} & \rotatebox{90}{\textbf{\# Length}} & \rotatebox{90}{\textbf{\# Samples}} \\
\midrule
HHAR & \citet{stisen_blunck_bhattacharya} & Smartphone-based human activity recognition with subject/domain shift & $9$ & $3$ & $6$ & $128$ & $17{,}934$ \\
UCI HAR & \citet{anguita_ghio_oneto} & Inertial human activity recognition from accelerometer and gyroscope signals & $30$ & $9$ & $6$ & $128$ & $3{,}290$ \\
WISDM & \citet{kwapisz_weiss_moore} & Smartphone accelerometer-based human activity recognition & $36$ & $3$ & $6$ & $128$ & $2{,}070$ \\
EEG (Sleep-EDF) & \citet{goldberger_amaral_glass} & Single-channel EEG sleep-stage classification dataset & $20$ & $1$ & $5$ & $3{,}000$ & $20{,}410$ \\
uWave & UCR/UEA~\citep{liu_wang_zhong} & Accelerometer-based gesture recognition from the UWave gesture library & $8$ & $3$ & $5$ & $150$ & $4{,}478$ \\
\midrule
PenDigits & UCR/UEA~\citep{alimoglu_alpaydin} & Pen-trajectory recognition of handwritten digits & $2$ & $2$ & $10$ & $8$ & $10{,}992$ \\
Epilepsy & UCR/UEA~\citep{villar_vergara_menendez} & Multivariate accelerometer recordings for epilepsy-related movement classification & $2$ & $3$ & $4$ & $206$ & $275$ \\
Face {detection} & UCR/UEA~\citep{olivetti_kia_avesani} & MEG-based binary classification of face versus scrambled-image stimuli & $2$ & $144$ & $2$ & $62$ & $9{,}414$ \\
Finger movements (univariate) & UCR/UEA~\citep{blankertz_curio_mueller} & EEG-based binary finger-movement classification, univariate variant & $28$ & $1$ & $2$ & $50$ & $208$ \\
Finger movements (multi) & UCR/UEA~\citep{blankertz_curio_mueller} & Multivariate EEG-based binary finger-movement classification & $28$ & $1$ & $2$ & $50$ & $208$ \\
\midrule
OnHW-symbols & \citet{ott_ijdar} & Online handwriting recognition of mathematical symbols & $2$ & $13$ & $15$ & $79$ & $2{,}667$ \\
OnHW-equations & \citet{ott_ijdar} & Online handwriting recognition from split equation symbols & $2$ & $13$ & $15$ & $79$ & $45{,}923$ \\
OnHW-chars (combined) & \citet{ott_imwut} & Online handwriting recognition of uppercase and lowercase characters & $2$ & $13$ & $52$ & $64$ & $33{,}545$ \\
\midrule
sin--cos (synthetic) & \citet{ott_acmmm} & Controlled synthetic waveform dataset for ablation studies & $2$ & $1$ & $10$ & $1{,}000$ & $12{,}000$ \\
\bottomrule
\end{tabular}
\end{table*}

We evaluate the proposed method on a diverse collection of time-series datasets covering human activity recognition, sleep-stage classification, gesture recognition, biomedical signal analysis, handwriting recognition, and synthetic waveform classification. Table~\ref{tab:datasets} summarizes the main dataset statistics, including the number of domains, channels, classes, temporal length, and train/test sample sizes. In all DA experiments, domains correspond either to individual users, subjects, devices, or predefined source--target partitions, depending on the dataset.

\paragraph{HHAR.} The Heterogeneity Human Activity Recognition (HHAR) dataset~\citep{stisen_blunck_bhattacharya} contains human activity recordings from nine users. The signals were collected using three-axis accelerometers with sampling rates between $25\,\text{Hz}$ and $200\,\text{Hz}$. In total, the dataset includes measurements from $36$ smartphones, tablets, and smartwatches, enabling the analysis of domain shifts induced by different sensors, devices, and workloads. We treat users as domains and evaluate DA over all $36$ possible user-to-user transfer scenarios. This provides a more comprehensive evaluation than prior studies, which commonly considered only a small subset of transfer pairs~\citep{wilson_doppa_cook,liu_xue,he_queen_koker,ragab_eldele_tan}.

\paragraph{UCI HAR.} The UCI Human Activity Recognition (HAR) dataset~\citep{anguita_ghio_oneto} consists of inertial measurements from $30$ users performing six activities of daily living: standing, sitting, lying down, walking, walking upstairs, and walking downstairs. The data were recorded with waist-mounted smartphones equipped with inertial sensors sampled at $50\,\text{Hz}$. Each sample contains nine channels, corresponding to three-axis accelerometer, gyroscope, and magnetometer measurements. We treat each user as a separate domain and evaluate DA across all $435$ possible source--target user combinations.

\paragraph{WISDM.} The WISDM dataset~\citep{kwapisz_weiss_moore} contains smartphone accelerometer recordings from $36$ users performing six activities: walking, jogging, ascending stairs, descending stairs, sitting, and standing. Each recording consists of three-axis acceleration signals. We define each user as an individual domain and evaluate DA over $356$ source--target user combinations.

\paragraph{EEG.} The sleep-stage classification (SSC) dataset~\citep{goldberger_amaral_glass} contains electroencephalography (EEG) recordings labeled according to five sleep stages: wakefulness, three non-rapid-eye-movement stages, and rapid-eye-movement sleep. The dataset includes univariate recordings from $20$ users, with each time-series containing $3{,}000$ time steps. Each user is treated as a distinct domain, resulting in $190$ possible source--target transfer scenarios.

\paragraph{uWave.} The uWave dataset~\citep{liu_wang_zhong} consists of gesture recordings obtained from a handheld device equipped with a three-axis accelerometer. The dataset contains $4{,}478$ samples from eight users and comprises five gesture classes. Following our preprocessing, all time-series are interpolated to a fixed length of $150$ time steps. We evaluate DA over all $28$ possible source--target combinations between the eight users. In contrast to some prior work~\citep{wilson_doppa_cook,liu_xue}, we use an 80\%\,/\,20\% train/validation split.

\paragraph{PenDigits.} The PenDigits dataset~\citep{alimoglu_alpaydin} consists of handwritten digit trajectories recorded on a digital screen. Each sample is represented by two channels corresponding to the two-dimensional pen coordinates. The task is classification into $10$ digit classes. The dataset contains $10{,}992$ samples collected from $44$ writers. As with the Epilepsy dataset, we construct source and target domains by randomly partitioning the available samples.

\paragraph{Epilepsy.} The Epilepsy dataset~\citep{villar_vergara_menendez} was recorded using a tri-axial accelerometer sampled at $16\,\text{Hz}$ and worn on the dominant wrist. Six healthy participants performed four activities: walking, running, sawing, and seizure mimicking, with each activity repeated at least ten times. Due to the limited dataset size, we randomly split the samples into source and target domains with a 50\%\,/\,50\% ratio and further divide each domain into training and testing sets. Consequently, samples from all participants may appear in both domains, but with unequal proportions.

\paragraph{Face Detection.} The Face Detection dataset~\citep{olivetti_kia_avesani} addresses binary classification of magnetoencephalography (MEG) recordings. The task is to determine whether a participant was presented with a face image or a scrambled image. The dataset contains $144$-channel MEG recordings from $16$ participants and comprises $9{,}414$ samples.

\paragraph{Finger Movements.} The Finger Movements dataset~\citep{blankertz_curio_mueller} consists of EEG recordings from a subject performing key-press movements with the index and little fingers while seated in a standard typing position. The task is binary classification, distinguishing left- and right-hand finger movements. The dataset is comparatively small, containing $104$ training and testing samples. We evaluate DA over $378$ possible source--target scenarios derived from the available domain partitions.





\begin{table*}[t!]
\begin{center}
\caption{Results for the \textit{source} and \textit{target} test datasets, which are evaluated on the model \textit{trained solely on target samples} to show the discrepancy between source and target domains. For all datasets, we report the accuracy and F1-score in \% for the CNN, ResNet18, and TCN encoder networks. These metrics are averaged over all domain scenarios and five training runs. \textbf{Bold} refers to the best-performing encoder network on target and source accuracy.}
\label{table_results_trg_src}
\footnotesize\begin{tabular}{lrrrrrrrrrrrr}
\toprule
\textbf{Dataset}
& \shortstack{\textbf{CNN}\\\textbf{Tar.}\\\textbf{Acc.}}
& \shortstack{\textbf{CNN}\\\textbf{Tar.}\\\textbf{F1}}
& \shortstack{\textbf{CNN}\\\textbf{Sour.}\\\textbf{Acc.}}
& \shortstack{\textbf{CNN}\\\textbf{Sour.}\\\textbf{F1}}
& \shortstack{\textbf{ResNet}\\\textbf{Tar.}\\\textbf{Acc.}}
& \shortstack{\textbf{ResNet}\\\textbf{Tar.}\\\textbf{F1}}
& \shortstack{\textbf{ResNet}\\\textbf{Sour.}\\\textbf{Acc.}}
& \shortstack{\textbf{ResNet}\\\textbf{Sour.}\\\textbf{F1}}
& \shortstack{\textbf{TCN}\\\textbf{Tar.}\\\textbf{Acc.}}
& \shortstack{\textbf{TCN}\\\textbf{Tar.}\\\textbf{F1}}
& \shortstack{\textbf{TCN}\\\textbf{Sour.}\\\textbf{Acc.}}
& \shortstack{\textbf{TCN}\\\textbf{Sour.}\\\textbf{F1}} \\
\midrule
HHAR & \textbf{98.70} & 98.70 & 63.88 & 58.87 & 98.01 & 98.05 & 47.42 & 42.54 & 98.02 & 98.06 & 60.77 & 55.83 \\
UCI HAR & 99.42 & 99.48 & 74.31 & 71.27 & \textbf{99.77} & 99.78 & 69.96 & 64.99 & 98.68 & 98.68 & 72.44 & 68.79 \\
WISDM & \textbf{99.28} & 98.21 & 55.23 & 38.44 & 99.22 & 98.09 & 55.60 & 32.84 & 93.83 & 88.46 & 56.36 & 37.25 \\
EEG & \textbf{79.62} & 70.18 & 66.69 & 54.84 & 62.13 & 48.34 & 49.71 & 36.43 & 52.45 & 39.90 & 45.11 & 33.52 \\
uWave & \textbf{99.82} & 99.83 & 83.54 & 82.05 & 96.43 & 96.15 & 43.38 & 38.12 & 99.43 & 99.41 & 79.65 & 77.70 \\
PenDigits & \textbf{99.69} & 99.68 & 96.07 & 95.99 & 84.41 & 84.28 & 78.92 & 78.78 & 99.54 & 99.53 & 96.03 & 96.01 \\
Epilepsy & 80.31 & 75.40 & 77.50 & 72.87 & \textbf{95.31} & 95.33 & 97.81 & 97.87 & 74.38 & 68.06 & 74.69 & 73.30 \\
Face detection & 52.59 & 40.59 & 56.46 & 48.86 & 53.52 & 50.94 & 51.75 & 45.88 & \textbf{69.92} & 69.92 & 64.10 & 64.09 \\
Finger movements & 55.91 & 51.08 & 51.23 & 42.55 & 50.66 & 45.47 & 50.51 & 46.00 & \textbf{57.00} & 55.16 & 51.48 & 46.43 \\
OnHW-symbols & \textbf{77.19} & 78.07 & 20.00 & 14.87 & 76.25 & 76.59 & 13.75 & 9.74 & 19.69 & 15.12 & 5.00 & 3.54 \\
OnHW-equations & \textbf{89.66} & 89.02 & 36.83 & 38.29 & 82.34 & 82.30 & 21.92 & 21.60 & 56.94 & 54.48 & 29.49 & 29.44 \\
OnHW-chars & \textbf{73.52} & 71.76 & 37.44 & 36.83 & 49.93 & 48.02 & 16.10 & 15.05 & 1.29 & 0.86 & 1.92 & 0.07 \\
sin--cos ($b=0.0$) & 23.99 & 15.24 & 40.12 & 29.18 & 58.69 & 50.45 & 10.18 & 3.45 & \textbf{100.0} & 100.0 & 0.00 & 0.00 \\
sin--cos ($b=0.5$) & 46.27 & 39.35 & 43.55 & 32.50 & 45.56 & 36.72 & 10.91 & 5.12 & \textbf{98.35} & 98.33 & 8.65 & 7.09 \\
sin--cos ($b=1.0$) & 34.74 & 26.95 & 37.28 & 29.55 & 21.01 & 9.65 & 8.67 & 2.33 & \textbf{84.92} & 84.24 & 0.87 & 1.12 \\
sin--cos ($b=1.5$) & 25.79 & 17.57 & 30.97 & 25.37 & 11.11 & 2.79 & 8.41 & 3.59 & \textbf{65.04} & 62.32 & 0.38 & 0.44 \\
sin--cos ($b=1.9$) & 28.75 & 28.95 & 22.69 & 24.17 & 10.30 & 2.30 & 8.87 & 3.69 & \textbf{59.05} & 57.58 & 0.50 & 0.59 \\
\bottomrule
\end{tabular}
\end{center}
\end{table*}

\paragraph{Online Handwriting.} The Online Handwriting (OnHW) datasets contain handwriting samples captured with sensor-enhanced pens. The OnHW-chars dataset~\citep{ott_imwut} addresses recognition of lowercase and uppercase characters, resulting in $52$ classes. It contains $31{,}275$ samples from $119$ right-handed writers and $2{,}270$ samples from nine left-handed writers. The OnHW-symbols and split OnHW-equations datasets~\citep{ott_ijdar} contain handwritten numbers and mathematical symbols. In these datasets, writer handedness induces a natural domain shift: the right-handed domain consists of $27$ and $55$ writers, respectively, while the left-handed domain contains four writers in both cases. We evaluate adaptation in both transfer directions, from left-handed to right-handed writers (L$\rightarrow$R) and from right-handed to left-handed writers (R$\rightarrow$L). Although these datasets contain fewer domains than the user-based sensor datasets, they are challenging due to their larger number of channels and classes.

\paragraph{Generated Sinusoidal Data.} For controlled ablation studies, we use the synthetic sinusoidal dataset introduced by~\citet{ott_acmmm}. The dataset consists of univariate sinusoidal time-series with class-specific frequencies for $10$ classes. Domain shift is introduced by adding noise sampled from a continuous uniform distribution $U(a,b)$ with $a=0.0$ and
\begin{equation}
    b \in \mathcal{B}=\{0.0,0.1,0.2,\ldots,1.9\}.
\end{equation}
The noisy sinusoidal signals form the target domain. For the source domain, the sign of the time-series is flipped and noise sampled from $U(a,b/2)$ is added. This controlled setup enables systematic evaluation of DA methods under gradually increasing noise levels and sign-induced distribution shift.

\section{Experimental Results}
\label{app:experimental_results}

\subsection{Discrepancy Between Source and Target Domains} 
\label{app:experimental_results_discrepancy}

We first quantify the source-only transfer gap between source and target domains by comparing two settings: training and evaluating a model on the target-domain data, and training on the source-domain data while evaluating on the target domain. The corresponding results are reported in Table~\ref{table_results_trg_src}. Across the considered datasets, the performance differences indicate a clear domain shift. The observed gaps amount to 34.82\%p for HHAR, 25.11\%p for UCI HAR, 37.47\%p for WISDM, 12.93\%p for SSC, 16.28\%p for uWave, and 23.95\%p for Gestures Mid Air. Thus, all datasets exhibit a measurable mismatch between source and target distributions. The domain shift is particularly pronounced for the OnHW datasets, where the gaps reach 57.19\%p, 52.83\%p, and 36.08\%p. This is consistent with the substantial differences between sensor recordings from right-handed and left-handed writers reported in~\cite{klass_lorenz_strl}. In contrast, the Finger Movements, Epilepsy, Face Detection, and PenDigits datasets show only small domain gaps, which is expected because their source and target partitions are generated by random shuffling rather than by naturally distinct domains. For the synthetic sinusoidal dataset, the target-domain accuracy decreases substantially as the noise level increases, in line with previous observations~\cite{ott_acmmm,ott_tmlr}. Specifically, the accuracy drops from 100\% at $b=0.0$ to 59.05\% at $b=1.9$. This degradation indicates a strong source--target mismatch, since the model trained on the source domain fails to extract features that remain reliably discriminative under the noisy target-domain conditions. The gap is smaller when using the CNN encoder, but this reduction comes at the cost of lower target-domain accuracy.

\subsection{Evaluation Results for all DA Methods} 
\label{app:experimental_results_da_methods}

\begin{table*}[t!]
\centering
\caption{DA results in \% (mean and standard deviation over all scenarios and five runs) for the HHAR~\citep{stisen_blunck_bhattacharya} dataset. \textbf{Bold} are best results.}
\label{table_results_da_all_methods1}
\footnotesize

\end{table*}

\begin{table*}[t!]
\centering
\caption{DA results in \% (mean and standard deviation over all scenarios and five runs) for the UCI HAR~\citep{anguita_ghio_oneto} dataset. \textbf{Bold} are best results.}
\label{table_results_da_all_methods2}
\footnotesize
%
\end{table*}

\begin{table*}[t!]
\centering
\caption{DA results in \% (mean and standard deviation over all scenarios and five runs) for the WISDM~\citep{kwapisz_weiss_moore} dataset. \textbf{Bold} are best results.}
\label{table_results_da_all_methods3}
\footnotesize
%
\end{table*}

\begin{table*}[t!]
\centering
\caption{DA results in \% (mean and standard deviation over all scenarios and five runs) for the EEG~\citep{goldberger_amaral_glass} dataset. \textbf{Bold} are best results.}
\label{table_results_da_all_methods4}
\footnotesize
%
\end{table*}

\begin{table*}[t!]
\centering
\caption{DA results in \% (mean and standard deviation over all scenarios and five runs) for the uWave~\citep{liu_wang_zhong} dataset. \textbf{Bold} are best results.}
\label{table_results_da_all_methods5}
\footnotesize
%
\end{table*}

\begin{table*}[t!]
\centering
\caption{DA results in \% (mean and standard deviation over all scenarios and five runs) for the PenDigits~\citep{alimoglu_alpaydin} dataset. \textbf{Bold} are best results.}
\label{table_results_da_all_methods10}
\footnotesize
%
\end{table*}

\begin{table*}[t!]
\centering
\caption{DA results in \% (mean and standard deviation over all scenarios and five runs) for the Epilepsy~\citep{villar_vergara_menendez} dataset. \textbf{Bold} are best results.}
\label{table_results_da_all_methods8}
\footnotesize
%
\end{table*}

\begin{table*}[t!]
\centering
\caption{DA results in \% (mean and standard deviation over all scenarios and five runs) for the Face detection~\citep{olivetti_kia_avesani} dataset. \textbf{Bold} are best results.}
\label{table_results_da_all_methods9}
\footnotesize
%
\end{table*}

\begin{table*}[t!]
\centering
\caption{DA results in \% (mean and standard deviation over all scenarios and five runs) for the Finger movements~\citep{blankertz_curio_mueller} dataset. \textbf{Bold} are best results.}
\label{table_results_da_all_methods6}
\footnotesize
%
\end{table*}

\begin{table*}[t!]
\centering
\caption{DA results in \% (mean and standard deviation over all scenarios and five runs) for the Gestures mid air \citep{caputo_prebianca} dataset. \textbf{Bold} are best results.}
\label{table_results_da_all_methods7}
\footnotesize
%
\end{table*}

\begin{table*}[t!]
\centering
\caption{DA results in \% (mean and standard deviation over all scenarios and five runs) for the OnHW-symbols, split OnHW-equations, and OnHW-chars~\citep{ott_imwut,ott_ijdar} datasets. CNN backbone as encoder network. \textbf{Bold} are best results.}
\label{table_results_da_all_methods11}
\footnotesize
%
\end{table*}

\paragraph{Aggregation of Results.} For a fixed dataset, backbone, and adaptation method, let $m_{s,r}$ denote the target accuracy or macro-F1 score obtained for source--target scenario $s\in\{1,\ldots,S\}$ and random-seed run $r\in\{1,\ldots,R\}$, with $R=5$. The values reported in the aggregate result tables are computed by pooling all $SR$ scenario--run observations:
\begin{equation}
    \overline{m}
    =
    \frac{1}{SR}
    \sum_{s=1}^{S}\sum_{r=1}^{R}m_{s,r},
\end{equation}
and
\begin{equation}
    \sigma_{\mathrm{pool}}
    =
    \sqrt{
    \frac{1}{SR}
    \sum_{s=1}^{S}\sum_{r=1}^{R}
    \left(m_{s,r}-\overline{m}\right)^2
    }.
\end{equation}
Thus, $\sigma_{\mathrm{pool}}$ is the descriptive population standard deviation over all evaluated scenario--seed combinations, consistent with the default normalization of \texttt{numpy.std}. Because every source--target scenario is evaluated with the same number of runs, all scenarios receive equal weight. The reported deviation jointly reflects variation between source--target scenarios and stochastic variation between the five runs; it should therefore not be interpreted as the seed-induced standard deviation for a single fixed transfer task or as a confidence interval. Equivalently, since each scenario has the same number of runs,
\begin{equation}
    \sigma_{\mathrm{pool}}^2
    =
    \frac{1}{S}\sum_{s=1}^{S}\sigma_s^2
    +
    \frac{1}{S}\sum_{s=1}^{S}
    \left(\overline{m}_s-\overline{m}\right)^2,
\end{equation}
where $\overline{m}_s$ and $\sigma_s$ are the mean and population standard deviation across the five runs of scenario $s$. The first term captures average within-scenario seed variability, whereas the second captures variability between source--target scenarios.

\paragraph{Evaluation of DA Methods on Standard Time-Series Datasets.} Tables~\ref{table_results_da_all_methods1}--\ref{table_results_da_all_methods11} show that CPDA is most effective on datasets where the domain shift preserves a class-dependent temporal structure, such as human activity, gesture, and trajectory data. On HHAR, CPDA achieves the best CNN accuracy with $81.82\%$, outperforming DIRT-T ($80.11\%$) and CDAN ($79.88\%$), and also obtains the best TCN accuracy with $77.88\%$ compared with $76.78\%$ for CDAN and DIRT-T. This indicates that class-conditional path alignment is beneficial when user- or sensor-dependent shifts affect the temporal dynamics but the class-wise movement patterns remain transferable. For ResNet18, however, Sinkhorn is marginally better ($67.98\%$ vs.~$67.79\%$), suggesting that direct transport-based matching can be competitive when the learned representation is less explicitly temporal.

On UCI HAR and WISDM, CPDA also performs particularly well. On UCI HAR, CPDA obtains the best ResNet18 and TCN accuracies with $89.53\%$ and $90.31\%$, respectively, improving over DSAN ($88.17\%$ with ResNet18) and HoMM ($87.90\%$ with TCN). With CNN, DIRT-T remains slightly stronger ($92.02\%$ vs.~$91.88\%$), which suggests that adversarial entropy-regularized adaptation can still be highly effective when the CNN representation is already well separated. On WISDM, CPDA achieves the best accuracy for all three backbones, with $63.92\%$ for CNN, $63.18\%$ for ResNet18, and $61.06\%$ for TCN. The improvement over the strongest alternatives is moderate but consistent, indicating that CPDA improves robustness across architectures by aligning class-conditional rather than purely marginal distributions.

The strongest evidence for CPDA appears on uWave and PenDigits, both of which contain trajectory-like or gesture-like signals. On uWave, CPDA achieves the best accuracy for all backbones: $97.30\%$ with CNN, $61.32\%$ with ResNet18, and $92.17\%$ with TCN. These results improve over strong baselines such as CDAN/DIRT-T on CNN ($96.43\%$), DSAN on ResNet18 ($59.25\%$), and DSAN on TCN ($91.92\%$). Similarly, on PenDigits, CPDA obtains the best CNN and TCN accuracies with $96.98\%$ and $97.08\%$, respectively. Since both datasets are characterized by temporal trajectories, these improvements support the motivation of CPDA: matching class-conditional latent paths can capture discriminative temporal evolution better than marginal feature alignment alone. However, CoDATS remains clearly stronger for PenDigits with ResNet18 ($83.68\%$ vs.~$75.30\%$), suggesting that CPDA is sensitive to the interaction between the backbone representation and the path-alignment objective.

For EEG, Epilepsy, and Face Detection, the results are more mixed. On EEG, CPDA obtains the best CNN accuracy ($74.39\%$) and the best ResNet18 accuracy ($53.27\%$), but it does not dominate the TCN setting, where CDAN obtains the highest accuracy ($47.88\%$). This suggests that CPDA helps when the representation preserves stable class-conditioned structure, but EEG remains challenging because noisy or imbalanced target predictions can weaken pseudo-label-based alignment. On Epilepsy, CPDA is not best for CNN or ResNet18, where PC~\citep{karl_pearson} and several kernel\,/\,covariance methods reach around $99\%$ accuracy, but it strongly improves the TCN result, achieving $85.62\%$ accuracy compared with $80.00\%$ for DIRT-T. On Face Detection, CPDA obtains the best CNN and ResNet18 accuracies with $63.70\%$ and $54.05\%$, respectively, but TCN is better served by global or higher-order representation matching, with CoDATS reaching $68.29\%$ and HoMM reaching $67.87\%$.

In contrast, CPDA is less effective on Finger Movements, Gestures Mid Air, and the OnHW handwriting tasks. On Finger Movements, CPDA reaches $50.81\%$, $49.86\%$, and $52.70\%$ accuracy for CNN, ResNet18, and TCN, whereas the best results are obtained by CS for CNN and ResNet18 ($53.27\%$ and $53.47\%$) and by Sinkhorn for TCN ($55.07\%$). This dataset is small and close to chance-level performance for many methods, so target pseudo-labels may be too unreliable for class-conditional alignment. On Gestures Mid Air, CPDA performs substantially below the best methods, with $38.20\%$, $31.00\%$, and $21.00\%$ accuracy, while MMDA, DIRT-T, and HoMM reach $46.16\%$, $45.93\%$, and $47.65\%$, respectively. This suggests that the class-conditional CPDA objective can be harmed when target pseudo-labels are unstable across many gesture classes or when inter-subject variability is high. For OnHW, CPDA does not consistently outperform the strongest baselines; for example, on OnHW-symbols L$\rightarrow$R, CPDA obtains $78.12\%$, while OT with kernelized MMD reaches $85.09\%$, and on split OnHW-equations L$\rightarrow$R, CPDA obtains $82.23\%$, while linear DAN reaches $88.37\%$. Overall, CPDA is strongest on structured temporal and trajectory-like shifts, whereas simpler moment, transport, adversarial, or covariance-based methods can remain preferable when the domain shift is dominated by small-sample effects, highly variable target pseudo-labels, or non-temporal distributional changes.

\begin{figure*}[!t]
    \centering
	\begin{minipage}[t]{0.325\linewidth}
        \centering
    	\includegraphics[trim=10 10 10 10, clip, width=1.0\linewidth]{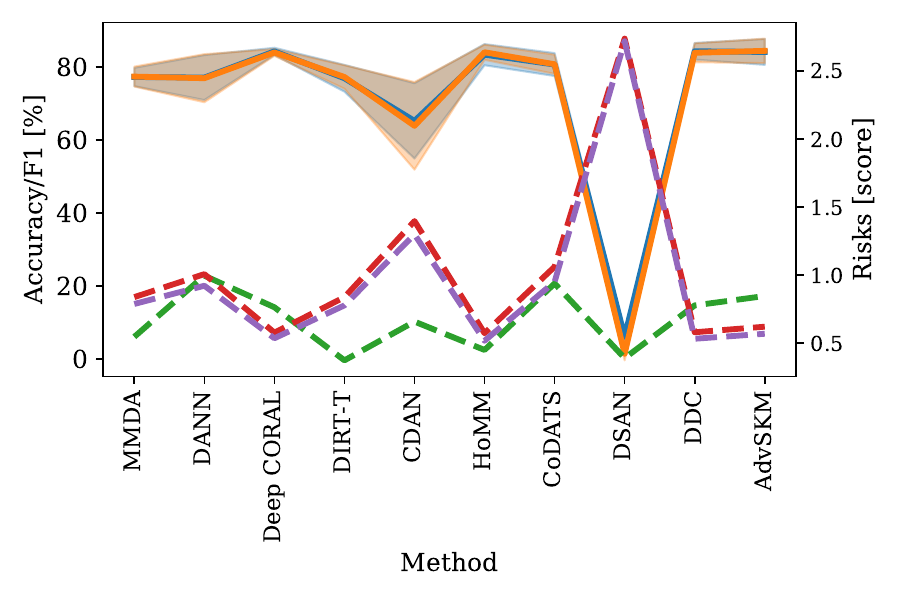}
        \subcaption{OnHW-symbols (L $\rightarrow$ R).}
        \label{label_figure_results_onhw1}
    \end{minipage}
    \hfill
	\begin{minipage}[t]{0.325\linewidth}
        \centering
    	\includegraphics[trim=10 10 10 10, clip, width=1.0\linewidth]{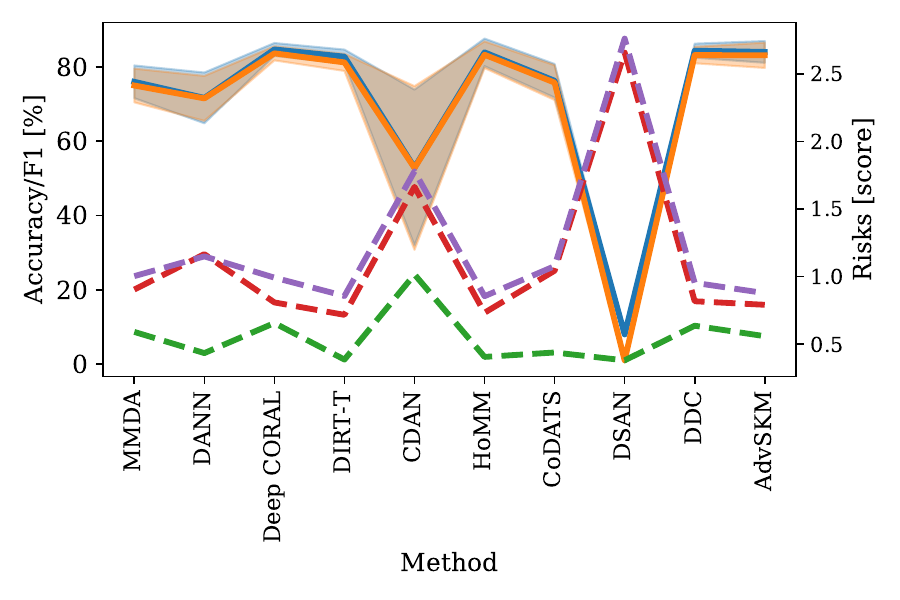}
        \subcaption{Split OnHW-equations (L $\rightarrow$ R).}
        \label{label_figure_results_onhw2}
    \end{minipage}
    \hfill
	\begin{minipage}[t]{0.325\linewidth}
        \centering
    	\includegraphics[trim=10 10 10 10, clip, width=1.0\linewidth]{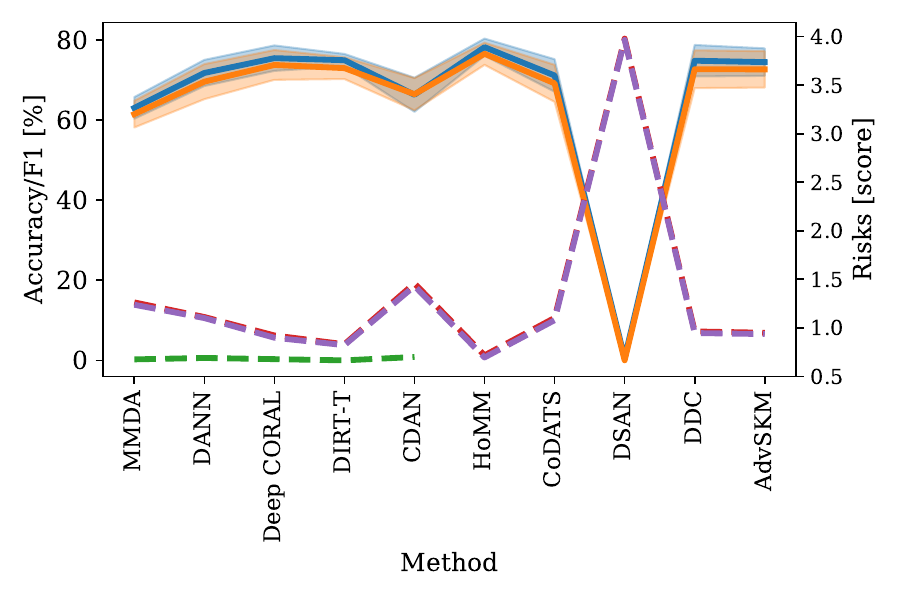}
        \subcaption{OnHW-chars (L $\rightarrow$ R).}
        \label{label_figure_results_onhw3}
    \end{minipage}
	\begin{minipage}[t]{0.325\linewidth}
        \centering
    	\includegraphics[trim=10 10 10 10, clip, width=1.0\linewidth]{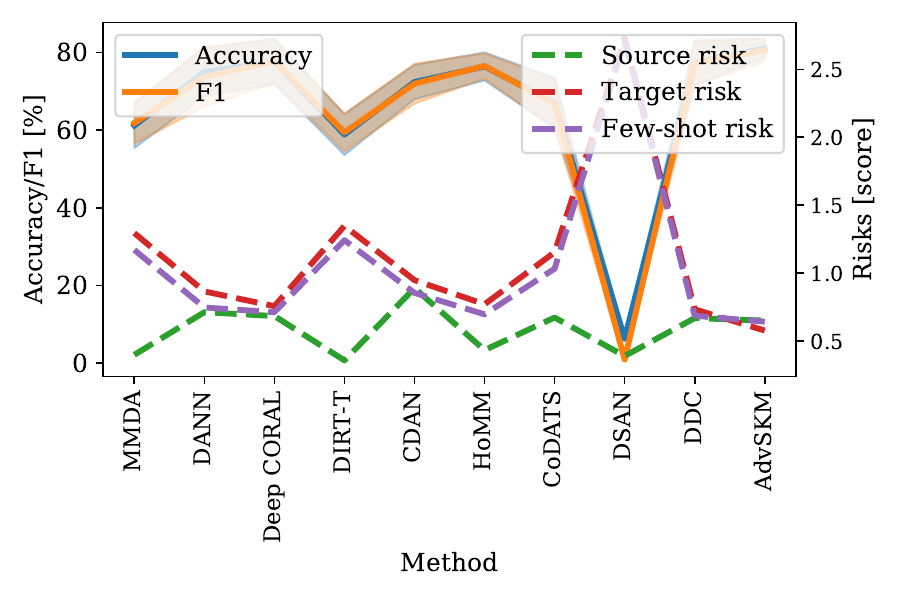}
        \subcaption{OnHW-symbols (R $\rightarrow$ L).}
        \label{label_figure_results_onhw4}
    \end{minipage}
    \hfill
	\begin{minipage}[t]{0.325\linewidth}
        \centering
    	\includegraphics[trim=10 10 10 10, clip, width=1.0\linewidth]{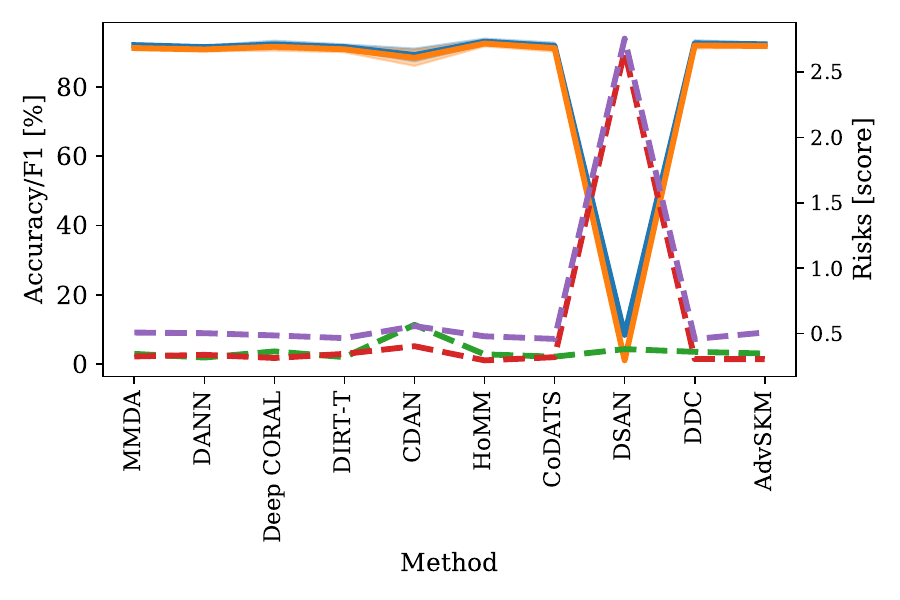}
        \subcaption{Split OnHW-equations (R $\rightarrow$ L).}
        \label{label_figure_results_onhw5}
    \end{minipage}
    \hfill
	\begin{minipage}[t]{0.325\linewidth}
        \centering
    	\includegraphics[trim=10 10 10 10, clip, width=1.0\linewidth]{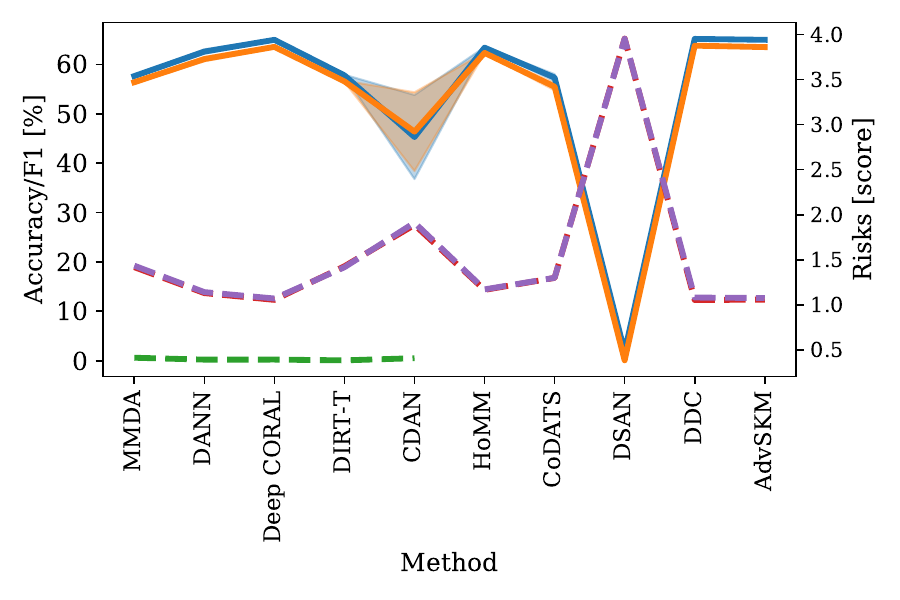}
        \subcaption{OnHW-chars (R $\rightarrow$ L).}
        \label{label_figure_results_onhw6}
    \end{minipage}
    \caption{Comparison of the recent DA methods on the OnHW datasets for the adaptation from left- to right-handed writers (L $\rightarrow$ R) and the adaptation from right- to left-handed writers (R $\rightarrow$ L).}
    \label{label_figure_results_onhw}
\end{figure*}

Figure~\ref{label_figure_results_onhw} shows that most recent DA methods achieve stable performance on the OnHW-symbols and split OnHW-equations tasks, whereas the OnHW-chars setting is more challenging. Across the symbol and equation transfers, MMDA, DANN, Deep CORAL, DIRT-T, HoMM, CoDATS, DDC, and AdvSKM generally obtain high accuracy and F1-scores, often in the range of approximately $75$--$90\%$, indicating that these methods can handle the handedness-induced domain shift when the number of classes is moderate. Deep CORAL, HoMM, DDC, and AdvSKM are particularly robust, as they combine high target performance with relatively low source, target, and few-shot risks. In contrast, CDAN is less stable and shows clear performance drops in several settings, especially on the symbol and character tasks, suggesting that adversarial conditional alignment may be sensitive to the target-domain structure in this benchmark. DSAN consistently fails; this indicates severe instability or negative transfer under the OnHW domain shift. The OnHW-chars tasks are more challenging than the symbols and equations tasks, with lower overall accuracy and larger differences between methods, which is expected because character recognition involves a much larger class space and more fine-grained writing variations. Overall, the figure suggests that covariance-, moment-, and higher-order alignment methods such as Deep CORAL, HoMM, DDC, and AdvSKM are the most reliable among the compared recent DA baselines on OnHW, while DSAN and, in some settings, CDAN are less suitable for this handwriting-based DA scenario.

\begin{figure*}[!t]
    \centering
	\begin{minipage}[t]{0.495\linewidth}
    	\begin{minipage}[t]{0.325\linewidth}
            \centering
        	\includegraphics[trim=11 45 11 11, clip, width=1.0\linewidth]{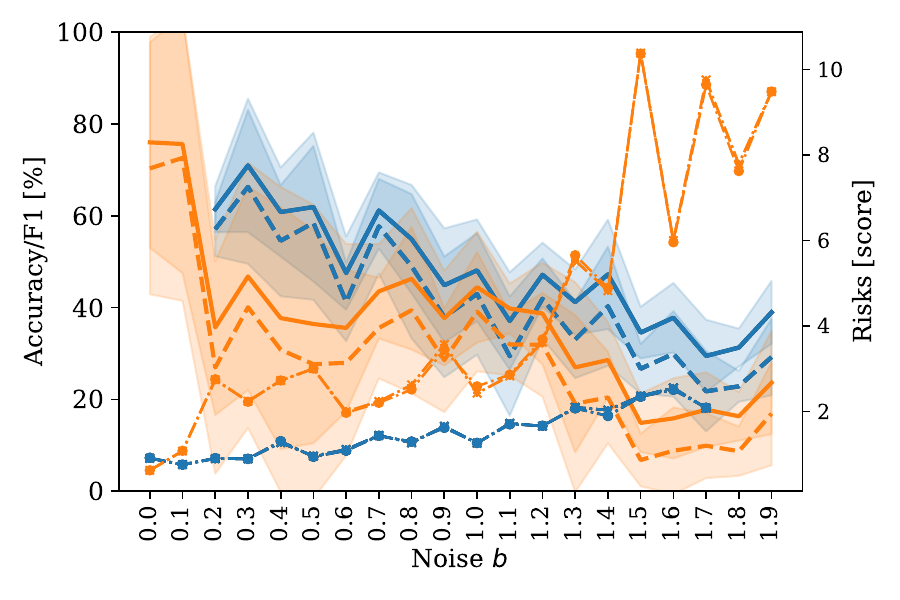}
        \end{minipage}
        \hfill
    	\begin{minipage}[t]{0.325\linewidth}
            \centering
        	\includegraphics[trim=11 45 11 11, clip, width=1.0\linewidth]{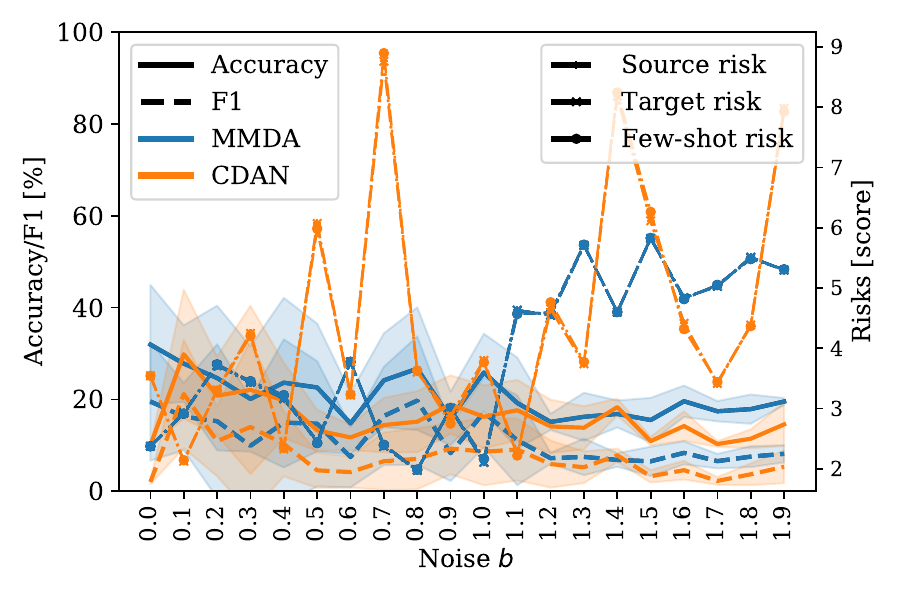}
        \end{minipage}
        \hfill
    	\begin{minipage}[t]{0.325\linewidth}
            \centering
        	\includegraphics[trim=11 45 11 11, clip, width=1.0\linewidth]{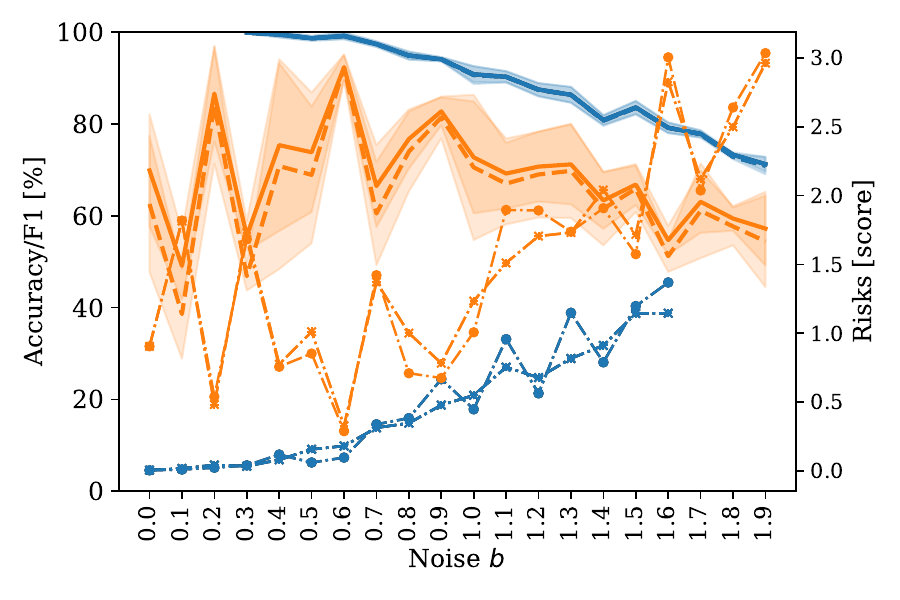}
        \end{minipage}
        \subcaption{Methods MMDA and CDAN.}
        \label{figure_results_sin_cos1}
    \end{minipage}
    \hfill
	\begin{minipage}[t]{0.495\linewidth}
    	\begin{minipage}[t]{0.325\linewidth}
            \centering
        	\includegraphics[trim=11 45 11 11, clip, width=1.0\linewidth]{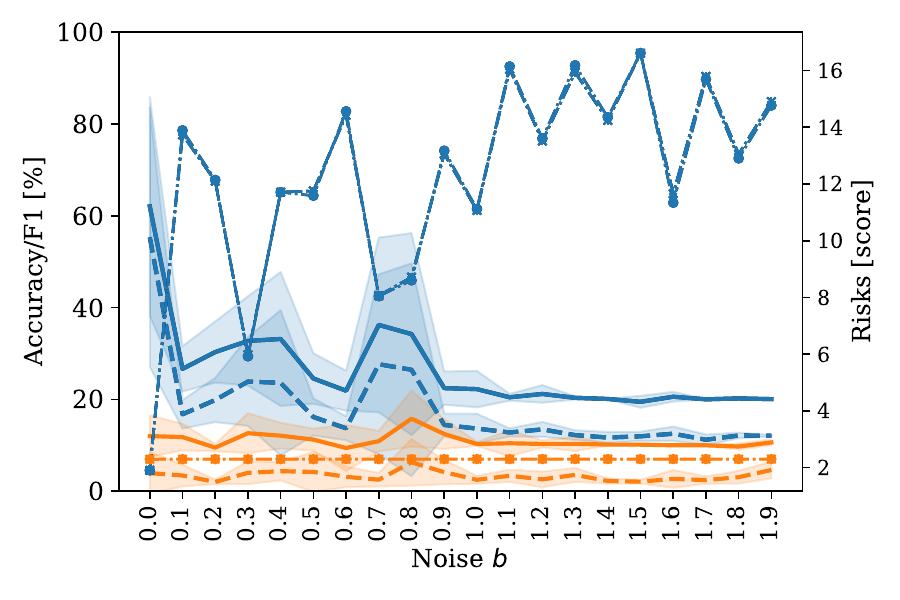}
        \end{minipage}
        \hfill
    	\begin{minipage}[t]{0.325\linewidth}
            \centering
        	\includegraphics[trim=11 45 11 11, clip, width=1.0\linewidth]{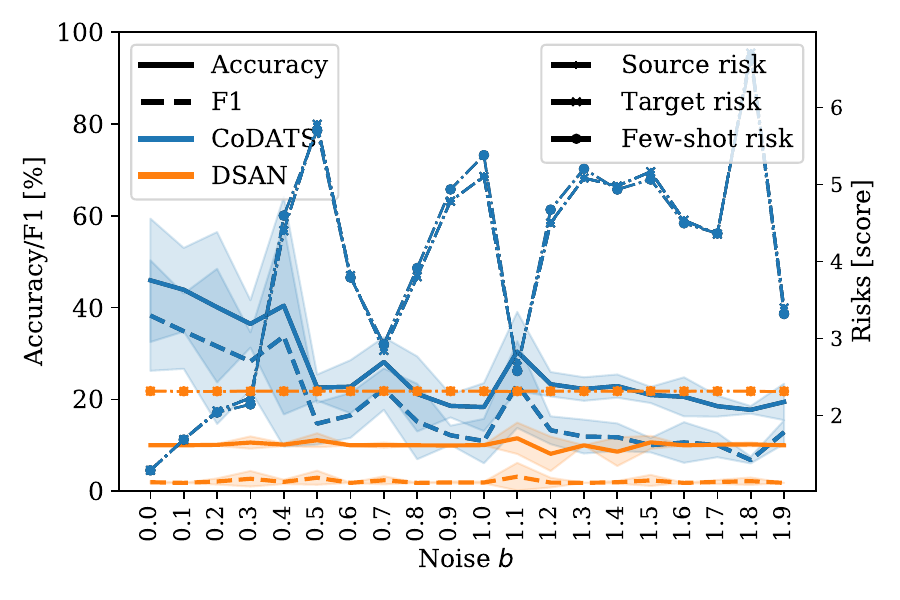}
        \end{minipage}
        \hfill
    	\begin{minipage}[t]{0.325\linewidth}
            \centering
        	\includegraphics[trim=11 45 11 11, clip, width=1.0\linewidth]{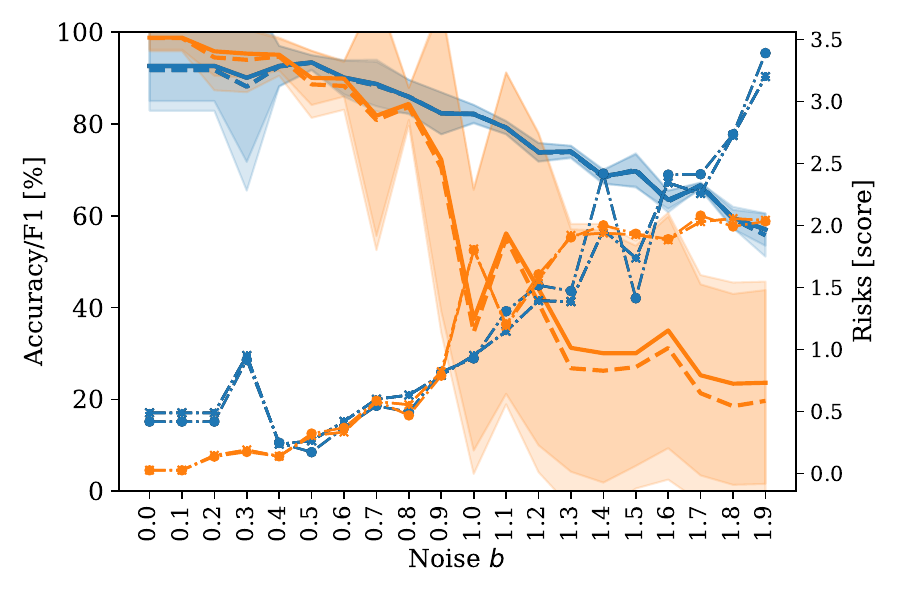}
        \end{minipage}
        \subcaption{Methods CoDATS and DSAN.}
        \label{figure_results_sin_cos2}
    \end{minipage}
	\begin{minipage}[t]{0.495\linewidth}
    	\begin{minipage}[t]{0.325\linewidth}
            \centering
        	\includegraphics[trim=11 45 11 11, clip, width=1.0\linewidth]{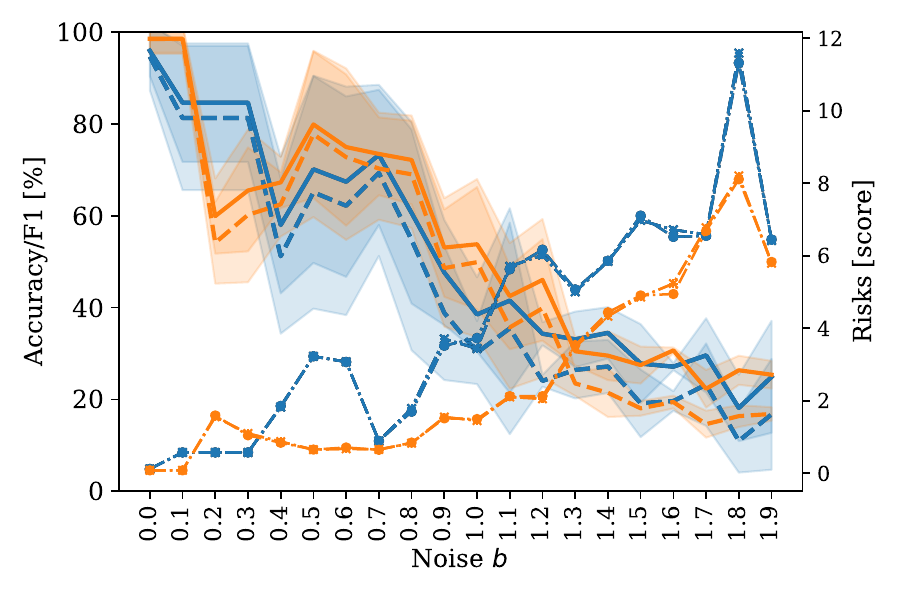}
        \end{minipage}
        \hfill
    	\begin{minipage}[t]{0.325\linewidth}
            \centering
        	\includegraphics[trim=11 45 11 11, clip, width=1.0\linewidth]{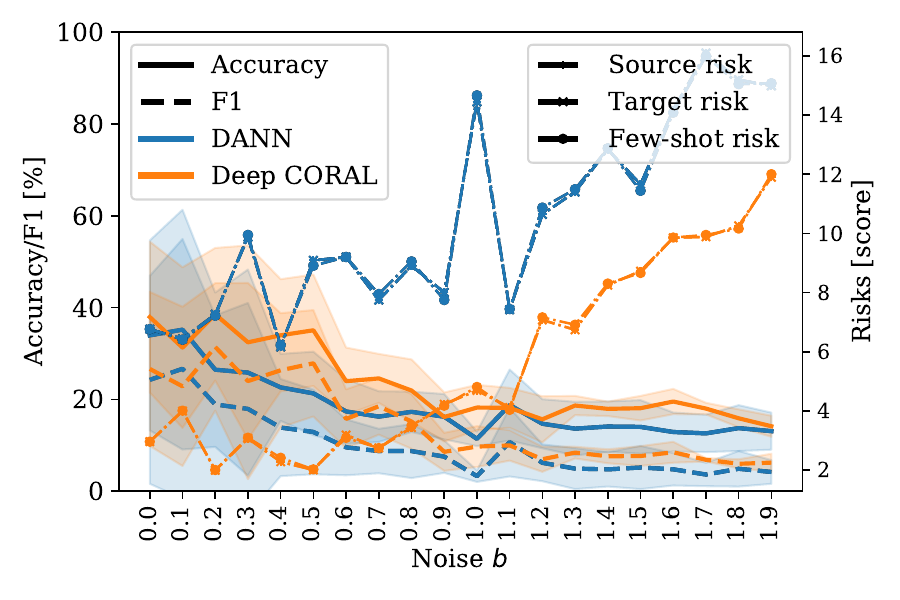}
        \end{minipage}
        \hfill
    	\begin{minipage}[t]{0.325\linewidth}
            \centering
        	\includegraphics[trim=11 45 11 11, clip, width=1.0\linewidth]{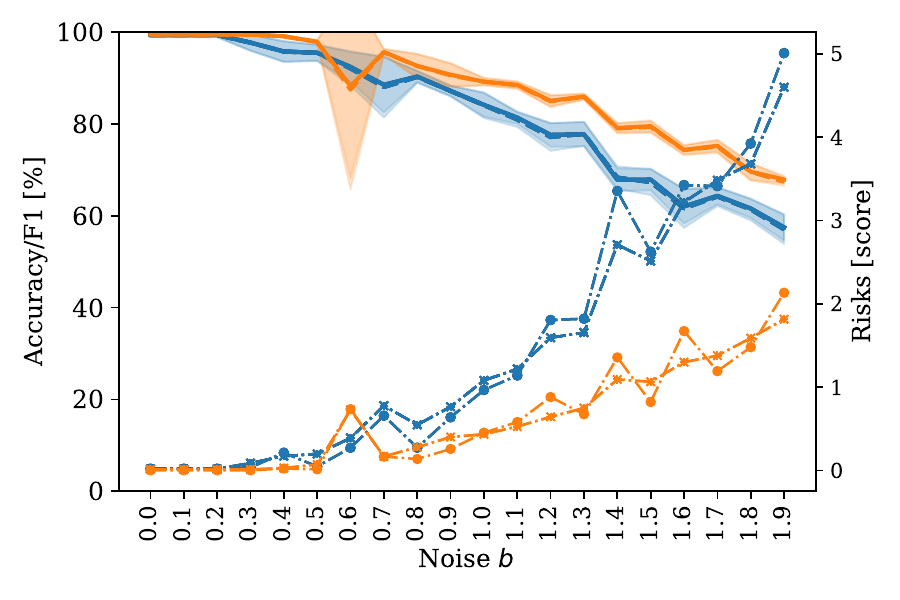}
        \end{minipage}
        \subcaption{Methods DANN and Deep CORAL.}
        \label{figure_results_sin_cos3}
    \end{minipage}
    \hfill
	\begin{minipage}[t]{0.495\linewidth}
    	\begin{minipage}[t]{0.325\linewidth}
            \centering
        	\includegraphics[trim=11 45 11 11, clip, width=1.0\linewidth]{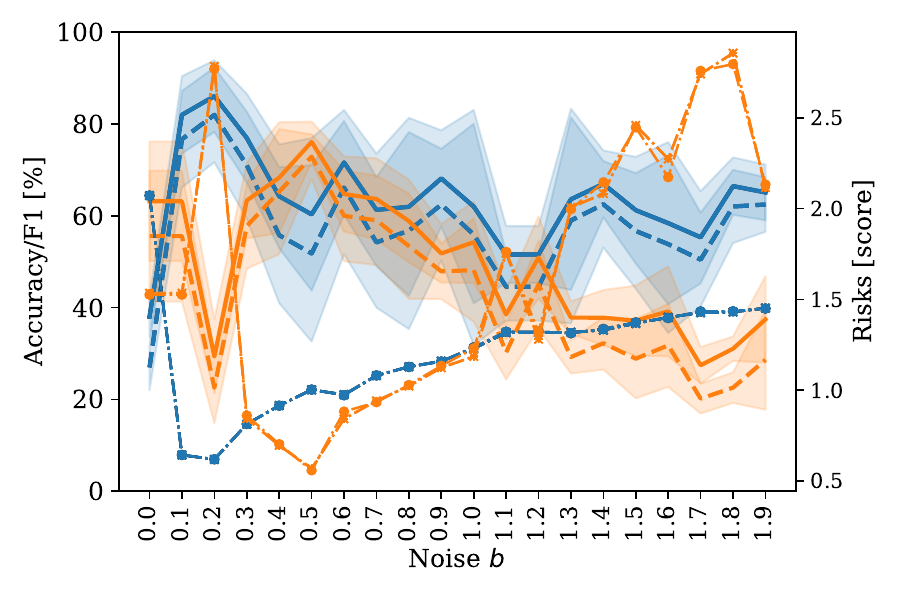}
        \end{minipage}
        \hfill
    	\begin{minipage}[t]{0.325\linewidth}
            \centering
        	\includegraphics[trim=11 45 11 11, clip, width=1.0\linewidth]{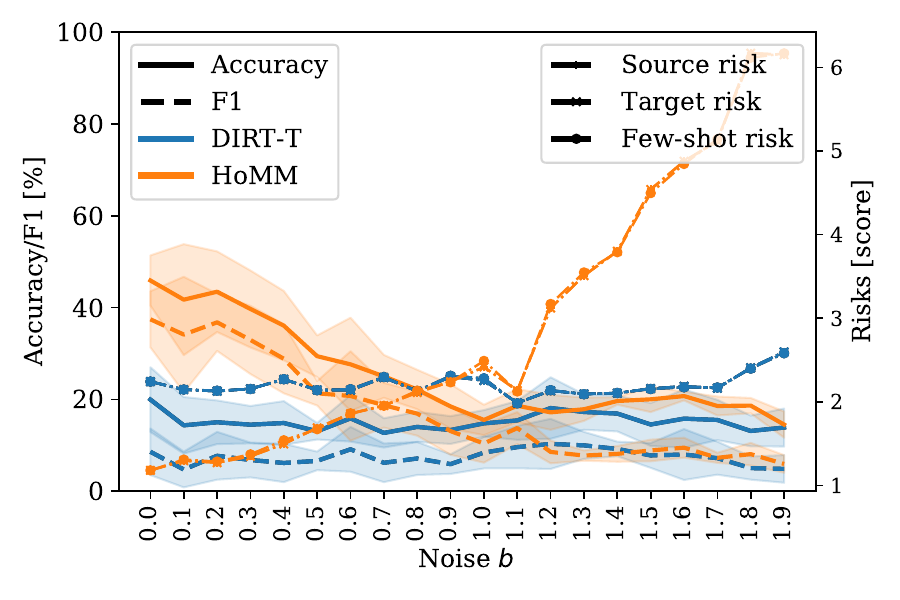}
        \end{minipage}
        \hfill
    	\begin{minipage}[t]{0.325\linewidth}
            \centering
        	\includegraphics[trim=11 45 11 11, clip, width=1.0\linewidth]{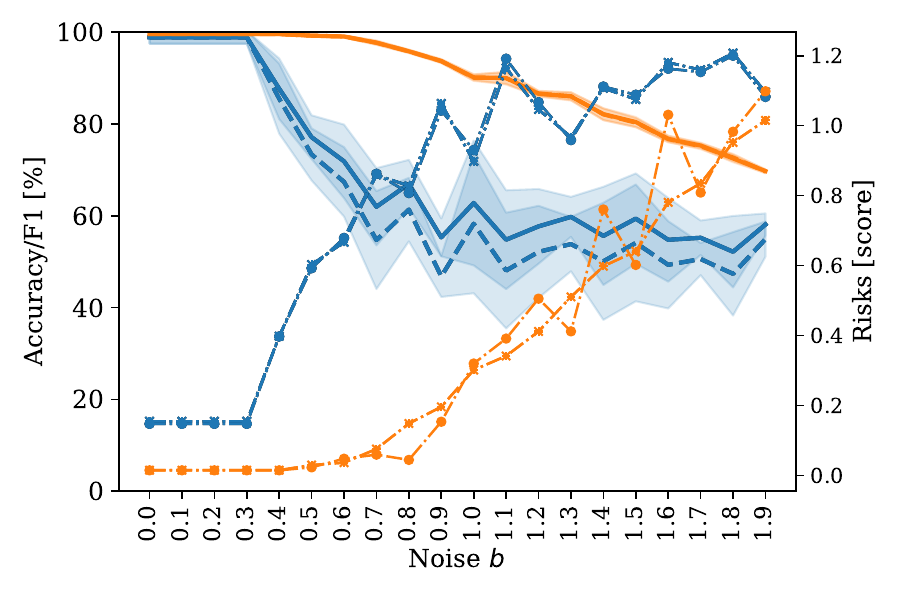}
        \end{minipage}
        \subcaption{Methods DIRT-T and $\text{HoMM}_{p=3}$.}
        \label{figure_results_sin_cos4}
    \end{minipage}
	\begin{minipage}[t]{0.495\linewidth}
    	\begin{minipage}[t]{0.325\linewidth}
            \centering
        	\includegraphics[trim=11 45 11 11, clip, width=1.0\linewidth]{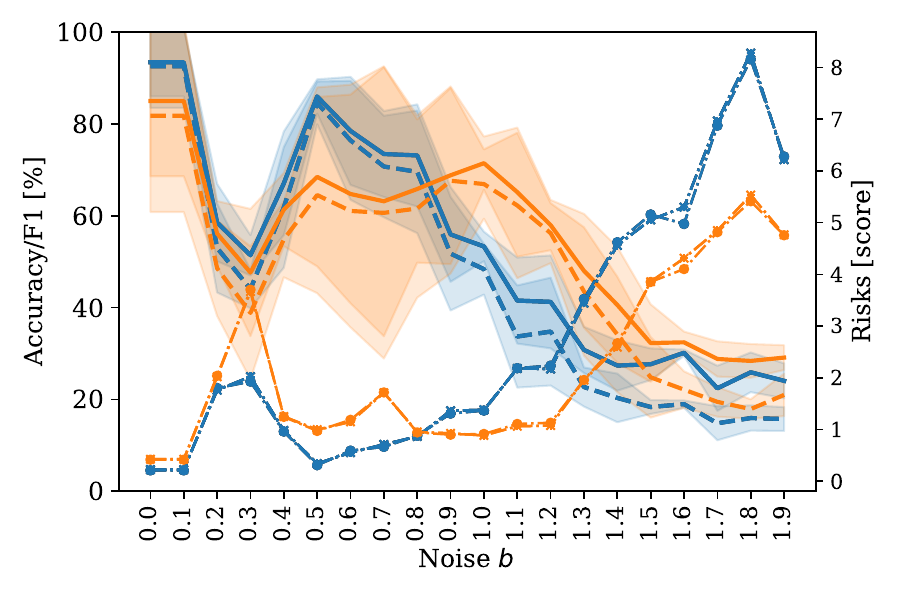}
        \end{minipage}
        \hfill
    	\begin{minipage}[t]{0.325\linewidth}
            \centering
        	\includegraphics[trim=11 45 11 11, clip, width=1.0\linewidth]{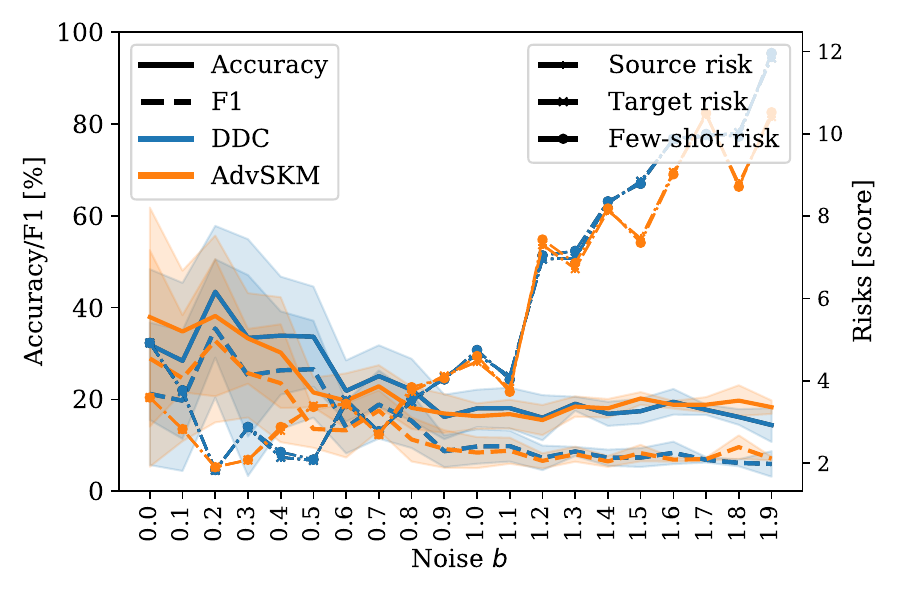}
        \end{minipage}
        \hfill
    	\begin{minipage}[t]{0.325\linewidth}
            \centering
        	\includegraphics[trim=11 45 11 11, clip, width=1.0\linewidth]{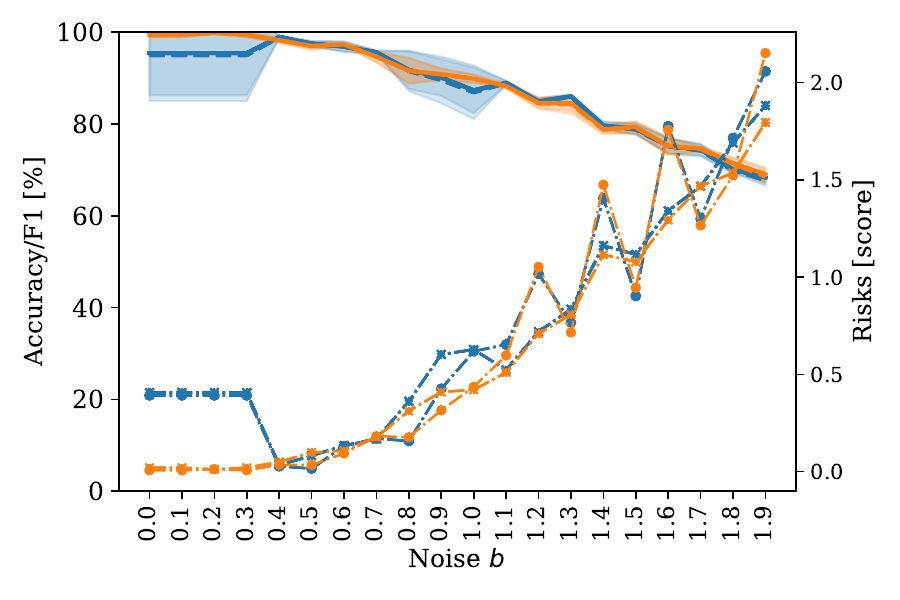}
        \end{minipage}
        \subcaption{Methods DDC and AdvSKM.}
        \label{figure_results_sin_cos5}
    \end{minipage}
    \hfill
	\begin{minipage}[t]{0.495\linewidth}
    	\begin{minipage}[t]{0.325\linewidth}
            \centering
        	\includegraphics[trim=11 45 11 11, clip, width=1.0\linewidth]{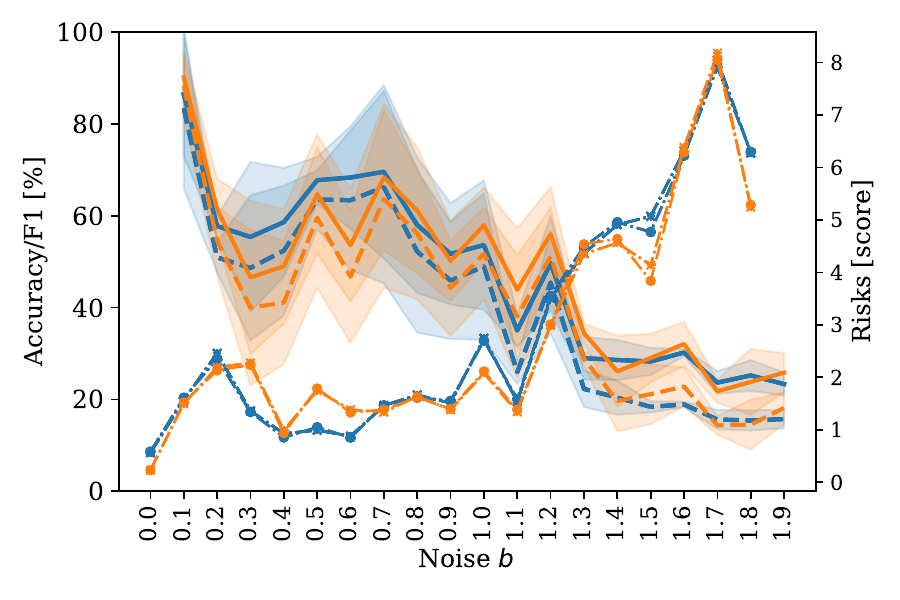}
        \end{minipage}
        \hfill
    	\begin{minipage}[t]{0.325\linewidth}
            \centering
        	\includegraphics[trim=11 45 11 11, clip, width=1.0\linewidth]{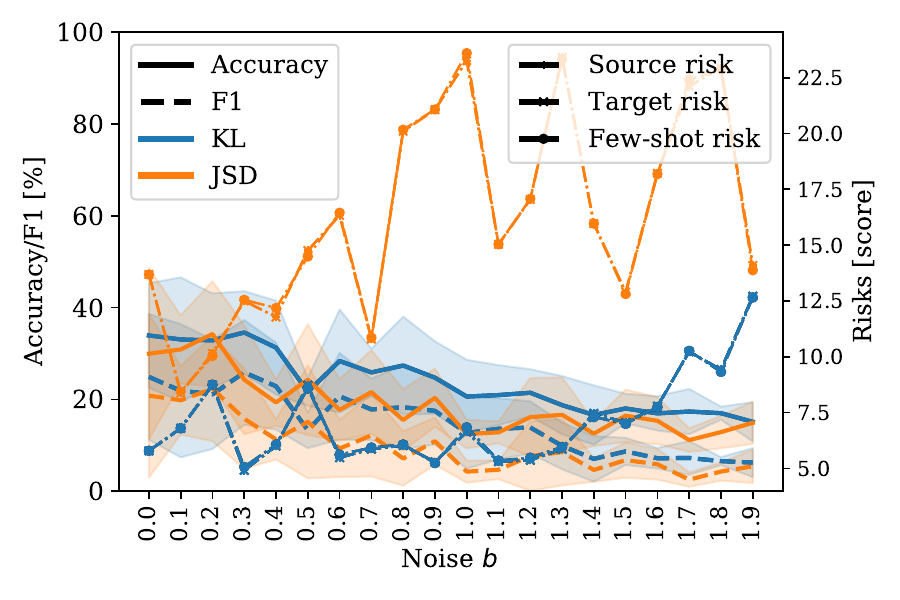}
        \end{minipage}
        \hfill
    	\begin{minipage}[t]{0.325\linewidth}
            \centering
        	\includegraphics[trim=11 45 11 11, clip, width=1.0\linewidth]{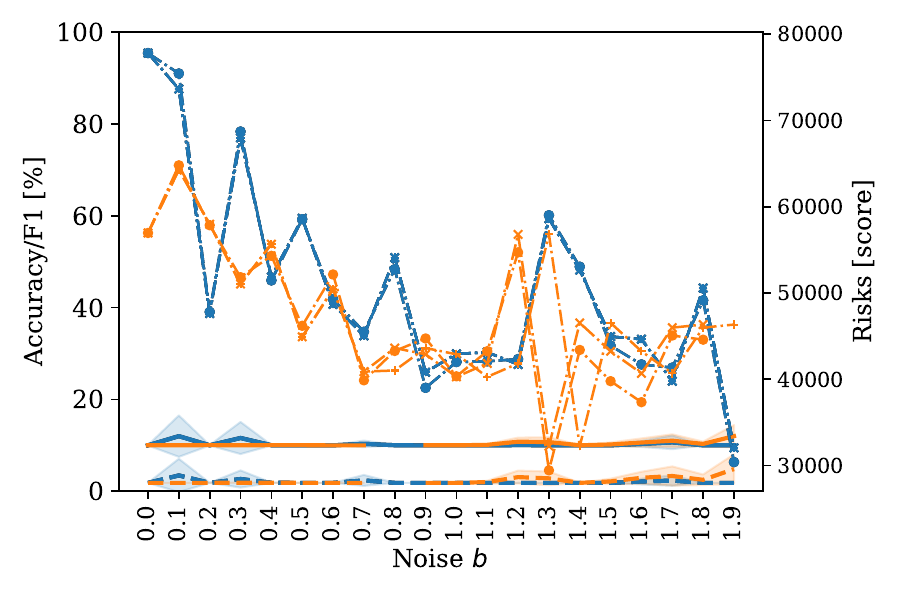}
        \end{minipage}
        \subcaption{Methods KL and JSD.}
        \label{figure_results_sin_cos6}
    \end{minipage}
	\begin{minipage}[t]{0.495\linewidth}
    	\begin{minipage}[t]{0.325\linewidth}
            \centering
        	\includegraphics[trim=11 45 11 11, clip, width=1.0\linewidth]{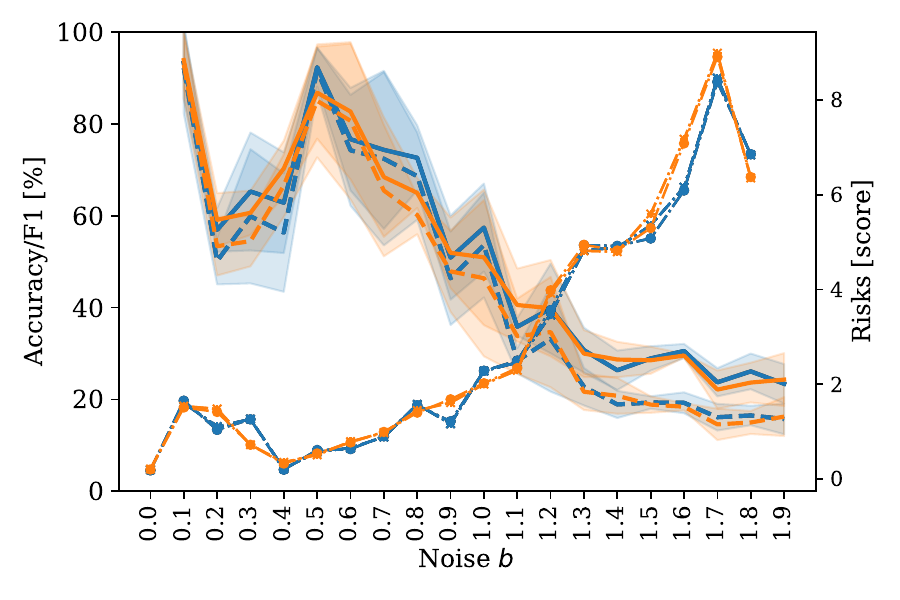}
        \end{minipage}
        \hfill
    	\begin{minipage}[t]{0.325\linewidth}
            \centering
        	\includegraphics[trim=11 45 11 11, clip, width=1.0\linewidth]{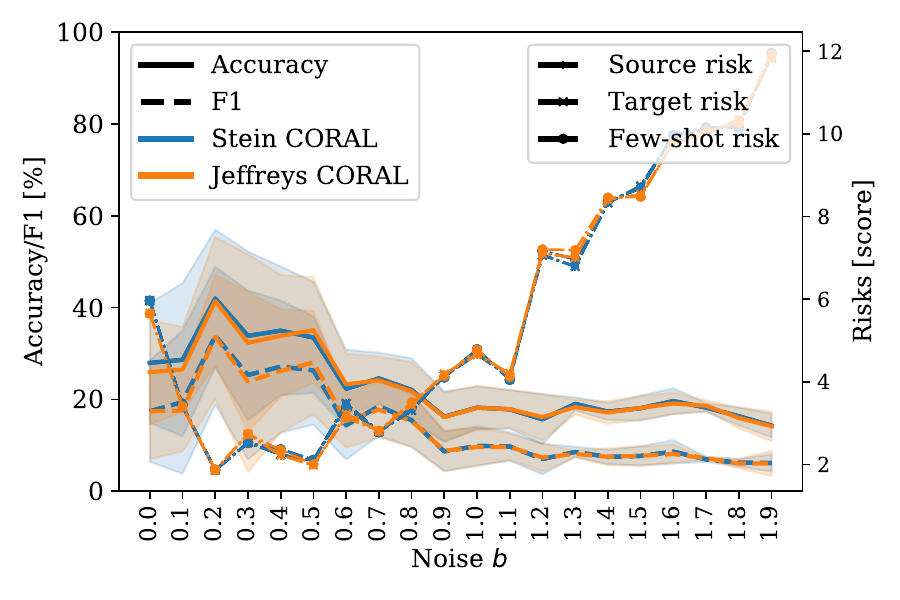}
        \end{minipage}
        \hfill
    	\begin{minipage}[t]{0.325\linewidth}
            \centering
        	\includegraphics[trim=11 45 11 11, clip, width=1.0\linewidth]{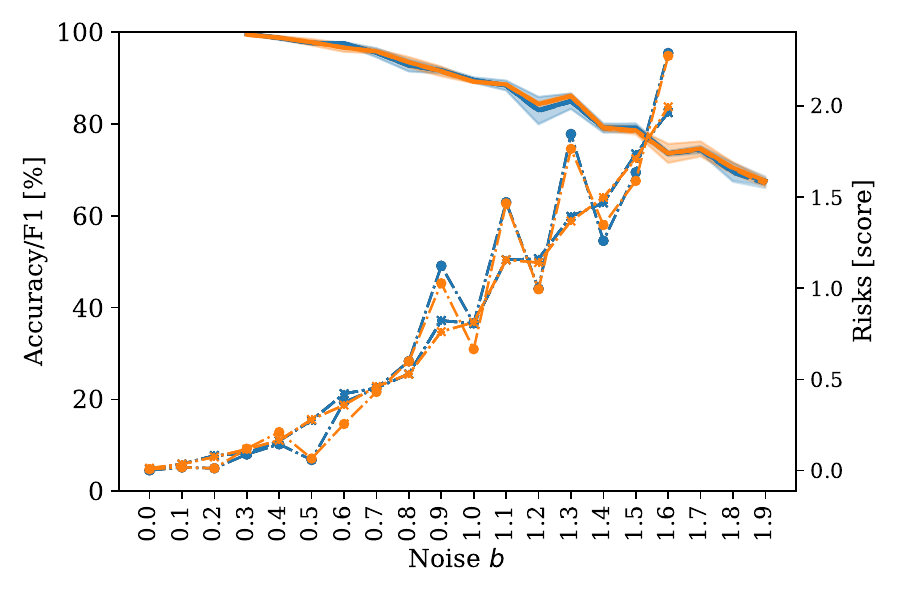}
        \end{minipage}
        \subcaption{Methods Stein CORAL and Jeffreys CORAL.}
        \label{figure_results_sin_cos7}
    \end{minipage}
    \hfill
	\begin{minipage}[t]{0.495\linewidth}
    	\begin{minipage}[t]{0.325\linewidth}
            \centering
        	\includegraphics[trim=11 45 11 11, clip, width=1.0\linewidth]{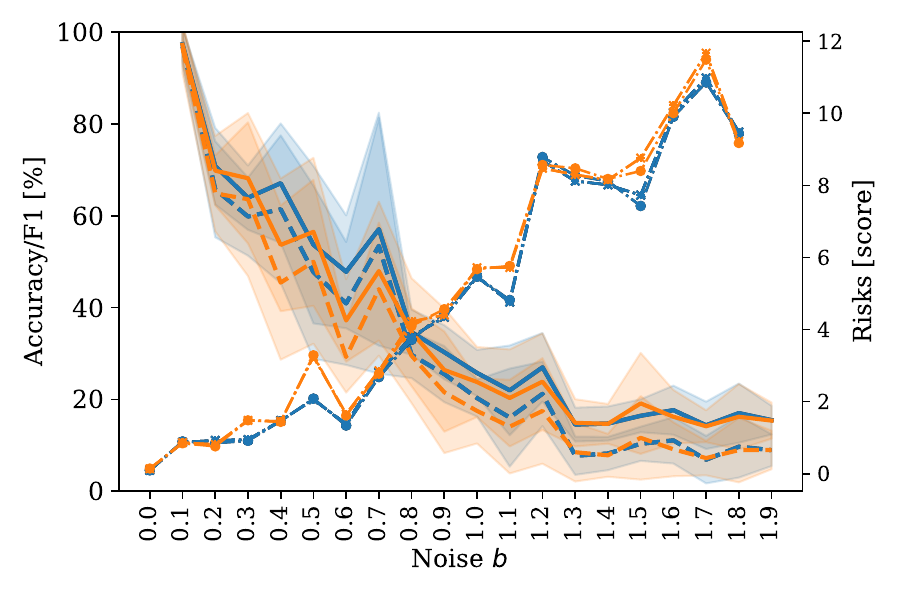}
        \end{minipage}
        \hfill
    	\begin{minipage}[t]{0.325\linewidth}
            \centering
        	\includegraphics[trim=11 45 11 11, clip, width=1.0\linewidth]{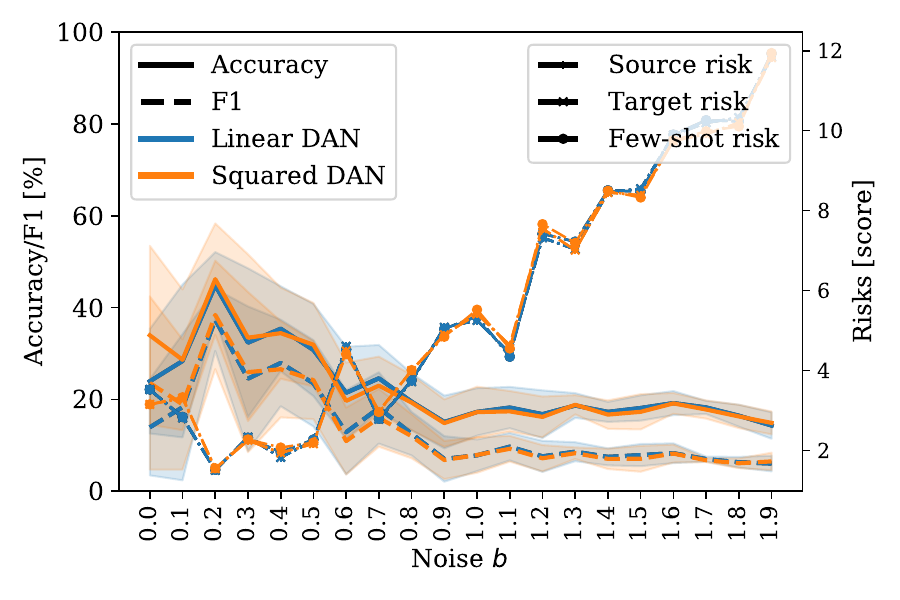}
        \end{minipage}
        \hfill
    	\begin{minipage}[t]{0.325\linewidth}
            \centering
        	\includegraphics[trim=11 45 11 11, clip, width=1.0\linewidth]{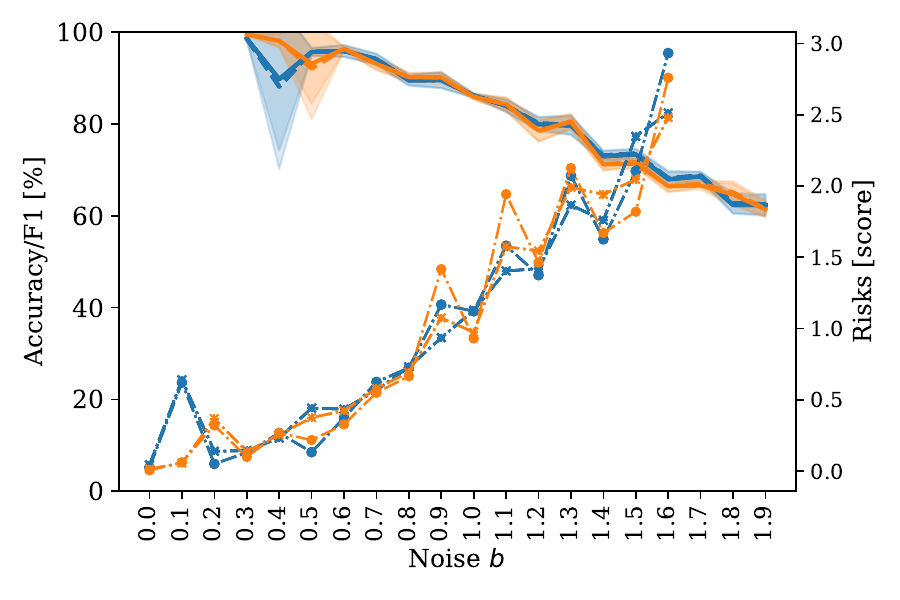}
        \end{minipage}
        \subcaption{Methods linear DAN and squared DAN.}
        \label{figure_results_sin_cos8}
    \end{minipage}
	\begin{minipage}[t]{0.495\linewidth}
    	\begin{minipage}[t]{0.325\linewidth}
            \centering
        	\includegraphics[trim=11 45 11 11, clip, width=1.0\linewidth]{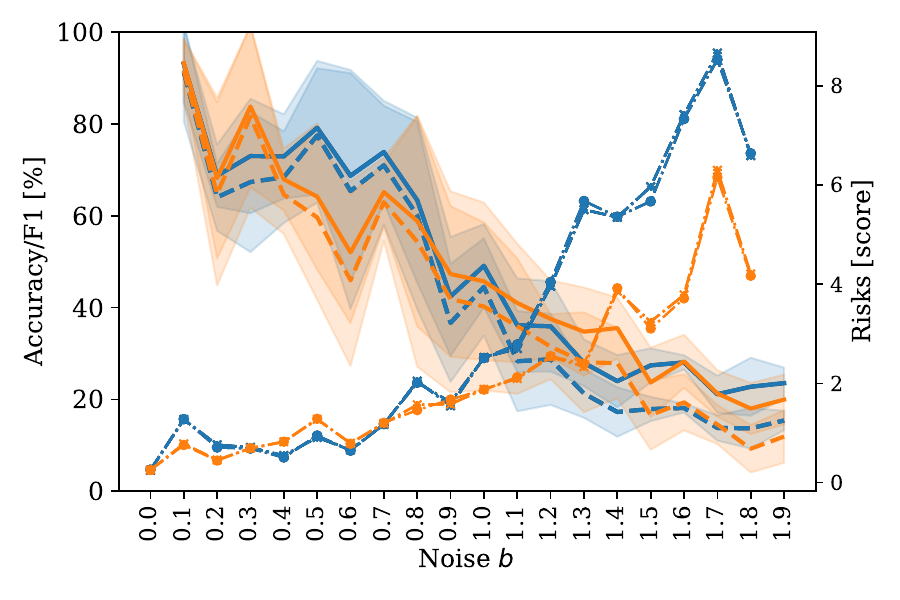}
        \end{minipage}
        \hfill
    	\begin{minipage}[t]{0.325\linewidth}
            \centering
        	\includegraphics[trim=11 45 11 11, clip, width=1.0\linewidth]{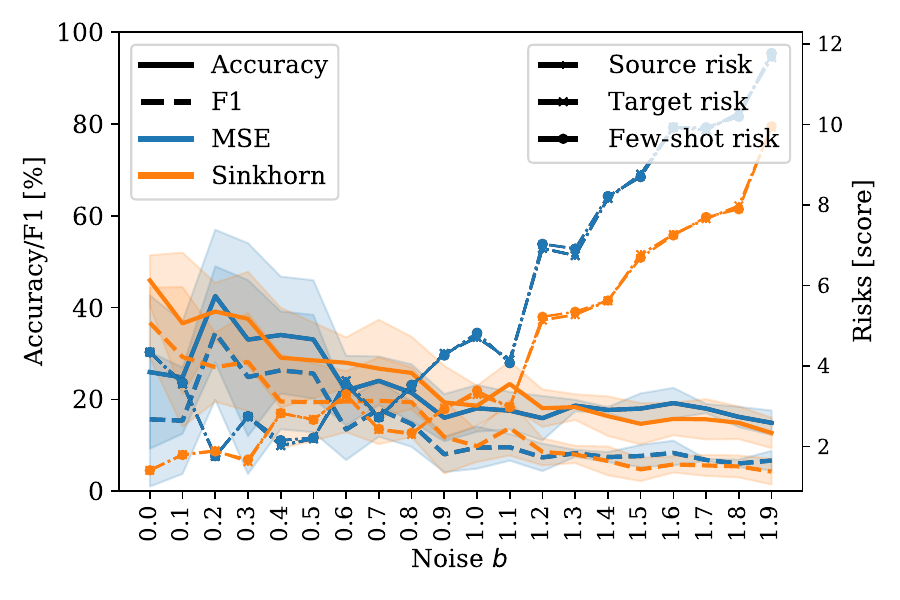}
        \end{minipage}
        \hfill
    	\begin{minipage}[t]{0.325\linewidth}
            \centering
        	\includegraphics[trim=11 45 11 11, clip, width=1.0\linewidth]{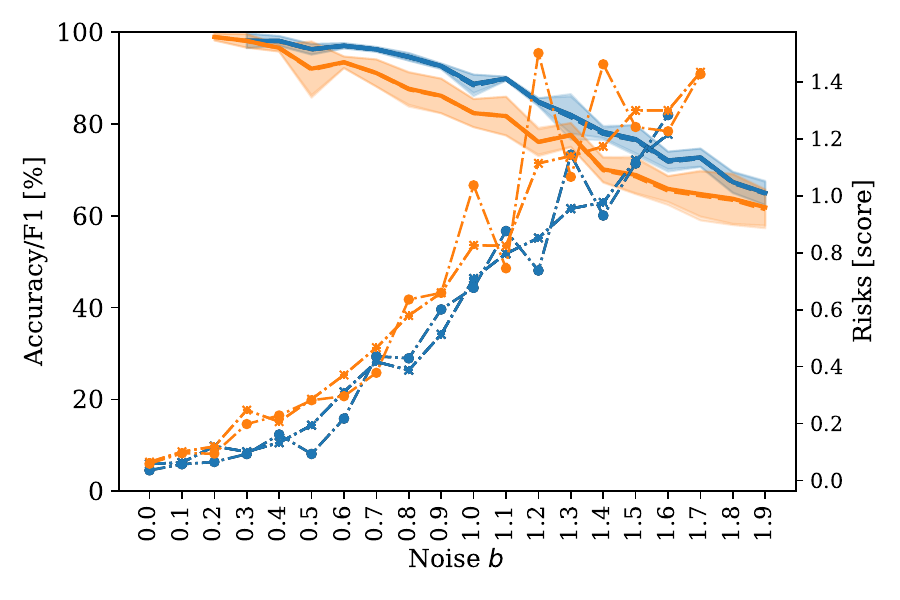}
        \end{minipage}
        \subcaption{Methods MSE and Sinkhorn.}
        \label{figure_results_sin_cos9}
    \end{minipage}
    \hfill
	\begin{minipage}[t]{0.495\linewidth}
    	\begin{minipage}[t]{0.325\linewidth}
            \centering
        	\includegraphics[trim=11 45 11 11, clip, width=1.0\linewidth]{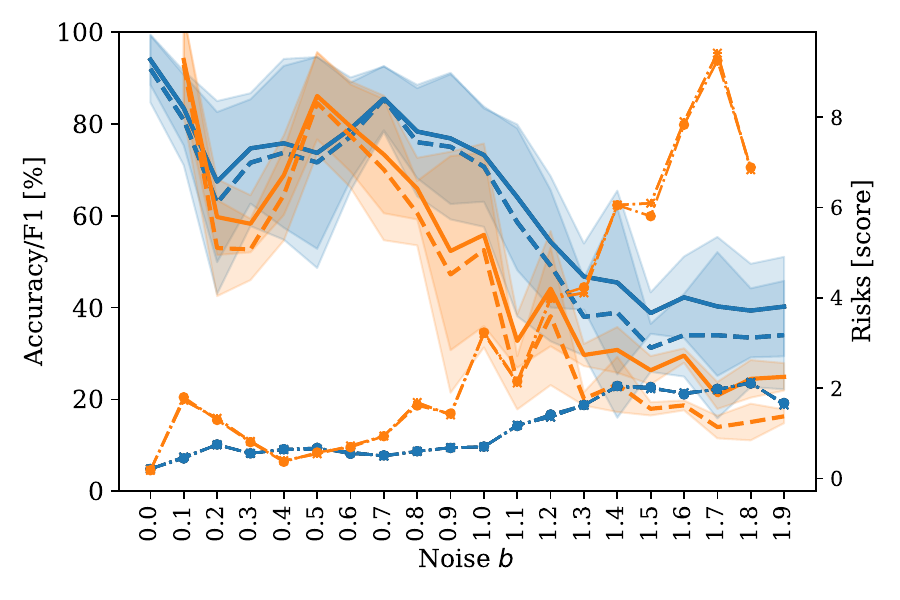}
        \end{minipage}
        \hfill
    	\begin{minipage}[t]{0.325\linewidth}
            \centering
        	\includegraphics[trim=11 45 11 11, clip, width=1.0\linewidth]{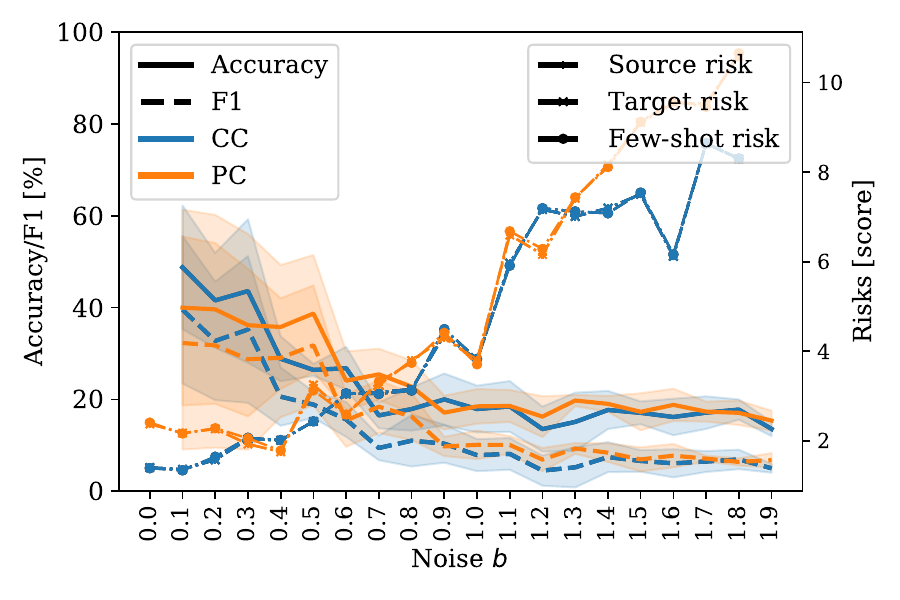}
        \end{minipage}
        \hfill
    	\begin{minipage}[t]{0.325\linewidth}
            \centering
        	\includegraphics[trim=11 45 11 11, clip, width=1.0\linewidth]{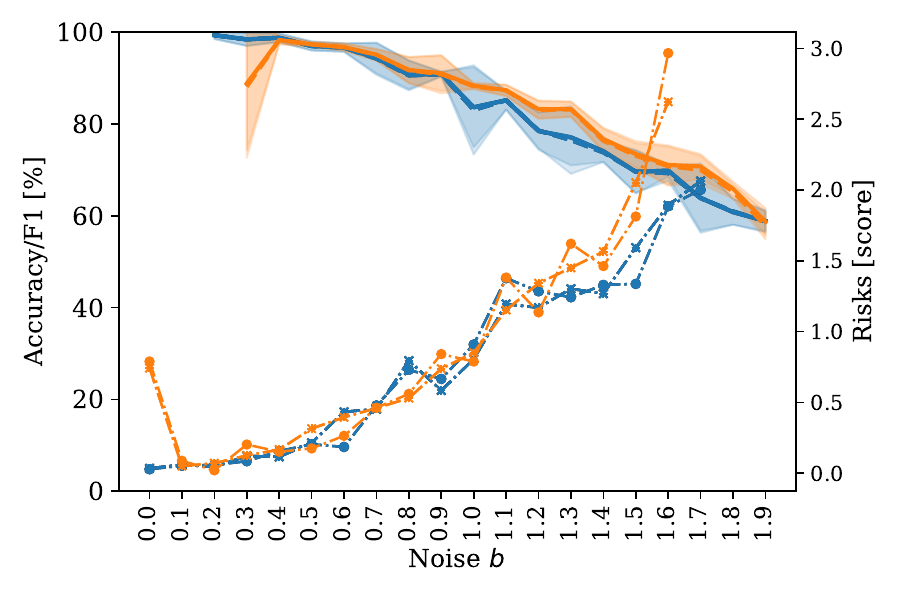}
        \end{minipage}
        \subcaption{Methods CS and PC.}
        \label{figure_results_sin_cos10}
    \end{minipage}
    \begin{minipage}[t]{0.495\linewidth}
    	\begin{minipage}[t]{0.325\linewidth}
            \centering
        	\includegraphics[trim=11 45 11 11, clip, width=1.0\linewidth]{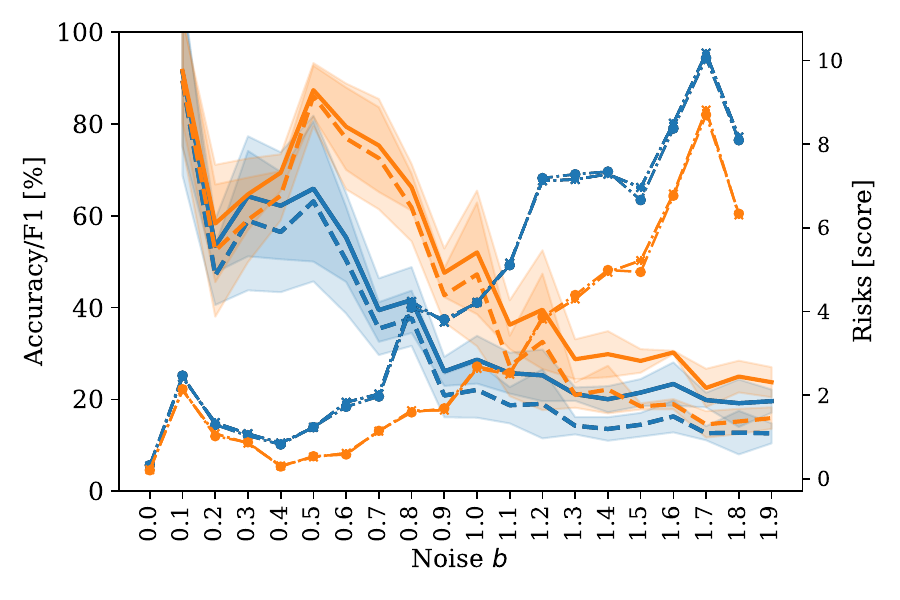}
        \end{minipage}
        \hfill
    	\begin{minipage}[t]{0.325\linewidth}
            \centering
        	\includegraphics[trim=11 45 11 11, clip, width=1.0\linewidth]{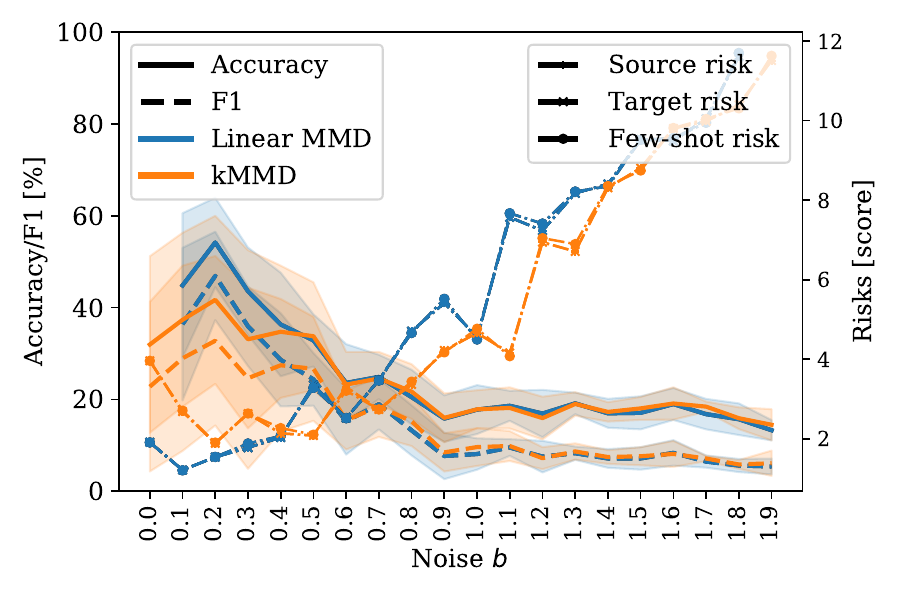}
        \end{minipage}
        \hfill
    	\begin{minipage}[t]{0.325\linewidth}
            \centering
        	\includegraphics[trim=11 45 11 11, clip, width=1.0\linewidth]{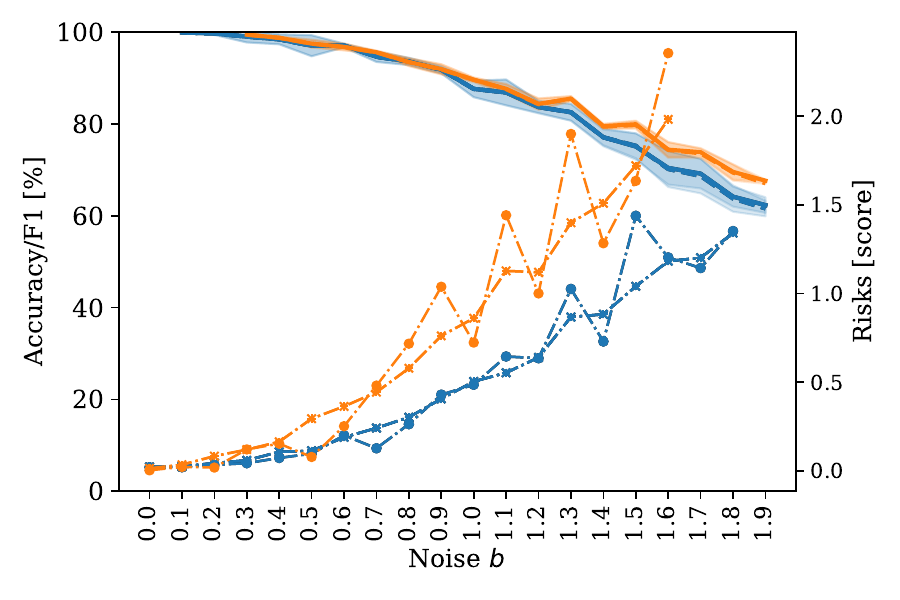}
        \end{minipage}
        \subcaption{Methods linear MMD and kMMD.}
        \label{figure_results_sin_cos11}
    \end{minipage}
    \hfill
	\begin{minipage}[t]{0.495\linewidth}
    	\begin{minipage}[t]{0.325\linewidth}
            \centering
        	\includegraphics[trim=11 45 11 11, clip, width=1.0\linewidth]{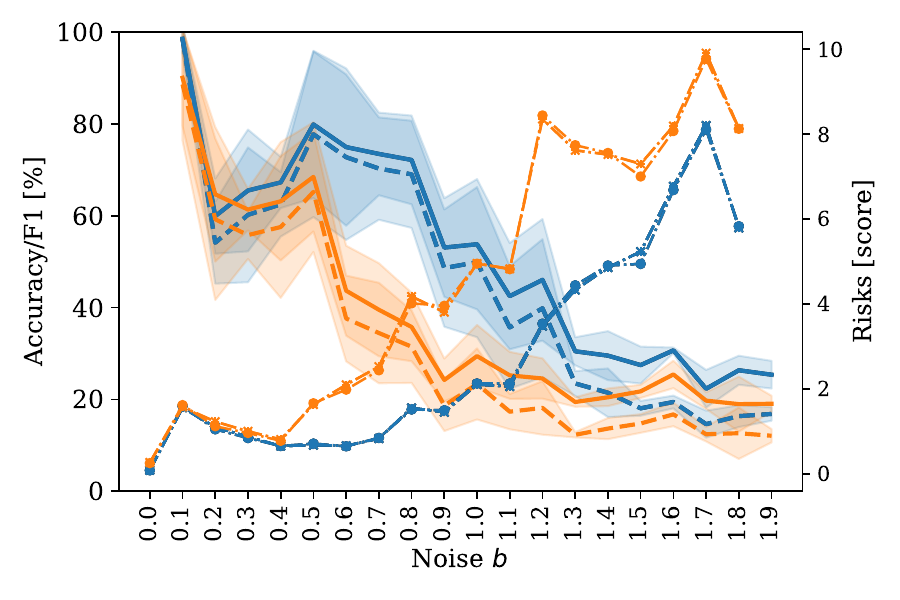}
        \end{minipage}
        \hfill
    	\begin{minipage}[t]{0.325\linewidth}
            \centering
        	\includegraphics[trim=11 45 11 11, clip, width=1.0\linewidth]{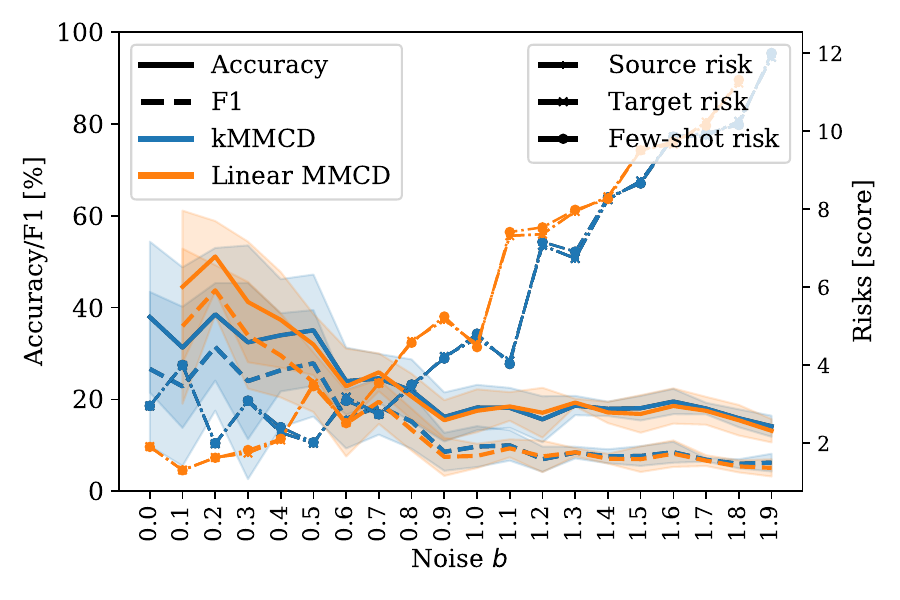}
        \end{minipage}
        \hfill
    	\begin{minipage}[t]{0.325\linewidth}
            \centering
        	\includegraphics[trim=11 45 11 11, clip, width=1.0\linewidth]{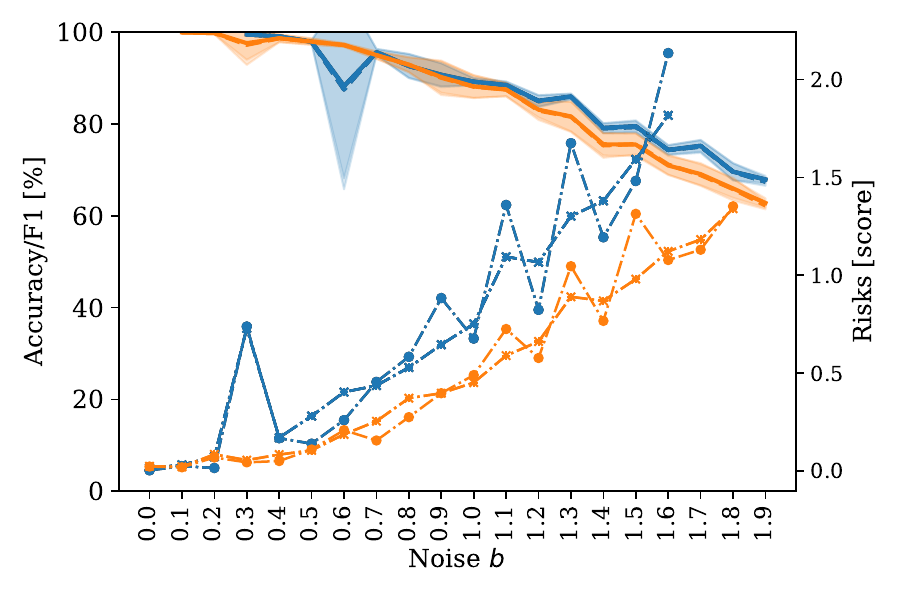}
        \end{minipage}
        \subcaption{Methods kMMCD and linear MMCD.}
        \label{figure_results_sin_cos12}
    \end{minipage}
	\begin{minipage}[t]{0.495\linewidth}
    	\begin{minipage}[t]{0.325\linewidth}
            \centering
        	\includegraphics[trim=11 45 11 11, clip, width=1.0\linewidth]{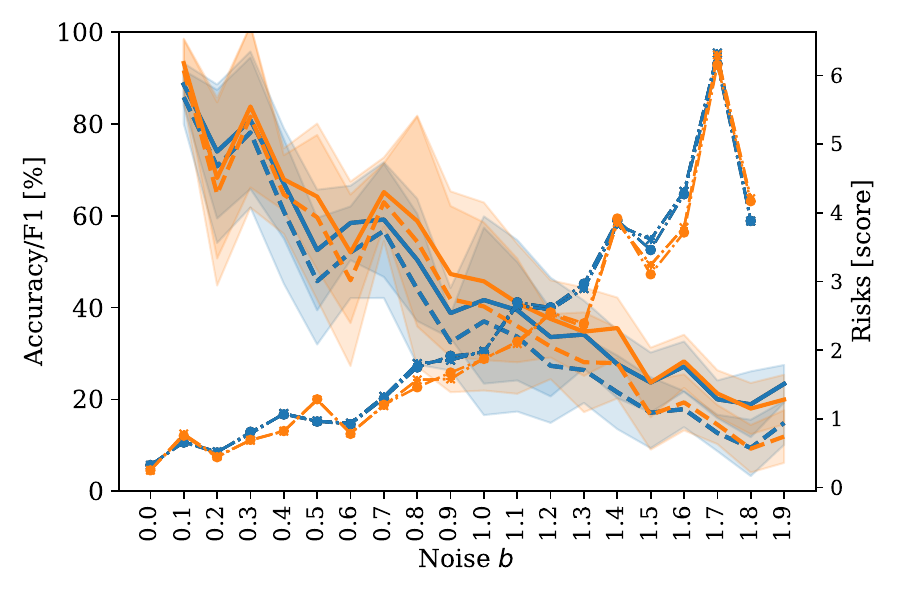}
        \end{minipage}
        \hfill
    	\begin{minipage}[t]{0.325\linewidth}
            \centering
        	\includegraphics[trim=11 45 11 11, clip, width=1.0\linewidth]{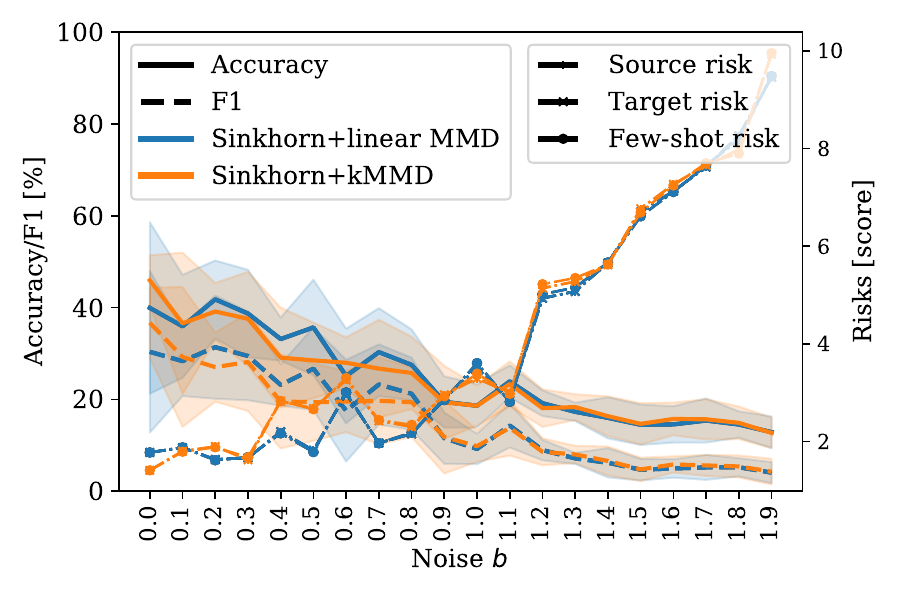}
        \end{minipage}
        \hfill
    	\begin{minipage}[t]{0.325\linewidth}
            \centering
        	\includegraphics[trim=11 45 11 11, clip, width=1.0\linewidth]{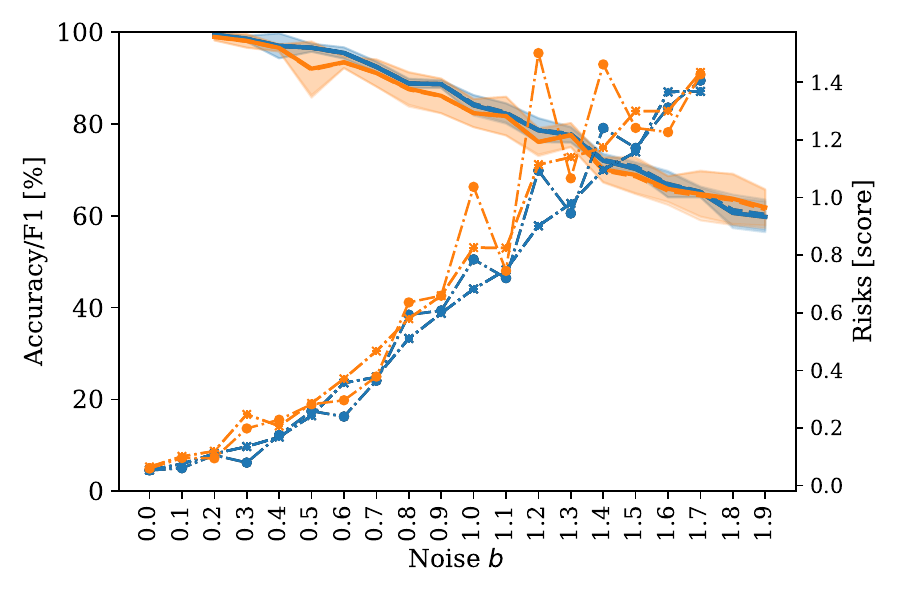}
        \end{minipage}
        \subcaption{Methods Sinkhorn+[linear MMD, kMMD].}
        \label{figure_results_sin_cos13}
    \end{minipage}
    \hfill
	\begin{minipage}[t]{0.495\linewidth}
    	\begin{minipage}[t]{0.325\linewidth}
            \centering
        	\includegraphics[trim=11 45 11 11, clip, width=1.0\linewidth]{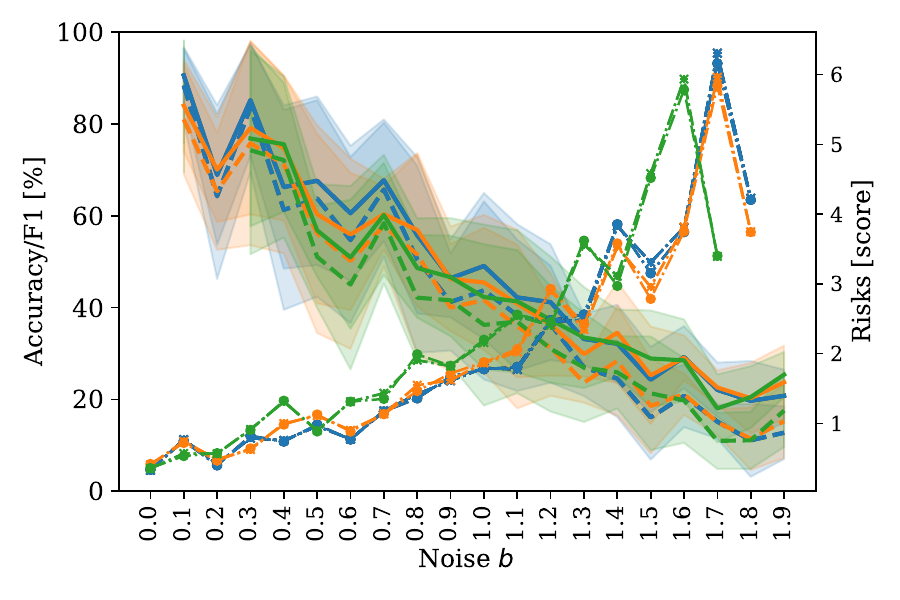}
        \end{minipage}
        \hfill
    	\begin{minipage}[t]{0.325\linewidth}
            \centering
        	\includegraphics[trim=11 45 11 11, clip, width=1.0\linewidth]{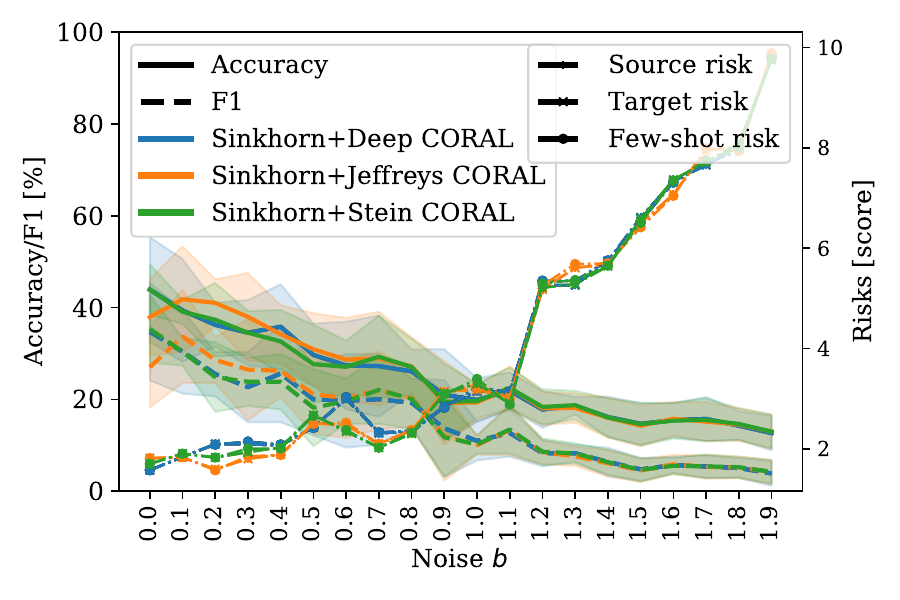}
        \end{minipage}
        \hfill
    	\begin{minipage}[t]{0.325\linewidth}
            \centering
        	\includegraphics[trim=11 45 11 11, clip, width=1.0\linewidth]{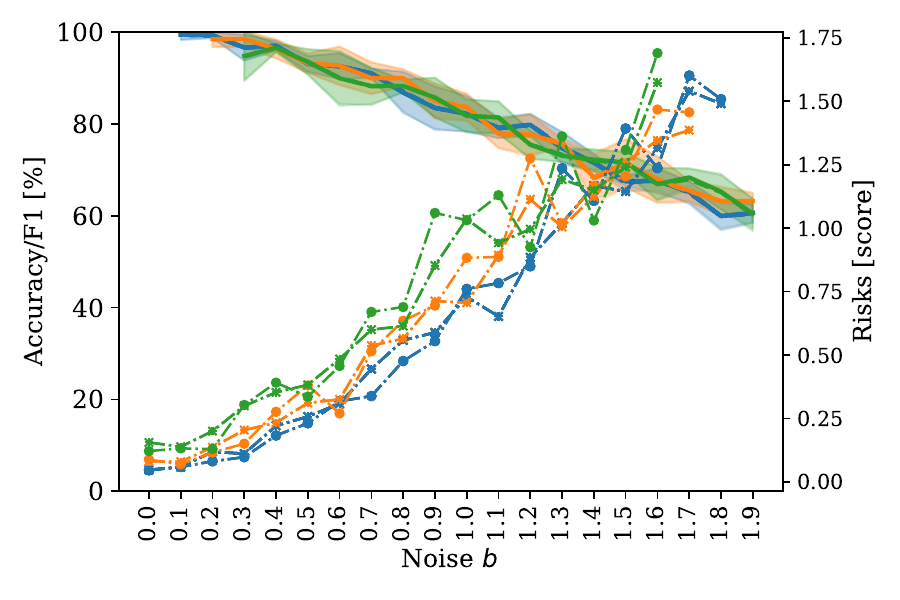}
        \end{minipage}
        \subcaption{Methods Sinkhorn+[DeepCORAL, Jeffreys and Stein CORAL].}
        \label{figure_results_sin_cos14}
    \end{minipage}
	\begin{minipage}[t]{0.495\linewidth}
    	\begin{minipage}[t]{0.325\linewidth}
            \centering
        	\includegraphics[trim=11 45 11 11, clip, width=1.0\linewidth]{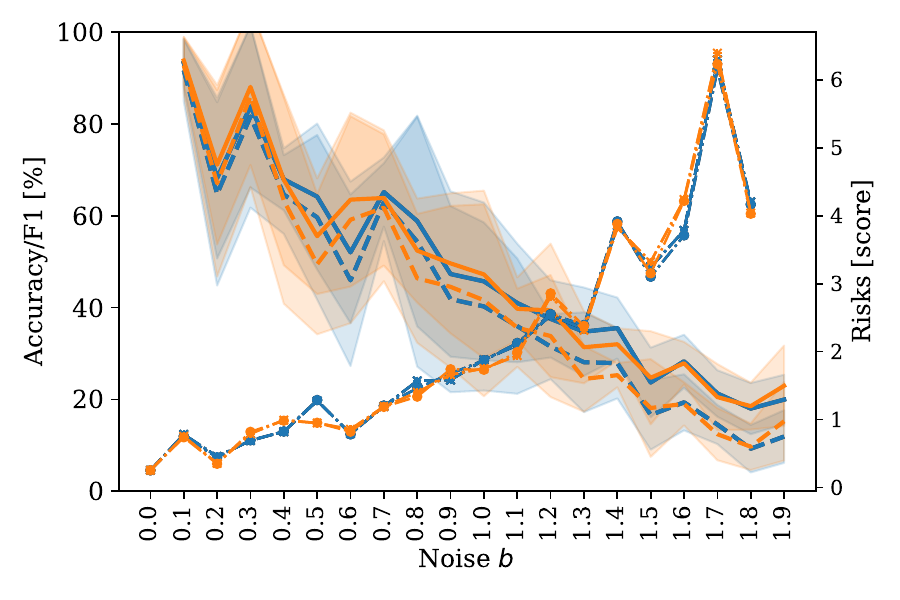}
        \end{minipage}
        \hfill
    	\begin{minipage}[t]{0.325\linewidth}
            \centering
        	\includegraphics[trim=11 45 11 11, clip, width=1.0\linewidth]{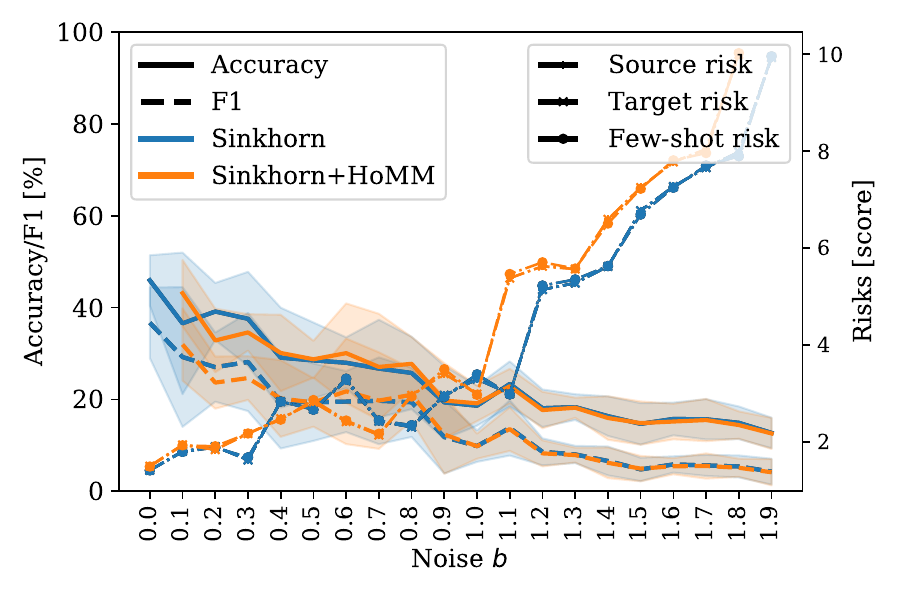}
        \end{minipage}
        \hfill
    	\begin{minipage}[t]{0.325\linewidth}
            \centering
        	\includegraphics[trim=11 45 11 11, clip, width=1.0\linewidth]{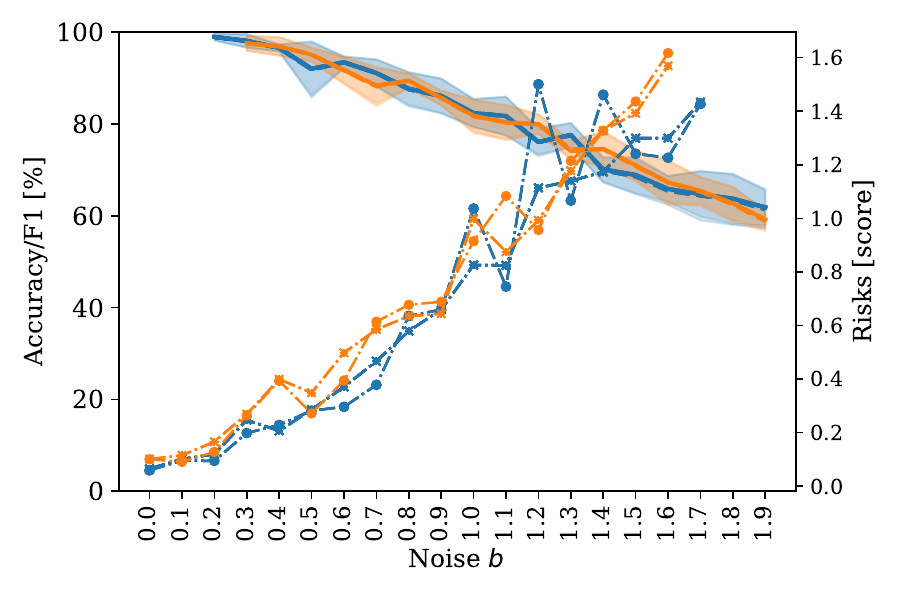}
        \end{minipage}
        \subcaption{Methods Sinkhorn and Sinkhorn+HoMM.}
        \label{figure_results_sin_cos15}
    \end{minipage}
    \hfill
	\begin{minipage}[t]{0.495\linewidth}
    	\begin{minipage}[t]{0.325\linewidth}
            \centering
        	\includegraphics[trim=11 45 11 11, clip, width=1.0\linewidth]{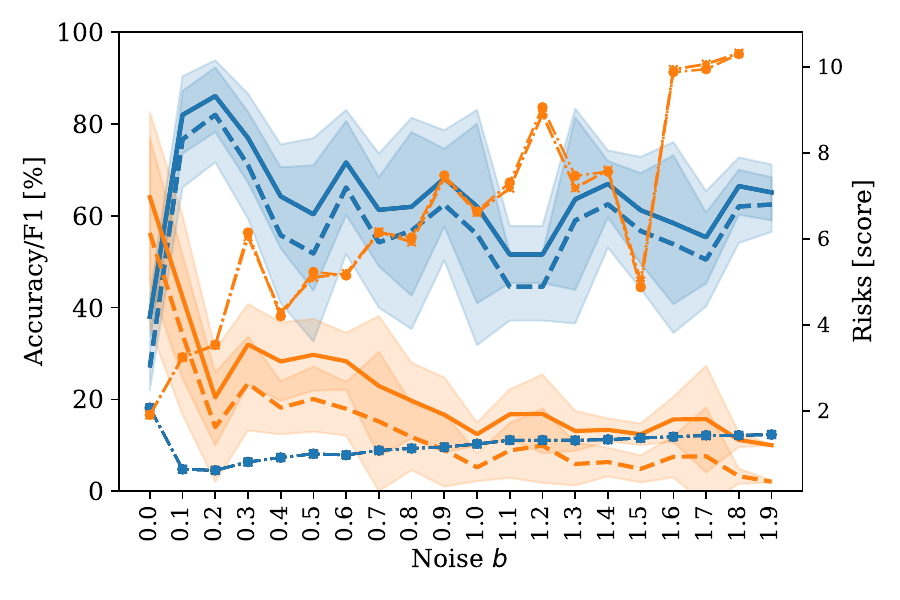}
        \end{minipage}
        \hfill
    	\begin{minipage}[t]{0.325\linewidth}
            \centering
        	\includegraphics[trim=11 45 11 11, clip, width=1.0\linewidth]{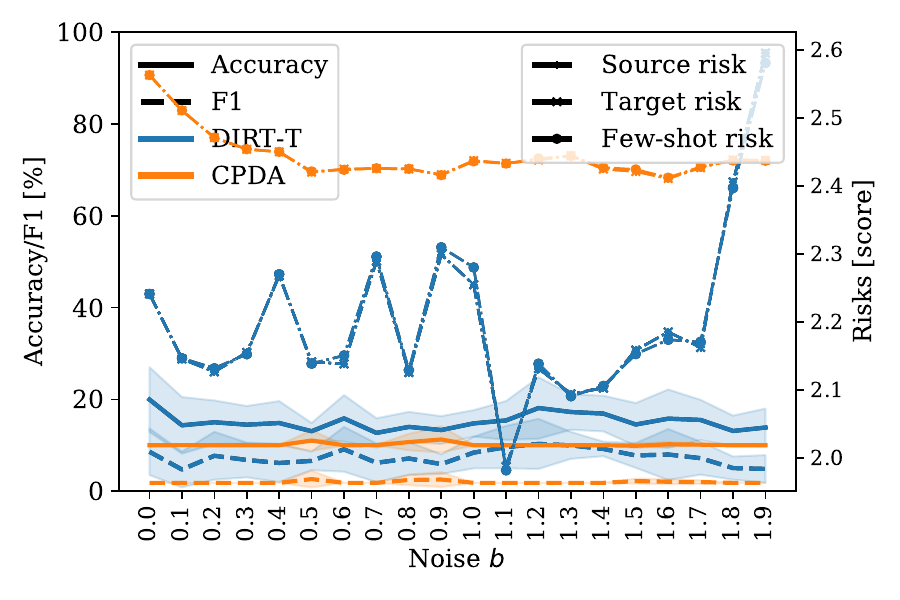}
        \end{minipage}
        \hfill
    	\begin{minipage}[t]{0.325\linewidth}
            \centering
        	\includegraphics[trim=11 45 11 11, clip, width=1.0\linewidth]{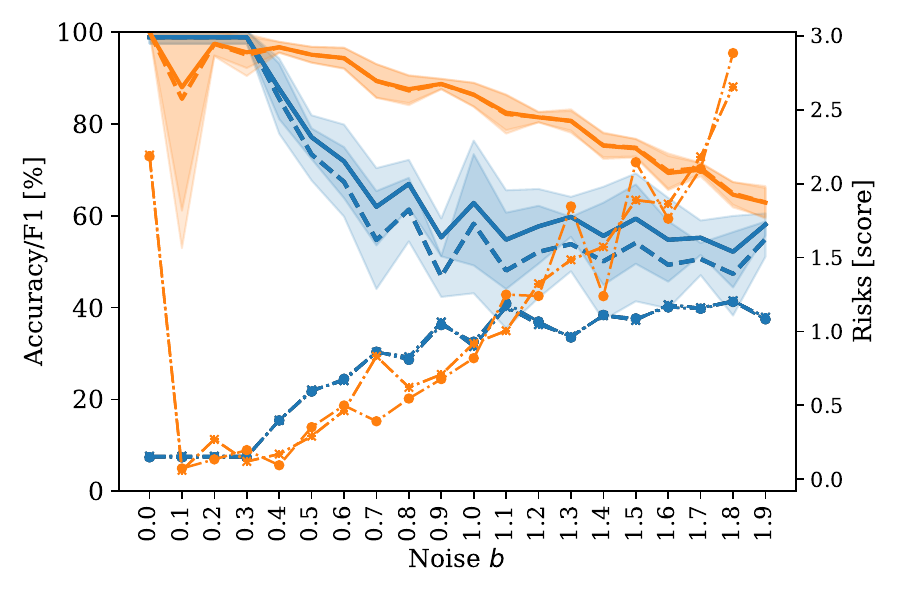}
        \end{minipage}
        \subcaption{Methods DIRT-T and \textbf{CPDA} (ours).}
        \label{figure_results_sin_cos16}
    \end{minipage}
    \caption{Performance of the evaluated domain-adaptation methods on the controlled sinusoidal benchmark as a function of the noise parameter $b \in \{0.0,0.1,\ldots,1.9\}$. The target domain contains class-specific sinusoidal signals corrupted by noise sampled from $\mathcal{U}(0,b)$, whereas the source signals are sign-inverted and corrupted by noise sampled from $\mathcal{U}(0,b/2)$. Each subfigure compares the methods indicated in its subcaption using a CNN (left), ResNet18 (center), and TCN (right) backbone. Solid and dashed curves report target-domain accuracy and F1-score, respectively, on the left vertical axis. The curves associated with the secondary vertical axis show the source, target, and few-shot risks. Lines represent the mean over five independent runs, while the shaded regions indicate the corresponding standard deviations. Increasing $b$ produces a progressively stronger stochastic domain shift and therefore reveals the robustness of the different adaptation objectives under increasing noise.}
    \label{figure_results_sin_cos}
\end{figure*}

\begin{figure*}[!t]
    \centering
	\begin{minipage}[t]{0.495\linewidth}
    	\begin{minipage}[t]{0.325\linewidth}
            \centering
        	\includegraphics[trim=11 45 11 11, clip, width=1.0\linewidth]{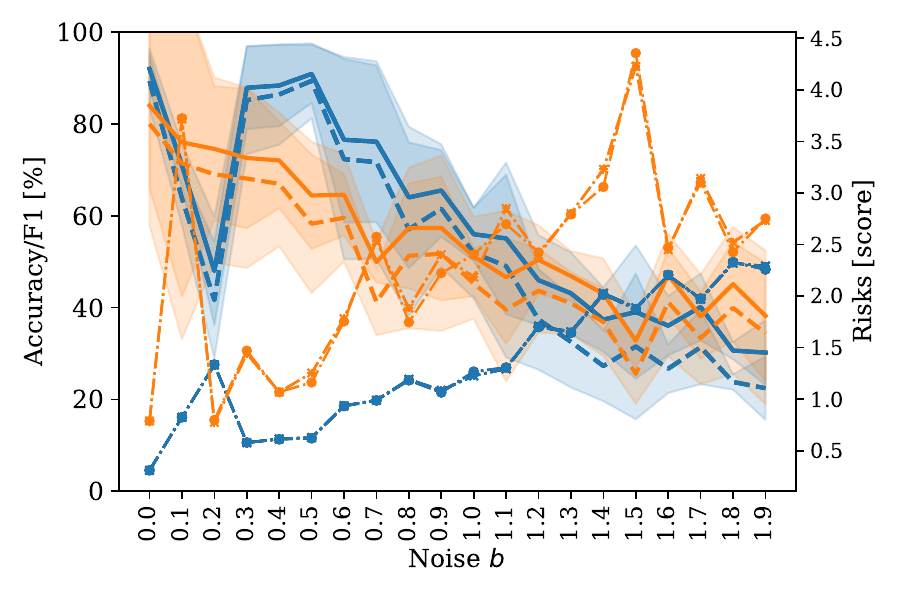}
        \end{minipage}
        \hfill
    	\begin{minipage}[t]{0.325\linewidth}
            \centering
        	\includegraphics[trim=11 45 11 11, clip, width=1.0\linewidth]{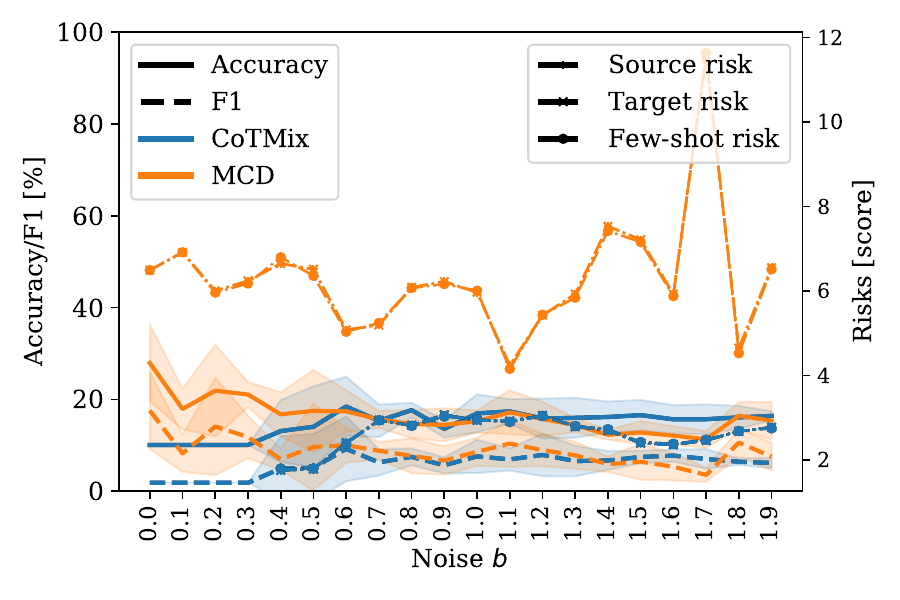}
        \end{minipage}
        \hfill
    	\begin{minipage}[t]{0.325\linewidth}
            \centering
        	\includegraphics[trim=11 45 11 11, clip, width=1.0\linewidth]{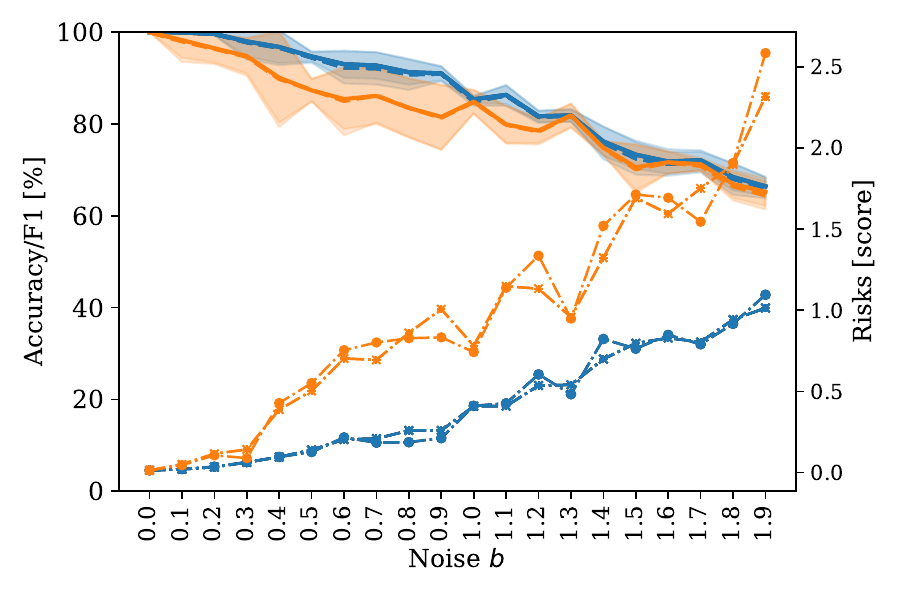}
        \end{minipage}
        \subcaption{Methods CoTMix and MCD.}
        \label{figure_results_sin_cos17}
    \end{minipage}
    \hfill
	\begin{minipage}[t]{0.495\linewidth}
    	\begin{minipage}[t]{0.325\linewidth}
            \centering
        	\includegraphics[trim=11 45 11 11, clip, width=1.0\linewidth]{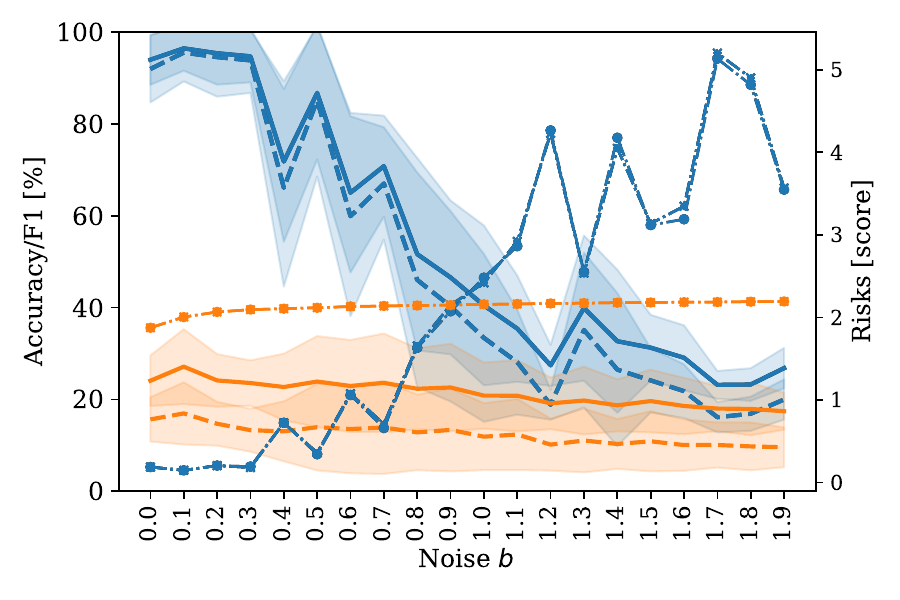}
        \end{minipage}
        \hfill
    	\begin{minipage}[t]{0.325\linewidth}
            \centering
        	\includegraphics[trim=11 45 11 11, clip, width=1.0\linewidth]{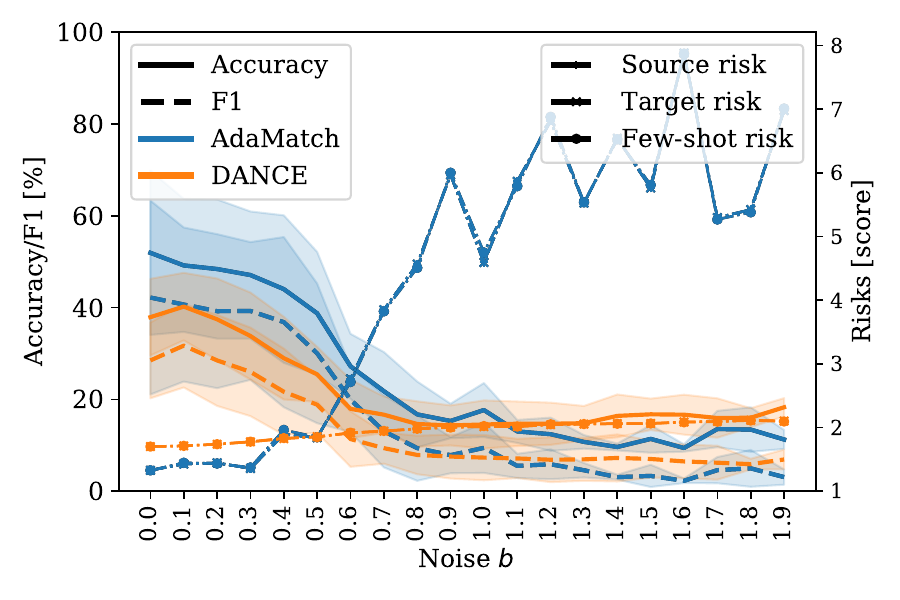}
        \end{minipage}
        \hfill
    	\begin{minipage}[t]{0.325\linewidth}
            \centering
        	\includegraphics[trim=11 45 11 11, clip, width=1.0\linewidth]{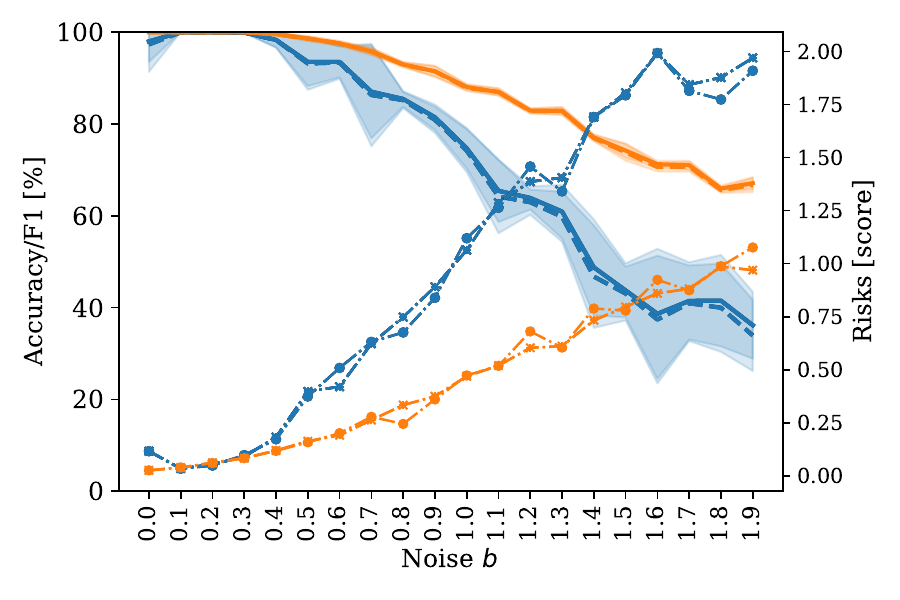}
        \end{minipage}
        \subcaption{Methods AdaMatch and DANCE.}
        \label{figure_results_sin_cos18}
    \end{minipage}
    \hfill
	\begin{minipage}[t]{0.495\linewidth}
    	\begin{minipage}[t]{0.325\linewidth}
            \centering
        	\includegraphics[trim=11 45 11 11, clip, width=1.0\linewidth]{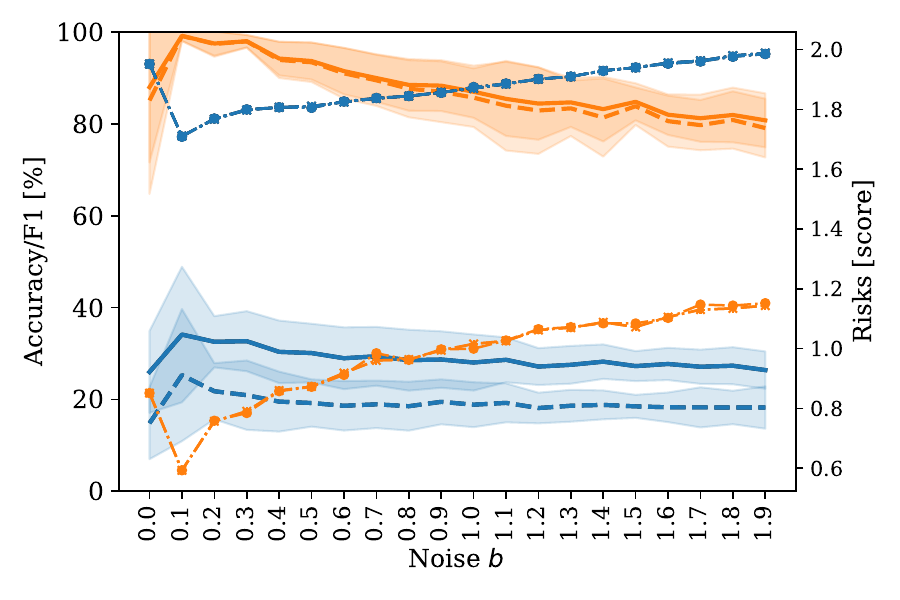}
        \end{minipage}
        \hfill
    	\begin{minipage}[t]{0.325\linewidth}
            \centering
        	\includegraphics[trim=11 45 11 11, clip, width=1.0\linewidth]{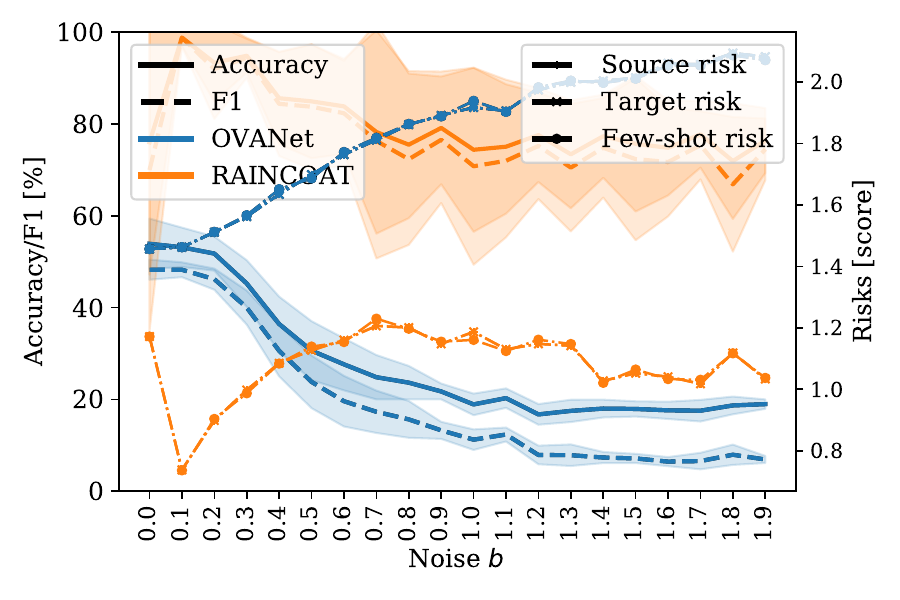}
        \end{minipage}
        \hfill
    	\begin{minipage}[t]{0.325\linewidth}
            \centering
        	\includegraphics[trim=11 45 11 11, clip, width=1.0\linewidth]{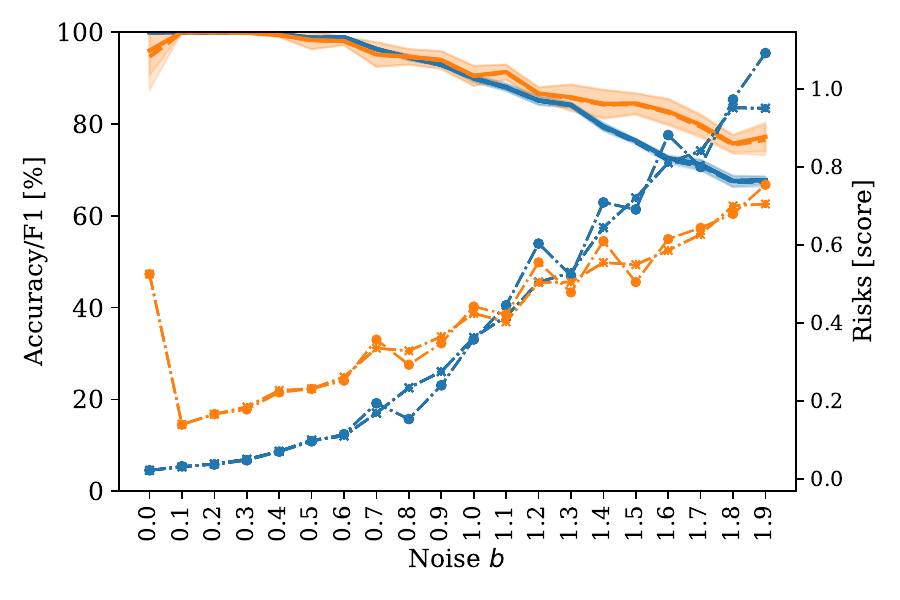}
        \end{minipage}
        \subcaption{Methods OVANet and RAINCOAT.}
        \label{figure_results_sin_cos19}
    \end{minipage}
    \hfill
    \caption{Continuation of Figure~\ref{figure_results_sin_cos}, comparing CoTMix, MCD, AdaMatch, DANCE, OVANet, and RAINCOAT on the controlled sinusoidal benchmark for increasing noise levels $b$. The columns show results for CNN, ResNet18, and TCN, respectively. Solid and dashed lines denote target-domain accuracy and F1-score, while the marker-based curves show the source, target, and few-shot risks. Results are averaged over five runs, with shaded regions indicating the standard deviations.}
    \label{figure_results_sin_cos_extended}
\end{figure*}

\paragraph{Evaluation of DA Methods and Encoder Networks on the Sinusoidal Benchmark.} Figure~\ref{figure_results_sin_cos} compares the evaluated DA methods under the controlled sinusoidal domain shift, for which the noise parameter $b$ progressively increases the corruption of the target signals. The results reveal a pronounced interaction between the adaptation objective and the encoder architecture. ResNet18 performs close to the chance level of $10\%$ for many methods and noise settings, indicating that its learned representation is not sufficiently discriminative for this benchmark. The CNN produces higher accuracies, but its performance is strongly method-dependent and frequently exhibits non-monotonic behavior and large variations across runs. In contrast, the TCN provides the most robust representations: most established discrepancy-, covariance-, moment-, and transport-based methods achieve almost perfect classification under weak noise and degrade more gradually as $b$ increases. KL~\citep{kullback_leibler} and JSD~\citep{menendez_pardo,dorent_golland} constitute clear exceptions (Figure~\ref{figure_results_sin_cos6}), showing unstable behavior and substantially lower classification performance, particularly with the TCN. The kernelized MMD and MMCD variants generally provide small advantages over their linear counterparts (Figures~\ref{figure_results_sin_cos11} and~\ref{figure_results_sin_cos12}), whereas combining Sinkhorn transport with additional discrepancy objectives does not yield a consistent improvement over the respective individual methods (Figures~\ref{figure_results_sin_cos13}--\ref{figure_results_sin_cos15}). The strongest CPDA results are obtained with the TCN, as shown in Figure~\ref{figure_results_sin_cos16}. CPDA maintains an accuracy above approximately $85\%$ throughout the range $b\leq1.0$ and remains above $80\%$ up to $b=1.3$, despite the progressively increasing target-domain corruption. Even at the maximum noise level $b=1.9$, CPDA retains an accuracy of approximately $63\%$. Its F1-score closely follows its accuracy, demonstrating that the improvement is not caused by a bias toward individual classes but reflects balanced performance across the ten signal classes. Compared with DIRT-T, CPDA begins to show a clear advantage at approximately $b=0.4$ and achieves improvements of roughly $15$--$30$ percentage points over much of the moderate- and high-noise range. While the performance of DIRT-T decreases rapidly to approximately $50$--$60\%$, CPDA exhibits a substantially smoother degradation and remains within the group of the most robust methods. The corresponding target- and few-shot-risk\footnote{\textbf{Source, Target, and Few-Shot Risks.} Following AdaTime~\citep{ragab_eldele_tan}, the source and target risks are the mean cross-entropy losses evaluated on the complete labeled source and target test sets, respectively. The few-shot target risk is computed on a class-balanced subset of the target test set containing up to five randomly selected labeled samples per class, i.e., at most \(5K\) target samples for a \(K\)-class problem. These labeled target samples are used only to calculate the diagnostic risk after training and do not influence domain adaptation, parameter updates, or the reported target accuracy and macro-F1 score. The few-shot risk therefore represents the limited-label model-selection setting considered by AdaTime: it approximates target-domain performance when only a small number of labeled target samples are available, whereas the full target risk is an oracle diagnostic requiring all target labels.} estimates are also generally lower for CPDA throughout the moderate-noise regime, supporting the observed accuracy improvements. Moreover, except for an isolated fluctuation at very low noise, the comparatively narrow uncertainty intervals indicate that the CPDA results are consistent across the five independent runs. The same advantage is not observed with the CNN and ResNet18, showing that CPDA cannot fully compensate for a weak encoder representation and that its effectiveness depends on a backbone capable of extracting discriminative temporal features. Overall, the results demonstrate that the combination of CPDA and TCN is particularly robust against the simultaneous sign-induced and noise-induced distribution shifts considered in this controlled experiment. The additional comparisons in Figure~\ref{figure_results_sin_cos_extended} confirm that the encoder architecture remains a decisive factor for adaptation performance. CoTMix and MCD achieve high and gradually decreasing accuracies with the TCN, retaining approximately $65\%$ performance at the highest noise level, whereas both methods perform substantially worse with the CNN and ResNet18 (Figure~\ref{figure_results_sin_cos17}). AdaMatch exhibits a pronounced performance degradation as the noise intensity increases, while DANCE is considerably more robust with the TCN and maintains an accuracy of approximately $65\%$ at $b=1.9$ (Figure~\ref{figure_results_sin_cos18}). The strongest additional competitor is RAINCOAT, which benefits from explicitly combining temporal- and frequency-domain information. With the TCN, RAINCOAT retains an accuracy of approximately $75\%$ under the strongest noise corruption and generally outperforms OVANet. RAINCOAT also achieves high mean accuracy with ResNet18, although its wide confidence intervals indicate considerable variability across runs (Figure~\ref{figure_results_sin_cos19}). Overall, these results place CPDA among the most robust methods on the sinusoidal benchmark rather than establishing uniform dominance over every baseline. In particular, CPDA provides consistently strong and balanced accuracy and F1-score across increasing noise levels, while RAINCOAT and DANCE constitute its closest competitors in the high-noise regime. For more information on source-free unsupervised DA, refer to \citet{wang_gong}.

\begin{figure*}[!t]
    \centering
	\begin{minipage}[t]{0.495\linewidth}
        \centering
    	\includegraphics[trim=10 10 10 10, clip, width=1.0\linewidth]{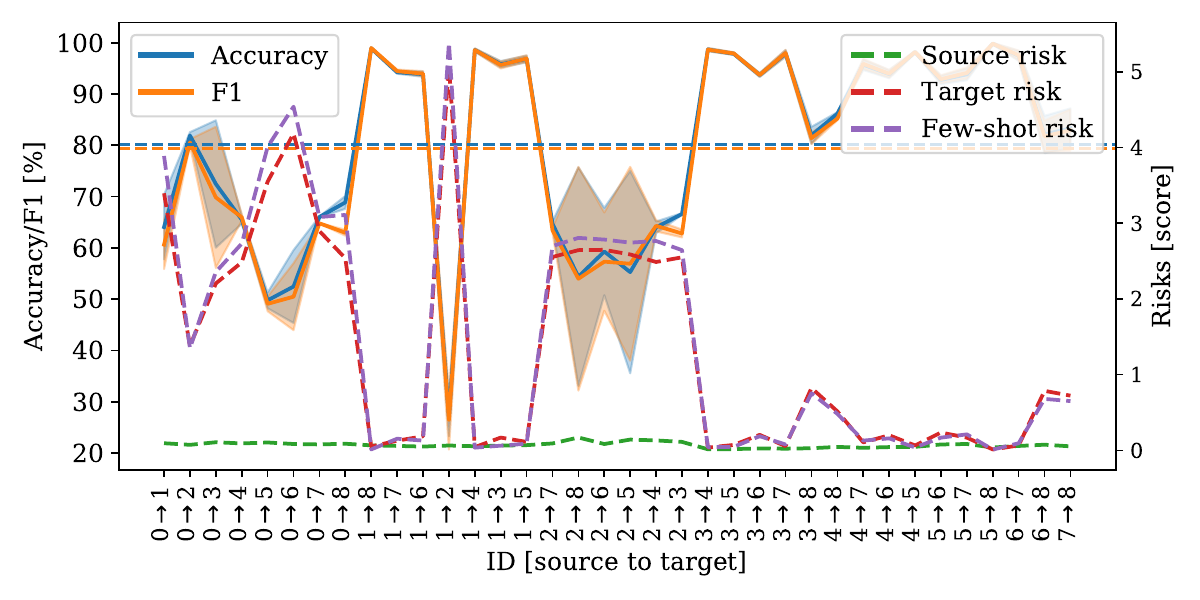}
        \subcaption{HHAR.}
        \label{label_figure_results_source_target1}
    \end{minipage}
    \hfill
	\begin{minipage}[t]{0.495\linewidth}
        \centering
    	\includegraphics[trim=10 10 10 10, clip, width=1.0\linewidth]{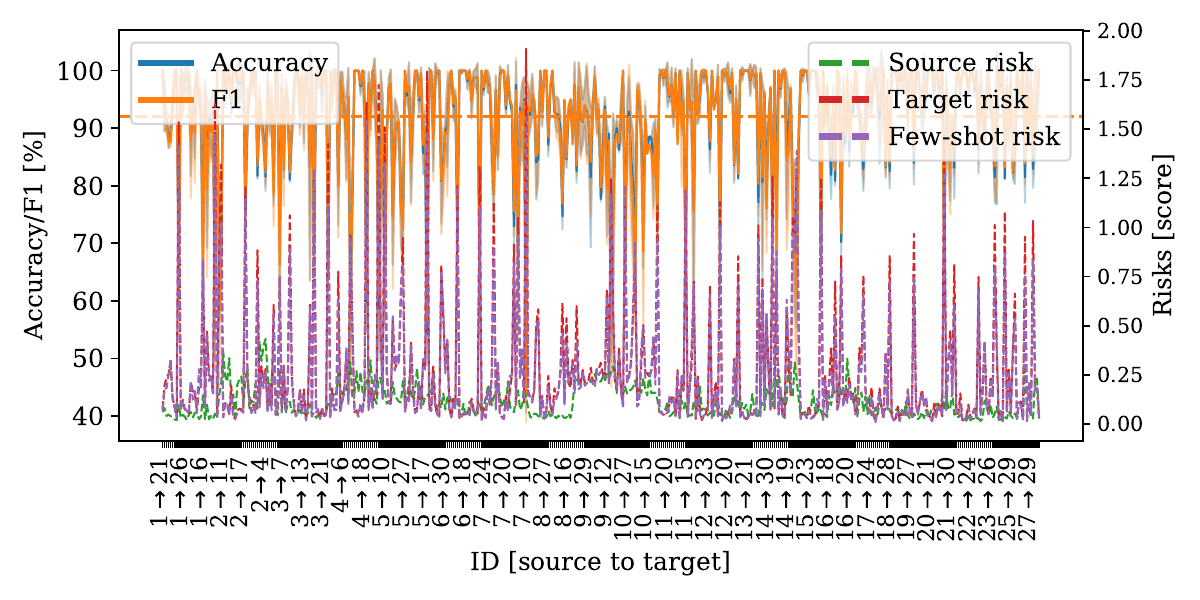}
        \subcaption{HAR.}
        \label{label_figure_results_source_target2}
    \end{minipage}
    \hfill
	\begin{minipage}[t]{0.495\linewidth}
        \centering
    	\includegraphics[trim=10 10 10 10, clip, width=1.0\linewidth]{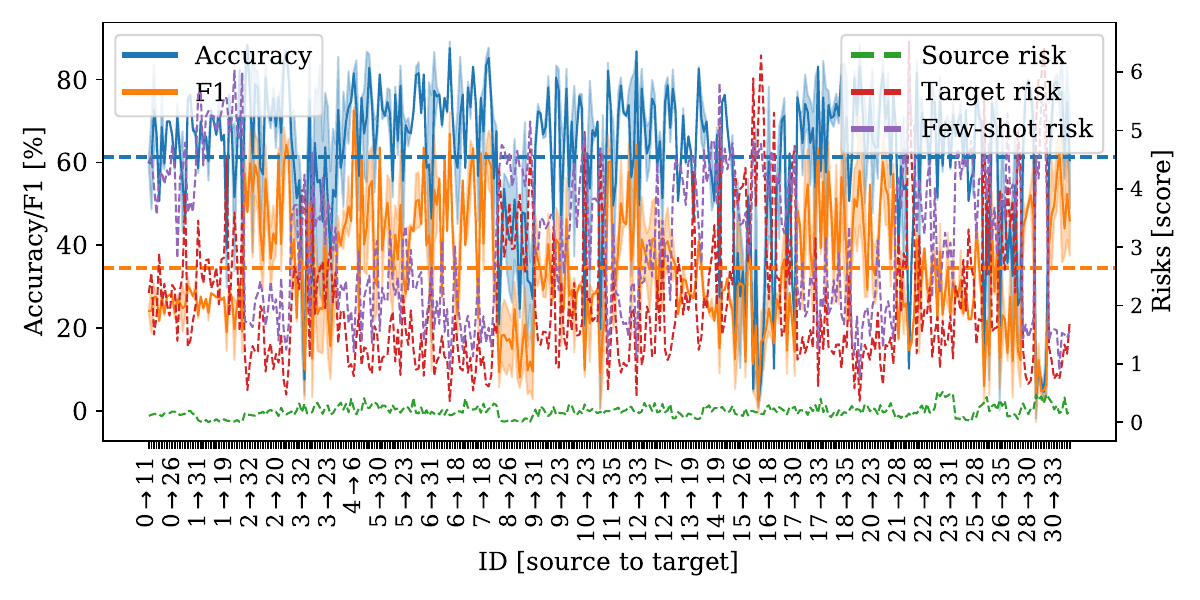}
        \subcaption{WISDM.}
        \label{label_figure_results_source_target3}
    \end{minipage}
    \hfill
	\begin{minipage}[t]{0.495\linewidth}
        \centering
    	\includegraphics[trim=10 10 10 10, clip, width=1.0\linewidth]{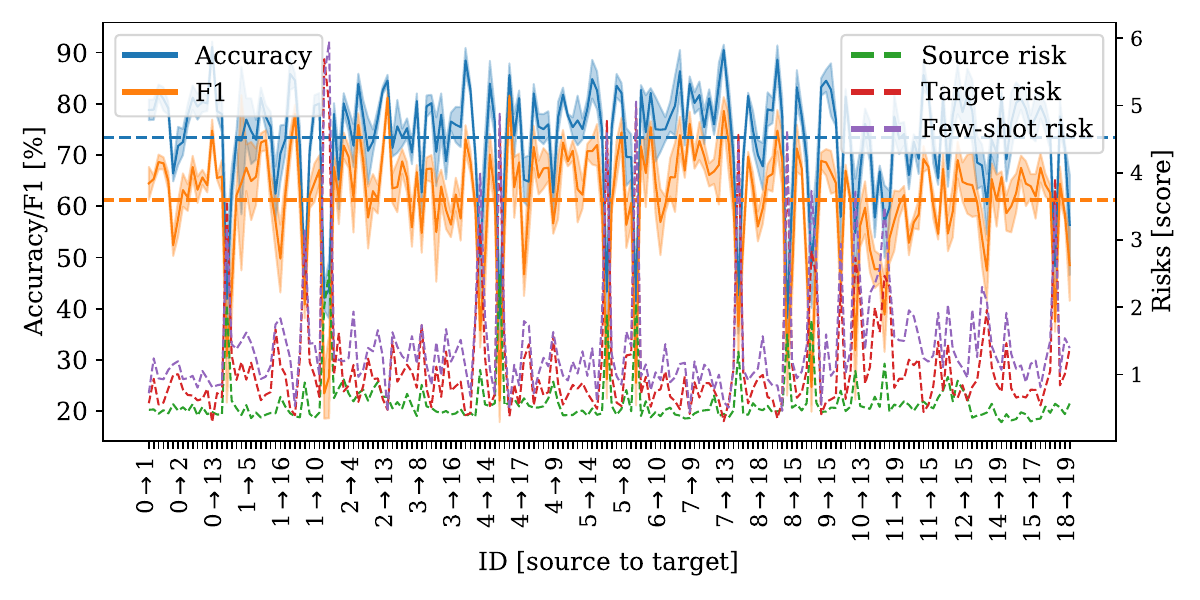}
        \subcaption{EEG.}
        \label{label_figure_results_source_target4}
    \end{minipage}
    \hfill
	\begin{minipage}[t]{0.495\linewidth}
        \centering
    	\includegraphics[trim=10 10 10 10, clip, width=1.0\linewidth]{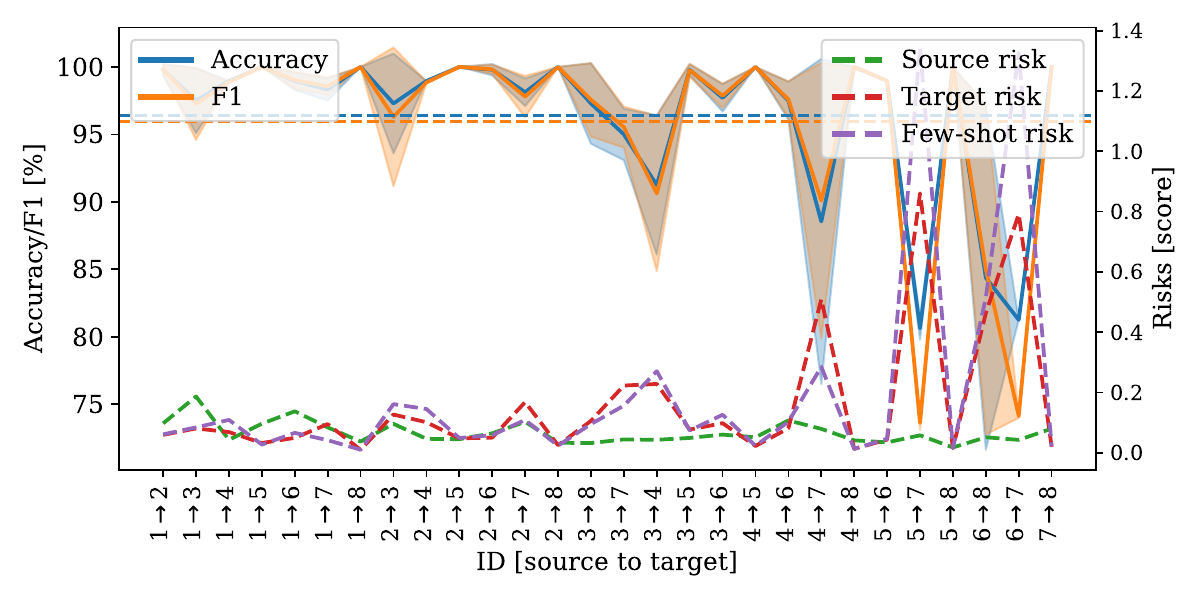}
        \subcaption{uWave.}
        \label{label_figure_results_source_target5}
    \end{minipage}
    \hfill
	\begin{minipage}[t]{0.495\linewidth}
        \centering
    	\includegraphics[trim=10 10 10 10, clip, width=1.0\linewidth]{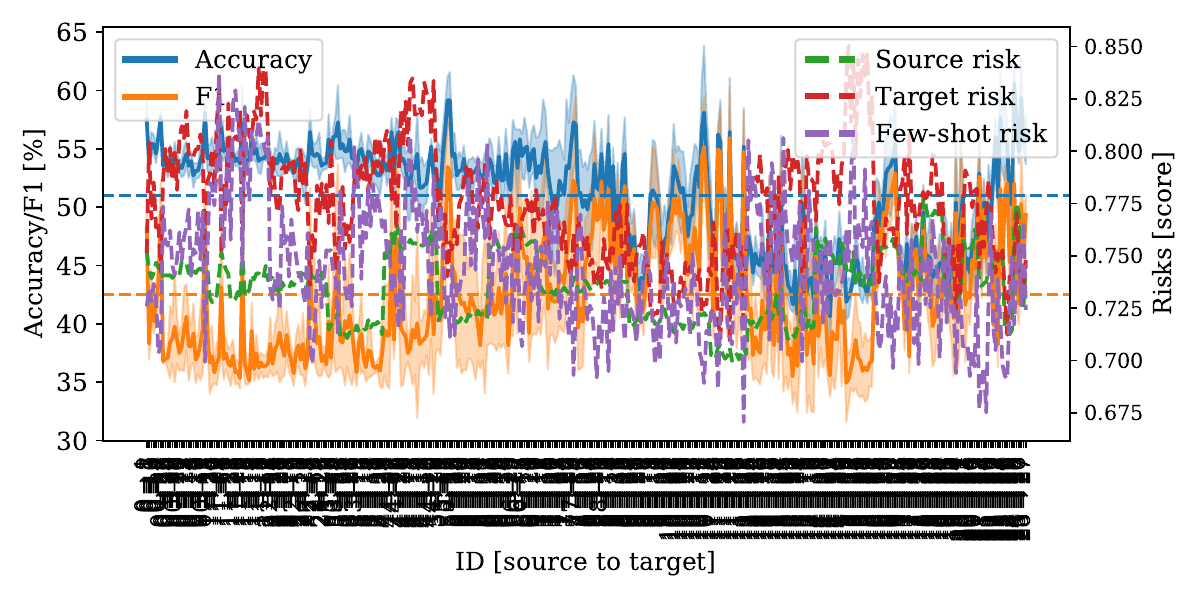}
        \subcaption{Finger movements (univariate).}
        \label{label_figure_results_source_target6}
    \end{minipage}
    \caption{Evaluation results of a large collection of source--target DA pairs for each dataset. The dashed lines show average results over all source--target combinations.}
    \label{label_figure_results_source_target}
\end{figure*}

\subsection{Evaluation Results for Source\,/\,Target Domains} 
\label{app:experimental_results_source_target}

Figure~\ref{label_figure_results_source_target} shows that the DA performance strongly depends on the selected source--target pair. For HHAR (Figure~\ref{label_figure_results_source_target1}) and uWave (Figure~\ref{label_figure_results_source_target5}), most combinations yield high and stable accuracy/F1, indicating that the learned representation transfers well for many user pairs; however, individual transfer directions still show pronounced drops, reflecting asymmetric domain shifts between specific subjects. HAR (Figure~\ref{label_figure_results_source_target2}) also achieves high average performance, but the dense set of source--target combinations reveals many local fluctuations, suggesting that some subjects are substantially easier transfer sources than others. In contrast, WISDM (Figure~\ref{label_figure_results_source_target3}) and Finger movements (Figure~\ref{label_figure_results_source_target6}) exhibit considerably higher variability: WISDM shows large oscillations in accuracy and F1 across pairs, while Finger movements remains close to chance level for many transfers, indicating weak transferability and unreliable target adaptation. EEG (Figure~\ref{label_figure_results_source_target4}) lies between these extremes, with moderate average performance and several difficult source--target pairs. The risk curves further confirm this behavior: low source risk does not always imply low target or few-shot risk, showing that good source-domain fitting alone is insufficient for robust DA. Overall, the figure highlights that averaged benchmark results can hide substantial pairwise variability, and that DA methods should be evaluated across all source--target combinations rather than only on selected transfer scenarios (as done in previous DA studies).

\section{Source-Code Structure}
\label{app:source_code}

The supplementary archive contains the complete implementation of CPDA together with the training and evaluation pipeline, the evaluated DA baselines, encoder backbones, dataset and hyperparameter configurations, runtime measurements, and result-aggregation utilities. The implementation follows the experimental workflow described in the main paper and supports cross-domain experiments, source-only and target-supervised reference models, controlled sinusoidal experiments, hyperparameter studies, and repeated evaluations with different random seeds. The processed datasets are not included because of their size and separate licensing conditions; the expected data format and directory organization are documented in the accompanying \texttt{README.md}. The archive is organized as follows:

\begingroup
\small
\begin{verbatim}
AAAI_CPDA_2027_supplementary_material/
|-- README.md
|-- requirements.txt
`-- source_code/
    |-- main.py
    |-- main_time.py
    |-- main_hyper.py
    |-- trainer.py
    |-- trainer_hyper.py
    |-- same_domain_trainer.py
    |-- utils.py
    |
    |-- algorithms/
    |   `-- algorithms.py
    |
    |-- configs/
    |   |-- data_model_configs.py
    |   |-- hparams.py
    |   `-- sweep_params.py
    |
    |-- dataloader/
    |   `-- dataloader.py
    |
    |-- models/
    |   |-- models.py
    |   |-- loss.py
    |   `-- augmentations.py
    |
    |-- experiments_logs/
    |   |-- count.sh
    |   |-- coun_onhw.sh
    |   |-- plot_logs.py
    |   `-- plot_results.py
    |
    |-- src_only_saved_models/
    |   |-- plot_backbones_all_models.png
    |   |-- plot_backbones_sin_cos.png
    |   |-- plot_results_all_models.py
    |   `-- plot_results_sin_cos.py
    |
    |-- run_backbone.sh
    |-- run_cpda.sh
    |-- run_cpda_onhw.sh
    `-- run_cpda_sincos.sh
\end{verbatim}
\endgroup

\paragraph{Implementation and Reproducibility.} The experimental pipeline is controlled by \texttt{main.py}, with training, evaluation, runtime measurement, hyperparameter studies, and source-only or target-supervised reference experiments implemented in \texttt{trainer.py}, \texttt{main\_time.py}, \texttt{main\_hyper.py}, \texttt{trainer\_hyper.py}, and \texttt{same\_domain\_trainer.py}. The file \texttt{algorithms/algorithms.py} contains CPDA and all evaluated baselines, including the class-conditional path discrepancy, information-maximization and VAT regularization, target-prior weighting, and active-class normalization. Encoder architectures, losses, and augmentations are defined in \texttt{models/}, while dataset properties, source--target scenarios, optimization parameters, and search spaces are specified in \texttt{configs/}; processed data are loaded by \texttt{dataloader/dataloader.py}. The provided shell scripts launch the benchmark, OnHW, sinusoidal, and backbone experiments, and the utilities in \texttt{utils.py}, \texttt{experiments\_logs/}, and \texttt{src\_only\_saved\_models/} handle deterministic initialization, logging, checkpointing, metric aggregation, and figure generation. Installation instructions, data formatting, execution commands, supported methods and backbones, output conventions, and reproduction details are documented in \texttt{README.md}, while \texttt{requirements.txt} records the software environment.

\clearpage

\section*{Reproducibility Checklist}
\label{app:reproducibility_checklist}

\checksubsection{General Paper Structure}
\begin{itemize}

\question{Includes a conceptual outline and/or pseudocode description of AI methods introduced}{(yes/partial/no/NA)}
Yes. We include both a conceptual description and pseudocode for the proposed method. Specifically, App.~\ref{app:algorithm} provides a step-by-step description of one CPDA training iteration, including latent path extraction, target soft pseudo-label estimation, construction of pooled, path, spectral, and signature-based features, computation of the composite kernel matrices, class-wise weighted MMD estimation, and optimization of the final training objective. The corresponding pseudocode is given in Algorithm~\ref{alg:cpda}. It explicitly states the required source and target mini-batches, the feature extractor and classifier, the construction of class-conditional weights, the computation of the CPDA discrepancy, and the final update objective.

\question{Clearly delineates statements that are opinions, hypothesis, and speculation from objective facts and results}{(yes/no)}
Yes. Formal methodological and theoretical claims are stated through definitions, propositions, theorems, and proofs in Sections~\ref{sec:method} and~\ref{sec:theory} and App.~\ref{app:theory}. Measured experimental findings are reported separately in Section~\ref{label_evaluation} and App.~\ref{app:experimental_results}, while explanatory interpretations and limitations are presented as discussion rather than as established facts.

\question{Provides well-marked pedagogical references for less-familiar readers to gain background necessary to replicate the paper}{(yes/no)}
Yes. App.~\ref{app:notations} summarizes the notation used throughout the paper. We provide pedagogical background material in the appendices to support reproducibility. App.~\ref{app:comparison_methods} gives a unified mathematical overview of the main discrepancy-based baselines, including MMD, CORAL, MMDA, DAN, HoMM, and VAT, together with their objectives and limitations for time-series data. App.~\ref{app:theory} states the assumptions used in the CPDA analysis and provides additional theoretical details, including consistency, finite-sample concentration, and pseudo-label stability results. These sections are intended to help less-familiar readers understand both the proposed method and the comparison methods needed to replicate the study. Section~\ref{sec:related_work} introduces the main classes of DA methods and their relationships to CPDA. App.~\ref{app:special_cases} explains how established discrepancy objectives arise as restricted cases of the proposed framework.

\end{itemize}
\checksubsection{Theoretical Contributions}
\begin{itemize}

\question{Does this paper make theoretical contributions?}{(yes/no)}
Yes. Section~\ref{sec:theory} establishes kernel validity, the integral-probability-metric interpretation, a class-conditional target-risk bound, the effect of pseudo-label error, and connections to existing discrepancy objectives. App.~\ref{app:theory} provides additional consistency, finite-sample concentration, and pseudo-label stability results.

	\ifyespoints{\vspace{1.2em}If yes, please address the following points:}
        \begin{itemize}
	
	\question{All assumptions and restrictions are stated clearly and formally}{(yes/partial/no)}
	Yes. The assumptions required by the main target-risk and pseudo-label results are stated in Section~\ref{sec:theory}. App.~\ref{app:assumptions} additionally states the bounded-loss, RKHS-regularity, bounded-kernel, and pseudo-label assumptions used by the supplementary theoretical results. Implementation restrictions, including one-step paths for vector-output backbones, detached target weights, feature normalization, and fixed random projections, are documented in App.~\ref{app:implementation}.

	\question{All novel claims are stated formally (e.g., in theorem statements)}{(yes/partial/no)}
	Yes. The CPDA discrepancy and training objective are formally defined in Section~\ref{sec:method}. The principal theoretical claims are stated as propositions, theorems, and a corollary in Section~\ref{sec:theory} and App.~\ref{app:theory}. The relationship to existing discrepancy methods is formalized in Proposition~\ref{prop:connections} and Table~\ref{tab:cpda_special_cases}.

	\question{Proofs of all novel claims are included}{(yes/partial/no)}
	Yes. Proofs for kernel validity, the IM interpretation, the class-conditional target-risk bound, the equal-prior corollary, the pseudo-label risk result, and the restricted-case relationships are included in Section~\ref{sec:theory}. Proofs for empirical consistency, finite-sample concentration, and stability under pseudo-label perturbations are included in App.~\ref{app:theory}.

	\question{Proof sketches or intuitions are given for complex and/or novel results}{(yes/partial/no)}
	Yes. We provide intuition and a conceptual explanation for the proposed method in addition to the formal derivations. In particular, Fig.~\ref{fig:cpda_overview} gives a visual overview of CPDA by illustrating how source and target time-series are mapped to latent temporal paths, transformed into pooled, temporal-path, spectral, and signature-based representations, and aligned using class-conditional weights from source labels and target soft pseudo-labels. Sections~\ref{sec:problem_statement} and~\ref{sec:method} explain why marginal vector-level alignment may be insufficient for time-series. Algorithm~\ref{alg:cpda} and App.~\ref{app:special_cases} provide operational intuition for the training procedure and its relationship to established discrepancy losses.

	\question{Appropriate citations to theoretical tools used are given}{(yes/partial/no)}
	Yes. The paper cites the relevant work on RKHS mean embeddings and MMD, sequential kernels, path signatures, Fourier representations, covariance and moment matching, pseudo-labeling, and VAT. These references appear in Sections~\ref{sec:related_work}--\ref{sec:theory} and are summarized mathematically in App.~\ref{app:comparison_methods}.

	\question{All theoretical claims are demonstrated empirically to hold}{(yes/partial/no/NA)}
	Partially. The practical effectiveness and stability of CPDA are evaluated in Table~\ref{table_all_results_da}, Figures~\ref{label_cpda_training_dynamics} and~\ref{label_hyperparameter_search}, and App.~\ref{app:experimental_results}. The controlled sinusoidal study in Figures~\ref{figure_results_sin_cos} and~\ref{figure_results_sin_cos_extended} further evaluates robustness under increasing domain shift. Mathematical properties such as positive semidefiniteness, consistency, and finite-sample concentration are established analytically and are not separately tested as empirical hypotheses.

	\question{All experimental code used to eliminate or disprove claims is included}{(yes/no/NA)}
	Yes. The supplementary archive includes the implementations, experiment launchers, hyperparameter configurations, runtime evaluation, metric aggregation, and plotting utilities used for the benchmark, sensitivity analysis, controlled sinusoidal study, and source- and target-only reference experiments. The archive structure is documented in App.~\ref{app:source_code} and the accompanying \texttt{README.md}.
	
	\end{itemize}
\end{itemize}

\checksubsection{Dataset Usage}
\begin{itemize}

\question{Does this paper rely on one or more datasets?}{(yes/no)}
Yes. The experiments use the time-series datasets summarized in Table~\ref{tab:datasets} and described in App.~\ref{app:datasets}. They cover human activity recognition, sleep-stage classification, gesture recognition, biomedical signals, handwriting recognition, and controlled synthetic waveforms.

\ifyespoints{If yes, please address the following points:}
\begin{itemize}

	\question{A motivation is given for why the experiments are conducted on the selected datasets}{(yes/partial/no/NA)}
	Yes. Section~\ref{label_experiments} and App.~\ref{app:datasets} explain that the collection covers different sensors, modalities, temporal lengths, numbers of classes, domain definitions, dataset sizes, and source--target shifts. This diversity is used to evaluate whether CPDA generalizes beyond one application or encoder configuration.

	\question{All novel datasets introduced in this paper are included in a data appendix}{(yes/partial/no/NA)}
	NA. We do not introduce a new real-world dataset in this paper. All evaluation datasets are existing public benchmark datasets, used for controlled ablations, which is fully described in the dataset section App.~\ref{app:datasets} and can be regenerated from the specified data-generation procedure.

	\question{All novel datasets introduced in this paper will be made publicly available upon publication of the paper with a license that allows free usage for research purposes}{(yes/partial/no/NA)}
	NA. No novel dataset is introduced in this paper.

	\question{All datasets drawn from the existing literature (potentially including authors' own previously published work) are accompanied by appropriate citations}{(yes/no/NA)}
	Yes. All datasets drawn from the existing literature are accompanied by appropriate citations in Table~\ref{tab:datasets}. The table lists the dataset name, reference, description, number of classes, and sample count, and also provides the public sources from which the datasets can be obtained.

	\question{All datasets drawn from the existing literature (potentially including authors' own previously published work) are publicly available}{(yes/partial/no/NA)}
	Yes. Table~\ref{tab:datasets} provides the public benchmark sources, including the AdaTime data repository, the UCR/UEA archive, and the OnHW dataset website. The controlled synthetic benchmark is cited to its original publication.

	\question{All datasets that are not publicly available are described in detail, with explanation why publicly available alternatives are not scientifically satisficing}{(yes/partial/no/NA)}
	NA. No private or access-restricted dataset is used. The supplementary archive does not duplicate the processed datasets because of their size and separate distribution conditions; their public sources, expected tensor format, and directory structure are documented in Table~\ref{tab:datasets}, App.~\ref{app:datasets}, and the supplementary \texttt{README.md}.

\end{itemize}
\end{itemize}

\checksubsection{Computational Experiments}
\begin{itemize}

\question{Does this paper include computational experiments?}{(yes/no)}
Yes. The experimental design is described in Section~\ref{label_experiments}, the principal results are reported in Section~\ref{label_evaluation}, and the complete results are provided in App.~\ref{app:experimental_results}.

\ifyespoints{If yes, please address the following points:}
\begin{itemize}

	\question{This paper states the number and range of values tried per (hyper-) parameter during development of the paper, along with the criterion used for selecting the final parameter setting}{(yes/partial/no/NA)}
	Yes. Table~\ref{tab:cpda_hparams} lists the default CPDA configuration and the explored ranges for the learning rate, objective weights, kernel-component weights, signature dimension and order, and maximum latent sequence length. Figure~\ref{label_hyperparameter_search} reports the resulting sensitivity across datasets, and the full sweep spaces are included in \texttt{configs/sweep\_params.py}.

	\question{Any code required for pre-processing data is included in the appendix}{(yes/partial/no)}
	Yes. The supplementary material includes the dataset configurations, source--target scenarios, expected tensor dimensions, and the loader for the processed domain-specific files. App.~\ref{app:datasets} documents the benchmark preprocessing and splits, and the \texttt{README.md} specifies the required file format and directory structure. The archive does not include complete raw-data download, but does contain the conversion scripts for every external dataset.

	\question{All source code required for conducting and analyzing the experiments is included in a code appendix}{(yes/partial/no)}
	Yes. The supplementary archive contains CPDA and baseline implementations, encoder architectures, losses, augmentations, data loading, dataset and hyperparameter configurations, cross-domain training, source- and target-only reference training, runtime measurement, metric calculation, result aggregation, and plotting utilities. The complete structure and the correspondence between files and experimental stages are documented in App.~\ref{app:source_code}.

	\question{All source code required for conducting and analyzing the experiments will be made publicly available upon publication of the paper with a license that allows free usage for research purposes}{(yes/partial/no)}
	Yes. The source code included in the supplementary archive will be made publicly available upon publication under a license permitting free research use. This includes the proposed method, evaluated baselines, training and evaluation pipeline, configurations, and result-analysis utilities described in App.~\ref{app:source_code}.
        
	\question{All source code implementing new methods have comments detailing the implementation, with references to the paper where each step comes from}{(yes/partial/no)}
	Partial. The released source code includes comments for the implementation of the proposed CPDA components and identifies the main steps corresponding to the method described in the paper, including latent path construction, signature--spectral feature extraction, class-conditional weighting, CPDA discrepancy computation, and VAT regularization. We will ensure that the final public release contains sufficient inline comments and references to the relevant equations, algorithm, and appendix sections.

	\question{If an algorithm depends on randomness, then the method used for setting seeds is described in a way sufficient to allow replication of results}{(yes/partial/no/NA)}
	Yes. Section~\ref{label_evaluation} states that each source--target configuration is repeated five times with different random seeds. The supplementary implementation sets the Python, NumPy, PyTorch, and CUDA seeds and configures deterministic cuDNN execution before every run. The per-run seed is determined by the run identifier.

	\question{This paper specifies the computing infrastructure used for running experiments (hardware and software), including GPU/CPU models; amount of memory; operating system; names and versions of relevant software libraries and frameworks}{(yes/partial/no)}
	Yes. We specify the hardware infrastructure used for the experiments, including NVIDIA Tesla V100-SXM2 GPUs with 32 GB VRAM, Intel Xeon CPUs, and 192 GB RAM. The final version and released code will additionally document the relevant software environment, including the operating system, Python version, PyTorch version, CUDA version, and main library dependencies required to reproduce the experiments.

	\question{This paper formally describes evaluation metrics used and explains the motivation for choosing these metrics}{(yes/partial/no)}
	Yes. Section~\ref{label_evaluation}, Table~\ref{table_all_results_da}, and the tables in App.~\ref{app:experimental_results_da_methods} consistently report classification accuracy and F1-score. The supplied evaluation code computes macro-averaged F1, which is suitable for datasets with unequal class frequencies. The current manuscript does not provide explicit mathematical definitions of accuracy and macro-F1 or explicitly identify the reported F1 variant in the main text.

	\question{This paper states the number of algorithm runs used to compute each reported result}{(yes/no)}
	Yes. Section~\ref{label_evaluation} states that every source--target configuration is trained five times with different random seeds. Table~\ref{table_all_results_da}, the tables in App.~\ref{app:experimental_results_da_methods}, and Figures~\ref{figure_results_sin_cos} and~\ref{figure_results_sin_cos_extended} report results aggregated over these five runs.

	\question{Analysis of experiments goes beyond single-dimensional summaries of performance (e.g., average; median) to include measures of variation, confidence, or other distributional information}{(yes/no)}
	Yes. The tables in App.~\ref{app:experimental_results_da_methods} report means and standard deviations over source--target scenarios and repeated runs. The sinusoidal results in Figures~\ref{figure_results_sin_cos} and~\ref{figure_results_sin_cos_extended} include shaded standard-deviation regions, while Figure~\ref{label_figure_results_source_target} exposes performance and risk variation across individual source--target pairs. Figure~\ref{label_cpda_training_dynamics} additionally reports optimization and target-prediction dynamics.

	\question{The significance of any improvement or decrease in performance is judged using appropriate statistical tests (e.g., Wilcoxon signed-rank)}{(yes/partial/no)}
	No. We do not currently use formal statistical significance tests such as the Wilcoxon signed-rank test. Instead, we report mean and standard deviation over five random seeds for each source--target configuration. We therefore interpret improvements based on average performance and variability, and we avoid making claims of statistical significance unless supported by an explicit test.

	\question{This paper lists all final (hyper-)parameters used for each model/algorithm in the paper’s experiments}{(yes/partial/no/NA)}
	Partial. Table~\ref{tab:cpda_hparams} lists the final CPDA optimizer, loss, kernel, signature, pseudo-label, normalization, and ramp-up parameters. Backbone configurations are summarized in Tables~\ref{tab:backbone_summary} and~\ref{tab:backbone_parameters}. The complete dataset- and method-specific settings for CPDA and the evaluated baselines are included in \texttt{configs/data\_model\_configs.py} and \texttt{configs/hparams.py}, but the complete per-baseline configurations are not reproduced as tables in the paper.
    
\end{itemize}
\end{itemize}

\end{document}